\documentclass{article}
\usepackage{log_2024}						% for camera-ready version
% \usepackage[review]{log_2024}				% for anonymous submission to proceedings track
% \usepackage[review,eatrack]{log_2024}		% for anonymous submission to extended abstract track
% \usepackage[preprint]{log_2024}			% for preprint version
% \usepackage[eatrack]{log_2024}				% for accepted extended abstracts

%%%%% NEW MATH DEFINITIONS %%%%%
%\usepackage{libertine}
\usepackage{amsmath,amsfonts,bm,dsfont,diagbox,amsthm}
\usepackage{tikz}
\usetikzlibrary{arrows.meta, positioning, shapes.geometric, fit}
\usepackage[inline]{enumitem}
\usepackage{thmtools} 
\usepackage{thm-restate}

% Declare independence operator

% Declare dependence operator

% custom overbar, defines \widebar
% --------------------------------------------------------------------------------------------
\makeatletter
\let\save@mathaccent\mathaccent
\newcommand*\if@single[3]{%
    \setbox0\hbox{${\mathaccent"0362{#1}}^H$}%
    \setbox2\hbox{${\mathaccent"0362{\kern0pt#1}}^H$}%
    \ifdim\ht0=\ht2 #3\else #2\fi
    }
%The bar will be moved to the right by a half of \macc@kerna, which is computed by amsmath:
\newcommand*\rel@kern[1]{\kern#1\dimexpr\macc@kerna}
%If there's a superscript following the bar, then no negative kern may follow the bar;
%an additional {} makes sure that the superscript is high enough in this case:
\newcommand*\widebar[1]{{\@ifnextchar^{{\wide@bar{#1}{0}}}{\wide@bar{#1}{1}}}}
%Use a separate algorithm for single symbols:
\newcommand*\wide@bar[2]{\if@single{#1}{\wide@bar@{#1}{#2}{1}}{\wide@bar@{#1}{#2}{2}}}
\newcommand*\wide@bar@[3]{%
\begingroup
\def\mathaccent##1##2{%
    %Enable nesting of accents:
    \let\mathaccent\save@mathaccent
    %If there's more than a single symbol, use the first character instead (see below):
    \if#32 \let\macc@nucleus\first@char \fi
    %Determine the italic correction:
    \setbox\z@\hbox{$\macc@style{\macc@nucleus}_{}$}%
    \setbox\tw@\hbox{$\macc@style{\macc@nucleus}{}_{}$}%
    \dimen@\wd\tw@
    \advance\dimen@-\wd\z@
    %Now \dimen@ is the italic correction of the symbol.
    \divide\dimen@ 3
    \@tempdima\wd\tw@
    \advance\@tempdima-\scriptspace
    %Now \@tempdima is the width of the symbol.
    \divide\@tempdima 10
    \advance\dimen@-\@tempdima
    %Now \dimen@ = (italic correction / 3) - (Breite / 10)
    \ifdim\dimen@>\z@ \dimen@0pt\fi
    %The bar will be shortened in the case \dimen@<0 !
    \rel@kern{0.6}\kern-\dimen@
    % \fontdimen8\textfont3=0.5pt
    \if#31
        \overline{\rel@kern{-0.6}\kern\dimen@\macc@nucleus\rel@kern{0.4}\kern\dimen@}%
        \advance\dimen@0.4\dimexpr\macc@kerna
        %Place the combined final kern (-\dimen@) if it is >0 or if a superscript follows:
        \let\final@kern#2%
        \ifdim\dimen@<\z@ \let\final@kern1\fi
        \if\final@kern1 \kern-\dimen@\fi
    \else
        \overline{\rel@kern{-0.6}\kern\dimen@#1}%
    \fi
}%
\macc@depth\@ne
\let\math@bgroup\@empty \let\math@egroup\macc@set@skewchar
\mathsurround\z@ \frozen@everymath{\mathgroup\macc@group\relax}%
\macc@set@skewchar\relax
\let\mathaccentV\macc@nested@a
%The following initialises \macc@kerna and calls \mathaccent:
\if#31
    \macc@nested@a\relax111{#1}%
\else
    %If the argument consists of more than one symbol, and if the first token is
    %a letter, use that letter for the computations:
    \def\gobble@till@marker##1\endmarker{}%
    \futurelet\first@char\gobble@till@marker#1\endmarker
    \ifcat\noexpand\first@char A\else
        \def\first@char{}%
    \fi
    \macc@nested@a\relax111{\first@char}%
\fi
\endgroup
}
\makeatother

%%------------------------

\newtheorem{proposition}{Proposition}

%\newtheorem*{proposition*}{Proposition}

% long vec

% Mark sections of captions for referring to divisions of figures

% \newcommand{\newterm}[1]{{\bf #1}}

% Figure reference, lower-case.

% Figure reference, capital. For start of sentence

% Section reference, lower-case.

% Section reference, capital.

% Reference to two sections.

% Reference to three sections.

% Reference to an equation, lower-case.
\def\eqref#1{equation~\ref{#1}}
% Reference to an equation, upper case

% A raw reference to an equation---avoid using if possible

% Reference to a chapter, lower-case.

% Reference to an equation, upper case.

% Reference to a range of chapters

% Reference to an algorithm, lower-case.

% Reference to an algorithm, upper case.

% Reference to a part, lower case

% Reference to a part, upper case

\def\1{\bm{1}}

% Random variables

% rm is already a command, just don't name any random variables m

% Random vectors

% Elements of random vectors

% Random matrices

% Elements of random matrices

% Vectors

% Elements of vectors

% Matrix
\def\mA{{\bm{A}}}

\def\mE{{\bm{E}}}

\def\mX{{\bm{X}}}

\def\mZ{{\bm{Z}}}

% Tensor
\DeclareMathAlphabet{\mathsfit}{\encodingdefault}{\sfdefault}{m}{sl}
\SetMathAlphabet{\mathsfit}{bold}{\encodingdefault}{\sfdefault}{bx}{n}

% Graph

% Sets

% Don't use a set called E, because this would be the same as our symbol for expectation.

% Entries of a matrix

% entries of a tensor
% Same font as tensor, without \bm wrapper

% The true underlying data generating distribution

% The empirical distribution defined by the training set

% The model distribution

% Stochastic autoencoder distributions

 % Laplace distribution

% Wolfram Mathworld says $L^2$ is for function spaces and $\ell^2$ is for vectors
% But then they seem to use $L^2$ for vectors throughout the site, and so does
% wikipedia.

 % See usage in notation.tex. Chosen to match Daphne's book.

\newcommand{\revision}[1]{{\color{black}#1}}
\newcommand{\revisionlog}[1]{{\color{black}#1}}

\usepackage{booktabs}						% professional-quality tables
\usepackage{multirow}						% tabular cells spanning multiple rows
\usepackage{amsfonts}						% blackboard math symbols
\usepackage{graphicx}						% figures
\usepackage{duckuments}						% sample images

% If you want to use natbib:
\usepackage[numbers,compress,sort]{natbib}	% for numerical citations
% \usepackage[sort,round]{natbib}			% for textual citations

% If you want to use bibLaTeX, uncomment below:
% \usepackage[
%      backend=biber,
%      style=numeric-comp,
%      backref=true,
%      natbib=true]{biblatex}
% \addbibresource{reference.bib}

\usepackage[utf8]{inputenc} % allow utf-8 input
\usepackage[T1]{fontenc}    % use 8-bit T1 fonts
\usepackage[backref=page]{hyperref}       % hyperlinks
\usepackage{url}            % simple URL typesetting
\usepackage{multirow}
\usepackage{booktabs}       % professional-quality tables
\usepackage{amsfonts}       % blackboard math symbols
\usepackage{nicefrac}       % compact symbols for 1/2, etc.
\usepackage{microtype}      % microtypography
\usepackage{xcolor}         % colors
\usepackage{xspace}
\usepackage{adjustbox}
\usepackage{xcolor}
\usepackage{wrapfig}

\usepackage{bbm}
\usepackage{CJKutf8}

\usepackage{cleveref}
\usepackage{subcaption}

\usepackage{algpseudocode}
\usepackage{algorithm}

\newcommand{\ourmethod}{TRIX\xspace}

\title{TRIX: A More Expressive \revision{Model for Zero-shot\\ 
Domain Transfer in Knowledge Graphs}}

\author[Y. Zhang et al.]{%
Yucheng Zhang\\
Purdue University\\
\email{zhan4332@purdue.edu}\And
Beatrice Bevilacqua\\
Purdue University\\
\email{bbevilac@purdue.edu}\AND
Mikhail Galkin\\
Intel AI Lab\\
\email{mikhail.galkin@intel.com}\And
Bruno Ribeiro\\
Purdue University\\
\email{ribeirob@purdue.edu}
}

\begin{document}

\maketitle

\begin{abstract}
\revision{Fully inductive knowledge graph models can be trained on multiple domains and subsequently perform zero-shot knowledge graph completion (KGC) in new unseen domains.
This is an important capability towards the goal of having foundation models for knowledge graphs.}
  In this work, we introduce a more expressive and capable \revision{fully inductive model}, dubbed \ourmethod, which not only yields strictly more expressive triplet embeddings (head entity, relation, tail entity)  compared to state-of-the-art methods, but also introduces a new capability: directly handling both entity and relation prediction tasks in inductive settings. 
  Empirically, we show that \ourmethod outperforms the state-of-the-art fully inductive models in zero-shot entity and relation predictions in new domains, and outperforms large-context LLMs in out-of-domain predictions.
  % Empirically, we show that \ourmethod\ surpasses state-of-the-art \revision{fully inductive models}, with an average improvement of up to 7.4\% in hits@10 accuracy in zero-shot entity or relation prediction, as well as fine-tuning scenarios, across a diverse set of 57 datasets. Additionally, we conduct a comparison with large-context Large Language Models (LLMs) on zero-shot relation \revision{and entity} prediction tasks. Our results demonstrate that while LLMs achieve comparable performance to \ourmethod in-domain, their predictive capabilities degrade significantly out-of-domain (exhibiting high sensitivity to relation and entity permutations in the knowledge graph prompt), while \ourmethod maintains robust performance in these scenarios. This underscores the importance of further developing \revision{fully inductive models} to reason across domains. 
  The source code is available at \revision{\url{https://github.com/yuchengz99/TRIX}}.
\end{abstract}

\section{Introduction}
%\vspace{-5pt}
% Problem: \revision{fully inductive models} on KGC with new entity and relations
\revision{
Fully inductive knowledge graph models can perform zero-shot Knowledge Graph Completion (KGC), which predicts missing facts in entirely new domains that were not part of the training data. This is particularly challenging because these new domains may contain just unseen relation types and new entities~\citep{galkin2023towards,gao2023double,lee2023ingram}.
% Fully inductive knowledge graphs modelsperforming zero-shot Knowledge Graph Completion (KGC) (i.e., predict missing (head entity, relation, tail entity) triplets) in new domains not present in the training data, which may contain new types of relations and new entities~\citep{galkin2023towards,gao2023double,lee2023ingram}. 
} %
Fully inductive models are trained on one or multiple Knowledge Graphs (\revision{KG}) with a specific set of relations. Previous work has emphasized the importance of double-equivariance~\cite{gao2023double}, that is, equivariance to permutations of both entity ids and relation ids, as a fundamental property that \revision{fully inductive models} must have to transfer across KG domains. Intuitively, this equivariance allows \revision{models} to focus on the underlying structural invariances, despite semantic differences and variations in identifiers across different KGs.

%% What other methods do and what is missing
However, despite the notable achievements of current \revision{fully inductive models}, several open challenges remain, including:
\begin{enumerate*}[label=(\arabic*)]
    \item Limited expressivity of existing methods;
    \item Insufficient support for relation prediction tasks; and
    \item Underexploration of the abilities of Large Language Models (LLMs) to perform the same tasks.
\end{enumerate*}
More precisely, existing state-of-the-art \revision{fully inductive models}, such as ULTRA~\cite{galkin2023towards}, have expressivity limitations, as we show in \Cref{sec:expressivity}, which implies that certain non-isomorphic triplets inevitably get the same representations, and therefore necessarily the same predictions despite their differences. Since increased expressive power tend to translate into better downstream performances in traditional graph tasks~\citep{bouritsas2020improving,bevilacqua2022equivariant,zhang2023rethinking,puny2023equivariant}, an open question is whether improving the expressivity of \revision{fully inductive models} would also improve their empirical performance. Additionally, existing \revision{fully inductive models} are primarily designed for entity prediction tasks, answering queries such as (?, relation, tail entity) or (head entity, relation, ?), where the goal is to predict the head or tail entity of a given triplet. Consequently, they miss the equally important relation prediction tasks, namely queries such as (head entity, ?, tail entity), where the goal is to predict the missing relation between two entities~\citep{teru2020inductive,cui2021type,liang2024mines,su2024anchoring}. 
Finally, while large-context LLMs have demonstrated notable performance in KGC tasks~\citep{zhang2023making,chen2023dipping,shu2024knowledge,xu2024multi,wei2024kicgpt}, their effectiveness in the inductive setting of our interest, where test KGs come from new domains, remains largely underexplored. Therefore, an evaluation and comparison with \revision{fully inductive graph models} is needed to understand whether \revision{fully inductive graph models} are necessary or if repurposing LLMs would suffice.

%change this picture
% latex
% draw io
\begin{figure}[t]
\centering
\includegraphics[width=1.0\textwidth]{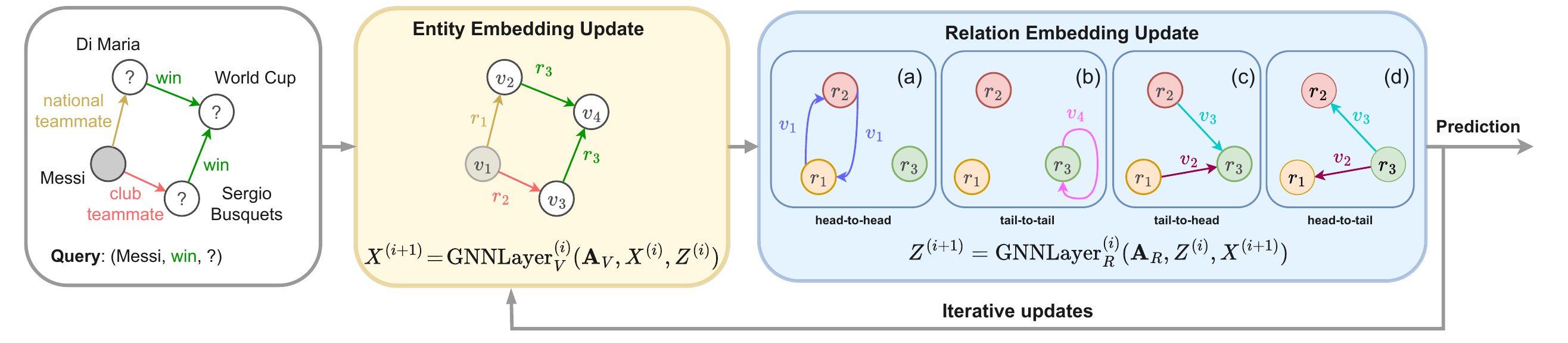}
\caption{Overview of \ourmethod showcasing how a KG in a given domain is represented in \ourmethod's double-equivariant architecture through entity embedding and relation embedding iterative updates.}
\vspace{-10pt}
\label{fig:ourmethod}
\end{figure}

%% what we do
\textbf{Our approach.} In this paper, we aim to address the open challenges in \revision{fully inductive models}. We first show that the limited expressive power of the state-of-the-art \revision{fully inductive models} ULTRA~\citep{galkin2023towards} arises from its approach in capturing relation interactions, obtained by counting the number of entities sharing a pair of relations, rather than \emph{which} entities share those relations. Then, we propose \ourmethod (\underline{T}ransferable \underline{R}elation-Entity \underline{I}nteractions in crossing patterns (\underline{X}-patterns)), which we show to return \emph{strictly more expressive} triplet representations than existing methods.  As illustrated in \Cref{fig:ourmethod}, given any input KG, \ourmethod first constructs a graph of relations, where each node is a relation from the original graph and edges denote shared entities among those relations. Then, it refines relation and entity representations by iteratively applying graph neural network layers over the graph of relations and the original graph. In this way, \ourmethod obtains representations of relation and entities directly applicable to zero-shot tasks. 

We demonstrate that, by design, \ourmethod efficiently handles relation prediction tasks, a capability lacking in existing state-of-the-art \revision{fully inductive models}. In particular, \ourmethod can answer relation prediction queries in a single forward pass, while existing \revision{fully inductive models} require performing a number of forward passes equal to the number of relations, for a single relation prediction query. 

Finally, we explore the capabilities of LLMs for KGC by designing a comprehensive set of experiments. We demonstrate that, while LLMs can do KGC accurately given enough context about the background knowledge, they rely on the textual information (and its semantics), and therefore fail to utilize the actual graph information, given in the context. We show that this result has several implications, including failure cases when the relation names are not given due to privacy concerns, or simply when they are not known by the LLMs.

\textbf{Contributions.} Our key contributions are as follows:
\begin{enumerate*}[label=(\arabic*), leftmargin=*]
\item We propose a novel \revision{fully inductive model on KGs}, \ourmethod, which exhibits greater expressive power compared to prior methods and can handle relation predictions efficiently; 
\item We show that the increased expressiveness of \ourmethod allows it to surpass state-of-the-art methods in 57 KG datasets in both entity and relation predictions;
\item We present an experimental study of LLMs on the same tasks and show that existing LLMs have limited capabilities in exploiting graph information, which are needed to perform tasks on new domains.
\end{enumerate*}

%\vspace{-5pt}
\section{Related Work}
%\vspace{-5pt}
\textbf{Fully Inductive Models over KGs.}
Original efforts in \revision{inductive} Knowledge Graph Completion primarily focused on handling new entities at test time, \revisionlog{but not new relations}~\cite{schlichtkrull2018modeling,teru2020inductive,zhu2021neural,yang2017differentiable, sadeghian2019drum,qu2020rnnlogic, cheng2022rlogic, shengyuan2024differentiable, liu2021indigo, tena2022explainable}. However, emerging methodologies tackle inductive learning scenarios with both new entities and new relations in test~\cite{gao2023double,lee2023ingram,galkin2023towards}.
\citet{gao2023double} approaches the problem by treating relations as set elements and employing DSS layers~\citep{maron2020learning} to learn representations equivariant to permutations of both node and relation ids. %They utilize equivariant layers composed by two graph neural network layers to perform message passing on edges of the query relation and edges of all other relations, respectively.
On the other hand, \citet{lee2023ingram} and \citet{galkin2023towards} introduce relation graphs to capture relation representations based on their interactions. In this work, we extend the latter approach by proposing a novel design for relation graphs, which allows to capture more expressive structural patterns in KGs.

\textbf{Knowledge Graph Completion with Large Language Models.} Despite the impressive capabilities of pretrained LLMs, their application to KGC has primarily focused on leveraging textual information rather than exploiting the underlying graph structure~\cite{chen2023dipping}. Recent efforts aims to enhance LLM performance in KGC by incorporating neighborhood or path information between entities as part of prompts in an in-context learning approach~\cite{wei2024kicgpt, xu2024multi, shu2024knowledge}. However, these approaches have mainly been applied in transductive settings.
\citet{li2024condensed} investigated zero-shot link prediction tasks with LLMs but only considered scenarios involving new relations, not new entities. The effectiveness of LLMs in KGC tasks with both new entities and new relations remains unexplored.
Furthermore, the extent to which LLMs exhibit double-equivariance, a crucial property for inductive reasoning on KGs \cite{gao2023double}, has yet to be thoroughly investigated. In this work, we address these gaps by conducting comprehensive experiments to evaluate the double-equivariance property of LLMs in KGC tasks.

% \revisionlog{\textbf{Relation Classification.}}

%\vspace{-15pt}
\section{Preliminaries}\label{sec:preliminary}
%\vspace{-8pt}
A Knowledge Graph $G$ is a tuple $(V, E, R)$ where $V$ is a finite set of entities, $R$ is a finite set of relations and $E \in (V \times R \times V)$ is a finite set of edges representing relations between entities. We denote by $G_{\text{train}}=(V_{\text{train}}, E_{\text{train}}, R_{\text{train}})$ the training graph, and by $G_{\text{inf}}=(V_{\text{inf}}, E_{\text{inf}}, R_{\text{inf}})$ the test (or inference) graph. \revision{Since we focus on inductive settings, where the test graph comes from a different domain, the training and the test graphs have disjoint entity and relation sets, that is, $V_{\text{inf}} \not\subset V_{\text{train}}$ and $R_{\text{inf}} \not\subset R_{\text{train}}$.} Despite being unseen, it is however assumed that the test graph shares certain structural patterns with the training graph. These structural similarities imply the existence of invariances sufficient for accurate predictions. Consequently, models trained on the training graph can transfer the learned knowledge to the test graph, leveraging the shared structural patterns for effective predictions.

Since KGs are often incomplete~\citep{rossi2021knowledge}, KGC has been widely utilized to infer missing information by predicting missing triplets. KGC comprises two key tasks: entity prediction\cite{bordes2013translating} addresses queries $(h, r, ?)$ which predicts the tail entity given a head entity $h$ and a relation $r$ (or, equivalently, predict the head); relation prediction\cite{teru2020inductive}, on the other hand, focuses on queries $(h, ?, t)$, aiming to predict the existence of a link between a head entity $h$ and a tail entity $t$, and determining the relation type.

To perform entity and relation predictions in inductive settings, \citet{gao2023double} recently identified the concept of double-equivariance, i.e., equivariance to both entity and relation ids permutations, as a necessary property that \revision{fully inductive models} must possess. \revision{This property ensures that the architecture does not use relation and entity IDs, but, instead, captures the interactions among them. Indeed, even if relations and entities vary across datasets, the interactions between them may be similar and transferable.} Existing \revision{fully inductive models} achieve double equivariance either by designing architectures that are intrinsically equivariant to both permutation groups~\citep{gao2023double}, or by constructing a relation graph that captures relation interactions regardless of their ids or semantics~\citep{lee2023ingram,galkin2023towards, gao2023double}. Since the latter tends to be a lighter approach, which also yields better results in practice, we will in the following focus on that. In particular, these methods first construct a relation graph where entities are relations and edges represent the number of times two relations share an entity. This relation graph is then passed to a standard graph neural network with labeling trick~\citep{zhang2021labeling} to obtain relation representations conditioned on the query of interest, which can directly be used in inductive settings. We show how \ourmethod extends this approach and constructs a relation graph that does not merely count the number of entities shared among two relations, but identify which entities are shared, an approach that we show to return strictly more expressive representations in \Cref{sec:expressivity}.

Finally, we remark that existing \revision{fully inductive models} typically begin by applying the labeling trick~\citep{zhang2021labeling} to obtain initial relative relation representations, conditioned on the query relation~\citep{galkin2023towards}. However, in the relation prediction ($h$, ?, $t$), there is no specific query relation available for conditioning, as this is unknown and constitutes the target of our query. As a result, existing methods must convert relation prediction into a triplet ranking problem, evaluating the possibility of ($h$, $r$, $t$) for all relations $r$ in the relation set. This implies that the model needs to perform one forward pass for each relation. 
%On the contrary,
In contrast,
we show in the next section that \ourmethod can perform the task in one single forward pass. \label{preliminary}

\section{\ourmethod Framework}
%\vspace{-5pt}
\revision{As discussed in previous sections, existing fully inductive models suffer from limited expressivity due to the way in which they construct the relation graph and inefficiency in relation prediction tasks due to the message passing scheme. We fill in these gaps by proposing the framework of \ourmethod, in which the design of our new relation graph records the entity property to enhance expressive power and to the design of iterative message passing mechanism makes relation prediction in one forward pass possible.}
Then we explore \ourmethod's theoretical properties, such as expressiveness and time complexity. An overview of \ourmethod is illustrated in \Cref{fig:ourmethod}.

\subsection{Relation Adjacency Matrix}
% \begin{figure}[t]
% \label{fig:adj}
% \centering
% \includegraphics[width=1.1\textwidth]{figures/relation.pdf}
% \vspace{-.7in}
% \caption{The construction of our proposed relation graph. The entities in the original graph is the relations in the newly constructed relation graph while the relations in the original graph are entities. \beatrice{make font larger}\bruno{it also seems redundant w.r.t.\ Figure 1.}}
% \vspace{-.2in}
% \end{figure}

\revision{Existing methods have limited expressive power. For instance, the relation graphs of both ULTRA~\citep{galkin2023towards} and InGram~\citep{lee2023ingram} are too invariant: In \Cref{fig:example} we show how the edges of the relation graphs of these existing methods (how many common entities two relations have) are not expressive enough for the entity prediction task. We increase expressiveness by including entity information, that is, edges between two relations contain information of which entities share these relations.
}
Given the graph $G$, with adjacency matrix $\mA_V \in \mathbb{R}^{|V| \times |V| \times |R|}$ (entity adjacency matrix), \ourmethod first constructs a relation graph $G_R$ with adjacency matrix $\mA_R$ (relation adjacency matrix). Entities in $\mA_R$ represent the relations in $\mA_V$, and edges in $\mA_R$ denote entities (in $\mA_V$) that share two relations (in $\mA_V$).
Specifically, for any pair of relations $r_i, r_j \in R$ in the original graph $G$, we count how many times each entity $v_k \in V$ is part of triplets involving these two relations as:
\begin{enumerate*}[label=(\arabic*)]
    \item the head entity in both (that is, how many triplets like ($v_k$, $r_i$, $\star$) and ($v_k$, $r_j$, $\star$) exist, where $\star$ is a placeholder for entities),
    \item the tail entity in both (that is ($\star$, $r_i$, $v_k$) and ($\star$, $r_j$, $v_k$)),
    \item the head in the first and the tail in the other
    (that is ($v_k$, $r_i$, $\star$) and ($\star$, $r_j$, $v_k$)), or
    \item vice-versa
    (that is ($\star$, $r_i$, $v_k$) and ($v_k$, $r_j$, $\star$)).
\end{enumerate*}

Since we keep these four roles (head-head, tail-tail, head-tail, tail-head) separate, and we count the above for each entity $v \in V$, the relation adjacency matrix $\mA_R$ \revision{is a tensor of shape} $|R| \times |R| \times |V| \times 4$. Note that this is in contrast with previous methods~\citep{galkin2023towards,lee2023ingram}, that, despite also maintaining the distinct roles, do not differentiate which entities participate in the role but only how many, a choice that impacts the expressive power as we shall see next.

Mathematically, we first construct two matrices $\mE_h \in \mathbb{R}^{|V| \times |R|}$ and $\mE_t \in \mathbb{R}^{|V| \times |R|}$ capturing how many times each entity 
$v \in V$ is the head of triplets involving relation $r \in R$, or the tail, respectively.
Then, we construct four different intermediate relation adjacency matrices capturing the four roles $\mA_R^{hh}$, $\mA_R^{tt}$, $\mA_R^{ht}$, and $\mA_R^{th} \in \mathbb{R}^{|R| \times |R| \times |V|}$. These can be directly obtained by leveraging $\mE_h$ and $\mE_t$. For instance, the entry $\mA_R^{hh}[r_i, r_j, v_k]$, which counts how many times $v_k \in V$ is part of triplets involving both relation $r_i \in R$ and relation $r_j \in R$ while being the head entity in both, can be obtained by multiplying entries in $\mE_h$ as follows:
\begin{equation}
    \mA_R^{hh}[r_i, r_j, v_k] = \mE_h[v_k, r_i] * \mE_h[v_k, r_j].
\end{equation}
The equations for $\mA_R^{tt}$, $\mA_R^{ht}$, $\mA_R^{th}$ are obtained equivalently by substituting $\mE_h$ with $\mE_t$ appropriately and we refer the reader to \Cref{app:expressive-power} for explicit definitions.
These four intermediate adjacency relations are then stacked along the last dimensions, yielding a single $\mA_R \in \mathbb{R}^{|R| \times |R| \times |V| \times 4}$.
\Cref{fig:ourmethod} ((a), (b), (c) and (d)) contains an illustrative example of these four matrices. 
Finally, we remark that, although the shape $|R| \times |R| \times |V| \times 4$ of $\mA_R$ might look massive, it is in fact just an additional list of edge attributes and takes negligible additional space when relying on sparse matrix representations.

\subsection{Iterative Entity and Relation Embedding Updates}
\revision{Existing fully inductive models~\citep{galkin2023towards, lee2023ingram} sequentially perform message passing on the relation adjacency matrix to derive relation representations and subsequently on the entity adjacency matrix using the derived relation representations. As mentioned in \Cref{preliminary}, this sequential message passing does not align with the labeling trick and leads to inefficiency in relation prediction task. We propose a simultaneous refinement process through iterative updates which aligns well with labeling tricks of both relation and entity prediction task. With the more informative relation adjacency matrix $A_R$ proposed in the last part, this iterative embedding update scheme also makes full use of the entity information in $A_R$ to generate strictly more expressive triplet embeddings.} Specifically, we perform message passing updates on $\mA_R$, employing the entity representations as relation embeddings (since entities in $\mA_V$ correspond to relations in $\mA_R$). Subsequently, we proceed with message passing layers on $\mA_V$, utilizing the recently updated relation representations as the relation embeddings. This iterative process is repeated multiple times, ensuring a cohesive refinement of both relation and entity representations throughout the procedure.

In this subsection, we describe mathematically the iterative updates on the entity adjacency matrix and the relation adjacency matrix for the two tasks of interest, namely entity and relation predictions. \revision{\ourmethod is not jointly trained on both tasks, but the proposed framework can be adapted to either solve entity prediction or relation prediction tasks with different initial embeddings.} \revisionlog{The objective functions for optimizing the model for these tasks are in \Cref{loss_function}.}

Let $\mX^{(i)} \in \mathbb{R}^{|V| \times d}$ and $\mZ^{(i)} \in \mathbb{R}^{|R| \times d}$ denote the entity representations and the relation representations of dimension $d$ at any given layer $i$. Following previous methods~\citep{galkin2023towards,lee2023ingram}, we use \revision{NBFNet layers} with labeling tricks~\citep{zhang2021labeling} to obtain relative representations conditioned to the query of interest. Therefore, we will use subscripts to denote the conditioning set. For example, $\mX^{(i)}_{h,r}$ and $\mZ^{(i)}_{h,r}$ will denote the entity and relation representations conditioned on entity $h$ and relation $r$.
With a slight abuse of notation, we will then refer to the relative representation of entity $u$ conditioned on entity $h$ and relation $r$ as $\mX^{(i)}_{h,r}(u)$, and, similarly, $\mZ^{(i)}_{h,r}(r^\prime)$ will denote the relative representation of relation $r^\prime$ conditioned on entity $h$ and relation $r$.

\textbf{\revision{Embeddings for} Entity Prediction Tasks.}
For an entity prediction query $(h, r, ?)$, we leverage the labeling trick and initialize the embeddings of entities and relations to be conditioned on $h$ and $r$ as: %follows:
% bold for matrix
% \begin{align}
%     \mX^{(0)}_{h,r}(h^\prime) = \mathbbm{1}_{h=h^\prime} \times  \mathbf{1}^{d}, \qquad \mZ^{(0)}_{h,r}(r^\prime) = \mathbbm{1}_{r=r^\prime} \times  \mathbf{1}^{d},
% \end{align}
\begin{align}
    \mX^{(0)}_{h,r}(u) = \revision{\text{INIT}_{V}(h, u)}, \qquad \mZ^{(0)}_{h,r}(r^\prime) = \revision{\text{INIT}_{R}(r, r^\prime)}
\end{align}

\revision{where \text{INIT} is the initialization function for embeddings. The initial embeddings for entity $h$ and for relation $r$ are set to all-one vectors, while all the other entities and relations receive initial zero vectors.}
% where $\mathbbm{1}$ denotes the indicator function and $\mathbf{1}^{d}$ is an all-one vector of dimension $d$.  
% This means that the initial embeddings for entity $h$ and for relation $r$ are set to all-one vectors, while all the other entities and relations receive initial zero vectors. 
Subsequently, we perform iterative updates as follows:
% \begin{align}
%      \mX_{h,r}^{(i)} &= \text{GNN}_V^{(i)}(\mA_V, \mX_{h,r}^{(i-1)}, \mZ_{h,r}^{(i-1)}), \\
%      \mZ_{h,r}^{(i)} &= \text{GNN}_R^{(i)}(\mA_R, \mZ_{h,r}^{(i-1)}, \mX_{h,r}^{(i)}),
% \end{align}
%
\begin{align}
    \mX_{h,r}^{(i)}(u) 
    &= \text{GNNLayer}_V^{(i)}(\mA_V, \mX_{h,r}^{(i-1)}, \mZ_{h,r}^{(i-1)}, u)  \nonumber \\
    &= \revision{\text{UP}_{V}^{(i)}\left(\mX_{h,r}^{(i-1)}(u), \text{AGG}_{V}^{(i)}\left(\text{MSG}_{V}^{(i)}(\mX_{h,r}^{(i-1)}(v), \mZ_{h,r}^{(i-1)}(r^\prime))|(u, r^\prime, v) \in \mA_V\right)\right)} \label{ee} \\
    \mZ_{h,r}^{(i)}(r^\prime) 
    &= \text{GNNLayer}_R^{(i)}(\mA_R, \mZ_{h,r}^{(i-1)}, \mX_{h,r}^{(i)}, r^\prime) \nonumber \\
    &= \revision{\text{UP}_{R}^{(i)}\left(\mZ_{h,r}^{(i-1)}(r^\prime), \text{AGG}_{R}^{(i)}\left(\text{MSG}_{R}^{(i)}(\mZ_{h,r}^{(i-1)}(r^{\prime\prime}), \mX_{h,r}^{(i)}(u))|(r^\prime,u,r^{\prime\prime})\in \mA_R\right)\right)} \label{er}
\end{align}
%
% where $\text{GNN}_V^{(i)}$ and $\text{GNN}_R^{(i)}$ are two distinct \revision{NBFNet} layers taking as input the corresponding adjacency matrix, the representations of the nodes and the representations of the relations in such adjacency matrix. 
%
\revision{$\text{GNNLayer}_V^{(i)}$ and $\text{GNNLayer}_R^{(i)}$ are NBFNet layers where $\text{UP}^{(i)}$, $\text{AGG}^{(i)}$ and $\text{MSG}^{(i)}$ stand for update, aggregation, and message functions at the $i$-th layer, respectively. Following NBFNet, the message function is DistMult, the aggregation function is sum, and the update function is a multi-layer perceptron. The pseudo code is shown in \Cref{entity_code} in the Appendix.} The final entity embeddings are passed to a multi-layer perceptron for entity prediction. 

% Recall that since nodes in $\mA_R$ are the relations and edges are instead entities, the representations of the nodes and the representations of the relations are swapped with respect to those in $\mA_V$.
%

In knowledge graphs, every relation typically has a corresponding reverse relation. To handle the entity prediction query $(?, r, t)$, we can simply transform it into $(t, r^{-1}, ?)$ by utilizing the reverse relation and treat it as a tail prediction as above.

\textbf{\revision{Embeddings for} Relation Prediction Tasks.}
For a relation prediction query $(h, ?, t)$, we leverage the labeling trick and initialize the embeddings of entities and relations to be conditioned on $h$ and $t$ as follows:
% \begin{align}
%     \mX_{h,t}^{(0)}(h^\prime) = (1 \times \mathbbm{1}_{h^\prime=h} + (-1) \times \mathbbm{1}_{h^\prime=t})\times \mathbf{1}^{d}, \qquad \mZ_{h,t}^{(0)}(r^\prime) = \mathbf{1}^{d},
% \end{align}
\begin{align}
    \mX_{h,t}^{(0)}(u) = \revision{\text{INIT}(h, t, u)}, \qquad \mZ_{h,t}^{(0)}(r^\prime) = \revision{\mathbf{1}^{d}},
\end{align}
\revision{where $\mathbf{1}^{d}$ is an all-one vector of dimension $d$.} 
This means that the initial embedding for entity $h$ is set to all-one vectors, the initial embedding for entity $t$ is set to all-minus-one vectors, while all the other entities receive initial zero vectors. Furthermore, the relations are initialized all to all-one vectors.
Subsequently, we perform iterative embedding updates as follows.
% \begin{align}
%     \mZ_{h,t}^{(i)} &= \text{GNN}_R^{(i)}(\mA_R, \mZ_{h,t}^{(i-1)}, \mX_{h,t}^{(i-1)}) \\
%     \mX_{h,t}^{(i)} &= \text{GNN}_V^{(i)}(\mA_V, \mX_{h,t}^{(i-1)}, \mZ_{h,t}^{(i)})
% \end{align}
Same as entity prediction, $\text{GNNLayer}_V^{(i)}$ and $\text{GNNLayer}_R^{(i)}$ are NBFNet layers:
\begin{align}
    \mZ_{h,t}^{(i)}(r) &= \text{GNNLayer}_R^{(i)}(\mA_R, \mZ_{h,t}^{(i-1)}, \mX_{h,t}^{(i-1)}, r) \nonumber\\
    % \mZ_{h,t}^{(i)}(r) 
    &= \revision{\text{UP}_{R}^{(i)}\left(\mZ_{h,t}^{(i-1)}(r), \text{AGG}_{R}^{(i)}\left(\text{MSG}_{R}^{(i)}(\mZ_{h,t}^{(i-1)}(r^\prime), \mX_{h,t}^{(i-1)}(u))|(r,u,r^\prime)\in \mA_R\right)\right)} \label{rr}\\
    \mX_{h,t}^{(i)}(u) 
    &= \text{GNNLayer}_V^{(i)}(\mA_V, \mX_{h,t}^{(i-1)}, \mZ_{h,t}^{(i)}, u)  \nonumber\\
    &= \revision{\text{UP}_{V}^{(i)}\left(\mX_{h,t}^{(i-1)}(u), \text{AGG}_{V}^{(i)}\left(\text{MSG}_{V}^{(i)}(\mX_{h,t}^{(i-1)}(v), \mZ_{h,t}^{(i)}(r))|(u, r, v) \in \mA_V\right)\right)} \label{re}
    % \text{GNN}_V^{(i)}(\mA_V, \mX_{h,t}^{(i)}, \mZ_{h,t}^{(i+1)}) .
\end{align}
% \revision{where $\text{UP}^{(i)}$, $\text{AGG}^{(i)}$ and $\text{MSG}^{(i)}$ stand for update, aggregation and message functions at the $i$-th layer. Following NBFNet, DistMult is chosen as the message function and the aggregation function is sum. 

The pseudo code is shown in \Cref{relation_code} in the Appendix. The final relation embeddings are passed to a multi-layer perceptron for the prediction. %\revision{Please refer to \Cref{loss_function} for the loss functions of these two tasks.}
% where $\text{GNN}_V^{(i)}$ and $\text{GNN}_R^{(i)}$ are two distinct GNN layers taking as input the corresponding adjacency matrix, the representations of the nodes and the representations of the relations in such adjacency matrix.

\subsection{{\revision{\ourmethod Properties}}}
% lemma go to appendix and cite
\label{sec:expressivity}
% We borrow concepts from expressive community the ability to distinguish triplets

We conduct a theoretical analysis to compare the expressiveness of \ourmethod with \revisionlog{existing relation-graph based fully-inductive models, namely ULTRA~\cite{galkin2023towards}, InGram~\cite{lee2023ingram} and DEq-InGram~\cite{gao2023double}.} More precisely, we aim to investigate the power of both methods to distinguish non-isomorphic triplets.
All proofs can be found in \Cref{app:expressive-power}. 
We begin by showing that \ourmethod is at least as expressive as \revisionlog{ULTRA and InGram}, since it can distinguish all non-isomorphic triplets that \revisionlog{they can distinguish. DEq-InGram is a variant of InGram by applying Monte Carlo sampling in inference~\cite{lee2023ingram}. Each sampling of DEq-InGram is actually a forward pass of InGram. So for simplicity in the proof we do not include DEq-InGram.}

\begin{restatable}[\ourmethod at least as powerful as ULTRA \revisionlog{and InGram}]{lemma}{asexpr}\label{theo:asexpr}
Any non-isomorphic triplet that can be distinguished by ULTRA \revisionlog{or InGram} can also be distinguished by \ourmethod\ 
 \revisionlog{with certain choices of hyperparameters and initial embeddings}.
\end{restatable}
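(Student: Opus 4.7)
The plan is to show that TRIX can be instantiated so that, for any KG, its forward pass reproduces all the information computed by ULTRA (resp. InGram) on the same KG. The key observation is that TRIX's relation adjacency matrix $\mA_R \in \mathbb{R}^{|R| \times |R| \times |V| \times 4}$ strictly refines the relation graphs of ULTRA and InGram: summing $\mA_R$ along the entity dimension yields, in each of the four role slices $hh, tt, ht, th$, exactly the scalar edge weights that ULTRA and InGram place on their relation graphs. Consequently, ULTRA's and InGram's relation graphs are measurable functions of $\mA_R$, so any computation they perform can in principle be replicated from $\mA_R$.

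The reduction proceeds in three steps. First, I would set TRIX's initial embeddings to match the baseline's: for an entity prediction query $(h,r,?)$, use the same query-conditioned labeling vectors that ULTRA/InGram use for $\mX^{(0)}_{h,r}$ and $\mZ^{(0)}_{h,r}$ (the excerpt already initializes $h$ and $r$ to all-ones and the rest to zero, which matches the labeling trick these methods use). Second, I would choose TRIX's $\text{MSG}_R^{(i)}$ in \eqref{er} and \eqref{rr} so that it depends only on $\mZ^{(i-1)}(r'')$ and on the role/edge-type indicator carried by $\mA_R$, ignoring the entity embedding $\mX^{(i)}(u)$. Because $\text{AGG}_R^{(i)}$ is a sum over all $(r', u, r'') \in \mA_R$, the resulting aggregation collapses to a role-weighted sum whose weight on each $(r',r'')$ pair equals the entity-count summand --- i.e., exactly the aggregation executed by ULTRA or InGram over their relation graphs. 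The NBFNet update/message/aggregation family used by TRIX contains, by choice of parameters, those used in the baselines, so the resulting $\mZ^{(i)}$ matches. Third, an analogous choice of $\text{MSG}_V^{(i)}$ and $\text{UP}_V^{(i)}$ in \eqref{ee}, (\ref{re}) recovers the entity-level NBFNet/GNN step of the baselines using the $\mZ^{(i)}$ just computed.

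To reconcile TRIX's interleaved updates with the baselines' sequential schedule (first all relation updates, then all entity updates), I would use the freedom in hyperparameter choice to add extra iterations in which the entity branch is set to the identity (so $\mX^{(i)}_{h,r} = \mX^{(i-1)}_{h,r}$), letting the relation branch iterate alone for as many rounds as the baseline uses, and then freezing the relation branch while the entity branch runs. Since TRIX's MSG/AGG/UP functions are MLP-based and contain identity-like maps, this schedule-matching is realizable. The contrapositive then gives the lemma: if two triplets have identical final TRIX embeddings for every choice of parameters, in particular they agree under the parameters simulating ULTRA (resp. InGram), so the baselines also cannot distinguish them.

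The main obstacle I expect is the bookkeeping around the different architectural conventions of the two baselines. ULTRA and InGram use slightly different GNN layers (relation-specific basis decomposition vs.\ attention-based aggregation), and one must separately verify that each lies inside the function class that TRIX's $\text{MSG}/\text{AGG}/\text{UP}$ can realize with appropriate weights. A secondary delicacy is ensuring that the role information (which of $hh, tt, ht, th$) in $\mA_R$ is preserved as an edge attribute throughout the reduction, so that the four role-wise sums are not inadvertently collapsed before the aggregation step --- this is where the $\times 4$ dimension of $\mA_R$ must be used explicitly.
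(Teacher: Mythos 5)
Your overall strategy is essentially the one the paper uses: simulate the baseline's \emph{sequential} schedule (first the relation-GNN, then the entity-GNN) inside TRIX's \emph{interleaved} schedule by freezing one branch at a time, and argue that the four-role relation tensor $\mA_R$ refines the baselines' scalar-weight relation graphs so that the baseline aggregation is recovered after summing over the entity dimension. The contrapositive at the end is also the paper's logical closing move. However, there is a concrete technical error in how you propose to make the relation-level message passing independent of the entity embeddings. You say you would ``choose $\text{MSG}_R$ \ldots ignoring the entity embedding $\mX^{(i)}(u)$,'' but TRIX's message function is fixed to (non-parametric) DistMult, i.e.\ $\text{MSG}_R\bigl(\mZ(r''),\mX(u)\bigr) = \mZ(r'')\odot\mX(u)$; there is no weight one can set to make it drop the $\mX(u)$ argument. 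The paper instead uses the lemma's explicit freedom over \emph{initial embeddings}: it initializes \emph{every} entity to the all-ones vector and keeps the entity branch an identity during the first $k_1$ rounds, so that $\mX^{(i)}(u)=\mathbf{1}^d$ throughout and DistMult degenerates to the identity on $\mZ(r'')$. This also corrects your assumption that TRIX's default entity initialization (only $h$ all-ones, the rest zero) can be reused here --- with zeros in the second DistMult argument, most relation messages would vanish and the simulation would fail.

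Two smaller points you gesture at but do not resolve, both of which the paper handles explicitly. First, for InGram the simulation must restrict TRIX's relation-level aggregation to the head--head and tail--tail role slices $\mA_R^{hh}\cup\mA_R^{tt}$ (InGram's relation graph only counts those co-occurrences), whereas for ULTRA all four roles are used; your ``preserve the role attribute'' remark is necessary but not sufficient without this restriction. Second, your worry about ULTRA's and InGram's differing GNN layers is well-founded: the paper sidesteps it by letting TRIX adopt the \emph{same} layer type as the baseline being simulated (NBFNet layers for ULTRA, a GATv2-style layer for InGram) rather than showing that one fixed TRIX layer family subsumes both, which is the cleaner way to discharge the ``bookkeeping'' obstacle you flag.
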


We prove this by showing ULTRA \revisionlog{and InGram are actually special cases} of \ourmethod. That is, there exist choices of hyperparameters such that \ourmethod can precisely implement ULTRA \revisionlog{and InGram respectively} to reproduce the message passing process of them and can get the same triplet embeddings.
However, the contrary is not true: there exist cases where ULTRA \revisionlog{or InGram} cannot obtain the same triplet embeddings as \ourmethod, regardless of hyperparameter or weight choices. 

% Indeed, we prove in the following that there exist non-isomorphic triplets that ULTRA cannot distinguish, while \ourmethod can. This implies that ULTRA will give these triplets the same representations, while \ourmethod can assign them different representations.

\begin{restatable}[\ourmethod can distinguish triplets ULTRA \revisionlog{and InGram} cannot]{lemma}{triplets}\label{theo:triplets}
There exist non-isomorphic triplets that can be distinguished by \ourmethod but that cannot be distinguished by ULTRA \revisionlog{and InGram}.
\end{restatable}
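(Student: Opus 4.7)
My plan is to establish the separation by constructing an explicit pair of non-isomorphic triplets whose representations must coincide under ULTRA and InGram for every choice of learnable weights, while \ourmethod can assign them distinct embeddings. The key observation making such a counterexample possible is the limitation already highlighted in \Cref{sec:expressivity}: ULTRA and InGram build their relation graphs by \emph{counting}, for each pair of relations, the number of entities shared in each of the four head/tail roles, discarding the identities of those shared entities. If I can exhibit two graphs whose role-counts agree pair-by-pair, the relation graphs built by ULTRA and InGram are isomorphic as labeled graphs, and any message passing over them necessarily produces identical relation embeddings, regardless of the weights.

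Concretely, I will build two small knowledge graphs $G_1$ and $G_2$ over the same relation set in which the multiset of shared-role participations has the same cardinality for every pair of relations, but in $G_1$ a single ``hub'' entity $v$ participates in multiple such shared roles, whereas in $G_2$ the analogous participations are split across several distinct entities $v_1, v_2, \ldots$. Padding with auxiliary edges lets me match all four role-counts simultaneously. With the query chosen to touch the hub in $G_1$ and a corresponding entity in $G_2$, the two triplets are non-isomorphic as elements of their respective KGs. On the ULTRA/InGram side I will argue that the full computation---both the relation-graph message passing and the subsequent entity-graph NBFNet layers---admits an explicit bijection between the two computation graphs that is preserved layer by layer, forcing the two triplets to receive identical embeddings. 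On the \ourmethod side I will show that, because $\mA_R$ records which entity realizes each shared-role participation, the conditional entity embeddings $\mX_{h,r}^{(i)}(\cdot)$ produced by the labeling trick flow into the relation updates: the hub $v$ in $G_1$ forces its embedding to be reused across the updates of several different relations, creating a correlation between $\mZ_{h,r}^{(i)}(r_1), \mZ_{h,r}^{(i)}(r_2), \ldots$ that cannot arise in $G_2$, where the independent entities $v_i$ carry distinct labeling-trick embeddings.

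The main obstacle is the ULTRA/InGram direction, since I must rule out \emph{all} weight configurations rather than exhibit a single witnessing one. I would handle this by formalizing the bijection above as an isomorphism between the labeled computation graphs of the two models on $G_1$ and $G_2$, then concluding equality of the final triplet scores by induction on the layer index, using the fact that message, aggregation, and update functions act identically on isomorphic inputs. The \ourmethod direction is considerably easier: I only need a single choice of update/message/aggregation functions under which the correlation induced by the hub entity translates into distinct triplet scores, which can be achieved by any parameterization that preserves the multiset-of-identities information carried by $\mA_R$ (for instance, a sum aggregator composed with an injective transformation on entity embeddings). Combining the two directions yields the claimed strict separation.
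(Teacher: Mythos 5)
Your key observation — that ULTRA and InGram only record \emph{how many} entities realize each head/tail role between a pair of relations, so any two configurations with matching counts produce an identical relation graph and hence are forced to the same relation embeddings — is exactly the insight the paper exploits. Where you diverge is in the concrete construction, and the route you sketch is both more complicated than necessary and has an unfilled gap.

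The paper's proof (Figures~\ref{fig:example} and \ref{fig:exampleingram}) places the two target triplets inside a \emph{single} knowledge graph and exhibits an automorphism of that graph (at the relation level: swapping $r_1 \leftrightarrow r_2$; at the entity level: swapping the two candidate tails) that fixes the query head and relation. Because ULTRA/InGram are equivariant to such automorphisms at \emph{both} stages of their pipeline — the relation-graph message passing and the subsequent NBFNet layers on the entity graph — equality of the two triplet scores follows immediately for every weight setting, with no need to compare two separate computations. Your plan instead constructs two distinct graphs $G_1$ (hub entity) and $G_2$ (split entities), matches the relation-graph role counts by padding, and then asserts that the full computation admits a layer-preserving bijection. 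The step you haven't accounted for is the entity-level side: NBFNet on the entity graph also refines entity representations using the raw entity-graph topology, not just the relation embeddings. A hub entity in $G_1$ has a different degree profile than the split entities of $G_2$, so the 1-WL-style refinement underlying NBFNet will generically distinguish them unless you also carefully pad the \emph{entity} graphs into a 1-WL-indistinguishable pair (e.g., via a regular-graph-type construction). You say this bijection ``is preserved layer by layer,'' but this is precisely what needs to be proved and is not automatic from matching the relation-graph counts. The single-graph automorphism in the paper sidesteps this entirely: one automorphism of one object handles both stages for free.

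So: your high-level argument is the right one and your TRIX direction is fine, but to turn your plan into a proof you would either (i) add the entity-graph 1-WL-indistinguishability requirement to your construction and verify it, or (ii) switch to the paper's cleaner single-graph automorphism, where the pair of triplets to distinguish share their head and query relation and differ only in which of two relation-graph-automorphic targets they point to.
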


We prove this by constructing exemplary triplets that are clearly non-isomorphic, and then showing that they can be distinguished by \ourmethod but can not be distinguished by ULTRA \revisionlog{and InGram}. Finally, we can combine \Cref{theo:asexpr,theo:triplets} into \Cref{theo:moreexpr}. 

\begin{restatable}[\ourmethod is more expressive than ULTRA \revisionlog{and InGram}]{theorem}{moreexpr}\label{theo:moreexpr}
\ourmethod is strictly more expressive than ULTRA \revisionlog{and InGram} in distinguishing between non-isomorphic triplets in KGs.
\end{restatable}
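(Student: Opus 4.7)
The plan is a direct two-step combination of the preceding lemmas, since Theorem~\ref{theo:moreexpr} is essentially the logical conjunction of Lemma~\ref{theo:asexpr} and Lemma~\ref{theo:triplets}. First, I would invoke Lemma~\ref{theo:asexpr} to establish one direction of containment: every pair of non-isomorphic triplets that ULTRA (resp.\ InGram) can distinguish is also distinguishable by \ourmethod, under an appropriate choice of hyperparameters and initial embeddings that make \ourmethod simulate the corresponding baseline. Second, I would invoke Lemma~\ref{theo:triplets} to exhibit a witness pair of non-isomorphic triplets that \ourmethod distinguishes but that neither ULTRA nor InGram can distinguish under any choice of their hyperparameters or weights. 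This rules out equality of the two distinguishability classes.

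To make the conclusion formal, I would briefly chase the definitions: view the expressivity of a model as the equivalence relation on triplets induced by equality of the produced triplet embeddings, and note that ``strictly more expressive'' means the equivalence relation induced by \ourmethod is strictly finer than those induced by ULTRA and InGram. Lemma~\ref{theo:asexpr} yields ``at least as fine,'' and Lemma~\ref{theo:triplets} rules out equality, so the relation is strictly finer and the theorem follows.

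The theorem itself is not where the difficulty lies; the real work is upstream in the two lemmas. For Lemma~\ref{theo:asexpr} the obstacle is to show that \ourmethod's richer relation adjacency tensor $\mA_R\in\mathbb{R}^{|R|\times|R|\times|V|\times 4}$ can be collapsed, via suitable choices of the message, aggregation, and update functions in \eqref{ee}--\eqref{re}, onto the entity-count tensor used by ULTRA (and analogously for InGram), by aggregating out the $|V|$ axis. For Lemma~\ref{theo:triplets} the harder step is constructing two KGs whose relation graphs are isomorphic under ULTRA's and InGram's entity-counting construction—so the baselines must assign identical triplet embeddings—while the identities of the shared entities, which \ourmethod retains as edge attributes in $\mA_R$, differ in a way that the iterative entity--relation updates can detect. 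Once these two statements are available, the proof of Theorem~\ref{theo:moreexpr} is a one-line conjunction.
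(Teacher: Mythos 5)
Your proposal matches the paper's proof exactly: the theorem is obtained by combining Lemma~\ref{theo:asexpr} (containment of distinguishability) with Lemma~\ref{theo:triplets} (a witness pair separating the two), and the paper's proof is precisely this one-line conjunction. Your additional framing in terms of induced equivalence relations being strictly finer is a harmless formalization of the same argument, not a different route.
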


We remark here that the additional expressive power comes from including which entities share two relations, at the cost of an additional dimension in the relation adjacency matrix. \revision{Prior relation graphs~\citep{galkin2023towards, lee2023ingram, gao2023double} consider all entities as isomorphic: the feature describing the entities shared by two relations is simply how many entities they have in common. 
% Unfortunately, that imposes a great loss in expressiveness since 
It is not uncommon to have many pairs of relations with similar entity counts when these entities are non-isomorphic. Then the relations incorrectly tend to get similar or even same embedding. \ourmethod distinguish these relations by recording entities in the additional dimension in the relation graph. Through the proposed iterative updates, non-isomorphic entities get diverse embeddings and non-isomorphic relations also get diverse embeddings, thus the expressive power gets boosted.} In the following, we discuss the complexity of \ourmethod.

% few examples about alpha
\textbf{Time Complexity.} Recall that the entity 
adjacency matrix $\mA_V \in \mathbb{R}^{|V| \times |V| \times |R|}$ has $\vert V \vert$ entities and $\vert R \vert$ relations. Denote by $\alpha$ the maximum number of unique relations one single entity connects to as the head or the tail in $\mA_V$. Then, the relation adjacency matrix $\mA_R \in \mathbb{R}^{|R| \times |R| \times |V| \times 4}$ has $\vert R \vert$ entities, $\vert V \vert$ relations and at most $4|V|\alpha^2$ edges. \revision{Assuming there are $L$ rounds of iterative updates} and embedding dimension is $d$, which we consider constant, the time complexity of \ourmethod is $O(|E|+|V|\revision{\alpha}^2)$ while the time complexity of ULTRA is $O(|E|+|V|+|R|^2)$ for each forward pass, which results in $O(|E|+|V|+|R|^2)$ for entity prediction and in $O((|E|+|V|+|R|^2)|R|)$ for relation prediction, as it needs to perform one forward pass for each relation in the relation set. 
We expand on this in \Cref{app:expressive-power}, where we show that, in practice, the number of edges in the relation graph is much smaller than $4|V|\alpha^2$, which therefore results in a complexity $\sim 10\times$ worse than ULTRA in entity prediction, and $\sim 20\times$ better than ULTRA in relation prediction.

\section{Experiments}
%\vspace{-5pt}
\label{experiment_section}
We perform a comprehensive set of experiments to answer the following questions:
\begin{enumerate*}[label=(\arabic*), leftmargin=*]
    \item How does \ourmethod compare to state-of-the-art \revision{fully inductive models} in inductive entity and relation prediction tasks and, in particular, does the increased expressiveness translate into better performance?
    \item Can the accuracy of \ourmethod increase with more data in the pre-training?
    \item \revisionlog{Can LLMs make inductive inferences based on the structural information in KGs? To be more specific,} are LLMs equivariant to permutations of relation and entity ids? And, can they do inductive tasks where the relation semantic is hidden and the model needs to leverage the \revisionlog{structural information in the input}?
\end{enumerate*}
\revision{In the following, we report our main results and refer to \Cref{appx:detailedresults} for additional experiments, including an ablation study on the importance of the proposed relation adjacency matrix and the iterative updates.}

\textbf{\ourmethod Implementation Details.}
\revision{We train one \ourmethod model for entity prediction and another \ourmethod model for relation prediction. For relation prediction,} \ourmethod updates relation and entity embedding for 3 rounds, while for entity prediction, for $5$ rounds. \revision{$\text{GNNLayer}_R$ and $\text{GNNLayer}_V$ in \Cref{ee,er,rr,re} are NBFNet layers with hidden dimension 32}. In pre-training, the model is trained for 10 epochs with 10000 steps per epoch with early stopping. In fine-tuning, the model is trained for 3 epochs with 1000 steps per epoch with early stopping. We use a batch size of 32, AdamW optimizer and learning rate of 0.0005. 

\subsection{Zero-Shot Inference and Fine-Tuning of \revision{Fully Inductive Models}}

\textbf{Datasets \& Evaluation.} We undertake a comprehensive evaluation across 57 distinct KGs coming from different domains, and split them into three categories: inductive entity and relation ($e$, $r$) datasets, inductive entity ($e$) datasets, and transductive datasets (\Cref{appx:datasets}). We follow the procedure prescribed by ULTRA~\citep{galkin2023towards} and pretrain on 3 datasets (WN18RR,
CoDEx-Medium, FB15k237). \revisionlog{We choose ULTRA as our baseline since it is the state-of-the-art double equivariant model that is capable of doing zero-shot fully inductive inference on the large KGs of our interests.} For entity prediction task, we report Mean Reciprocal Rank (MRR) and Hits@10 as the main performance metrics evaluated against the full entity set of the inference graph. For each triplet, we report the results of predicting both head and tail. Only in three datasets from \citet{lv2020dynamic} we report tail-only metrics, as in the baselines. For relation prediction, we report MRR and Hits@1 as the main performance metrics evaluated against the full relation set of the inference graph. We choose Hits@1 instead of Hits@10 because for some datasets the number of relations is less than 10. 

% \revisionlog{
% \textbf{Baselines.} We choose ULTRA as our baseline since it is the state-of-the-art double equivariant model that is capable of doing zero-shot fully inductive inference on large KGs of our interests. }

% InGram and ISDEA+ are also capable of doing inductive inferences with unseen entities and unseen relations. However, InGram finds the best embedding initialization on the test graph, which violates the definition of zero-shot inference so it is not suitable for our zero-shot experiments. We cannot include ISDEA+ as a baseline as it only measures Hits@10 computed with 50 negative samples instead of considering all possible negatives and it is not scalable to large KGs as it needs to compare shortest distances between entity pairs.

% We consider the following baselines: 
% \begin{enumerate*}[label=(\roman*), leftmargin=*]
%     \item the ULTRA (a SOTA fully inductive model); and
%     \item SOTA long-context LLMs with public APIs (Gemini-1.5-flash and Gemini-1.5-Pro \citep{geminiteam2024gemini}). 
% \end{enumerate*}

\textbf{Entity Prediction Results.}
We present the MRR and Hits@10 results across 57 KGs, along with the corresponding baseline (ULTRA) outcomes in \Cref{entity} \revision{and \Cref{entity_mrr_figure}}. Detailed per-dataset results, including standard deviations, are provided in \Cref{tab:app_zero_shot_ent,tab:app_fine_tune_ent}. In short, \ourmethod outperforms ULTRA by a significant margin ($\sim3\%$ average absolute improvement) in zero-shot scenarios, while demonstrating $\sim0.7\%$ average absolute improvement after fine-tuning in inference tasks. 
This demonstrates the importance of the additional expressiveness, which results in better performance. 

%\revision{For ablation study, we compare the performance of (1) \ourmethod, (2) \ourmethod without iterative updates, and (3) \ourmethod without iterative updates and without our relation graph (using the relation graph of ULTRA). Ablation study shows both the proposed relation graph and the iterative update scheme significantly enhance the prediction performance. Please refer to \Cref{appx:detailedresults} for details.}
\revision{
\begin{figure}[t]
% \vspace{-10pt}
\centering
\includegraphics[width=1.0\textwidth]{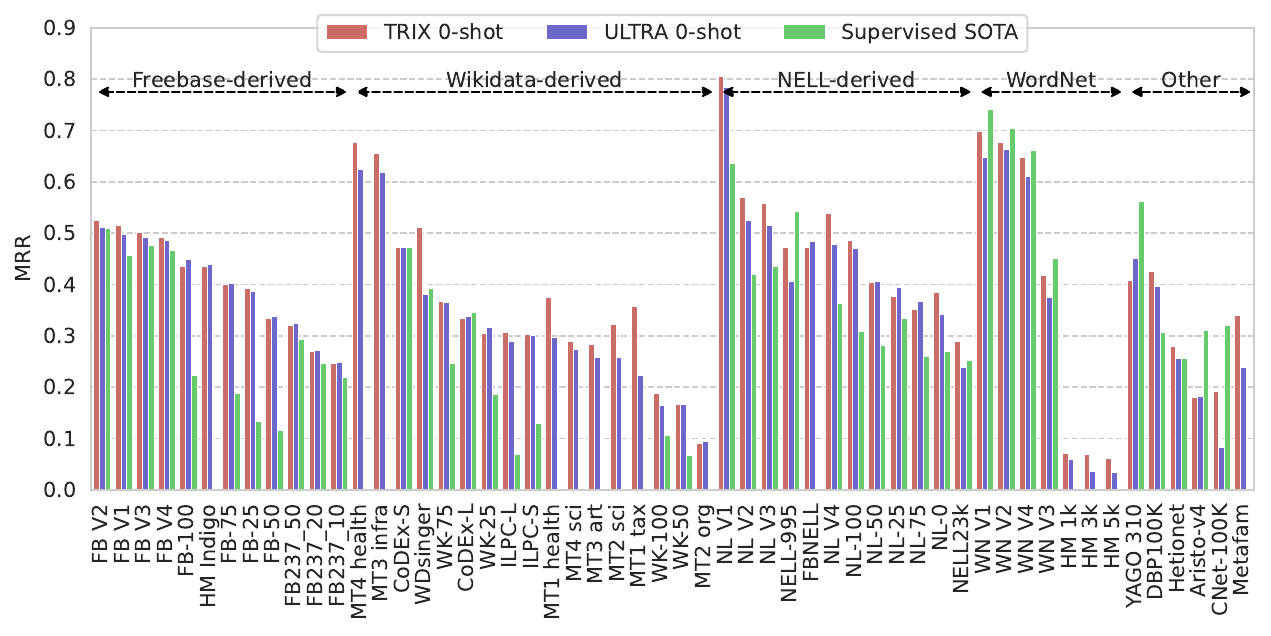}
\caption{ Zero-shot MRR (higher is better) in the entity prediction task. \ourmethod outperforms ULTRA on 34 datasets, while being comparable on 14 datasets and being outperformed on 6 datasets. It even outperforms supervised baselines on 30 out of 40 datasets.}
\label{entity_mrr_figure}
\vspace{-18pt}
\end{figure}
}

\begin{table*}[t]
    \centering
    \caption{Average entity prediction MRR and Hits@10 over 57 KGs from distinct domains.  The results over each of the 57 KGs are given in \Cref{appx:detailedresults}.}
    \label{entity}
\begin{adjustbox}{width=\textwidth}
    \begin{tabular}{lcccccccc||ccc}\toprule
    \multirow{3}{*}{\bf{Model}} & \multicolumn{2}{c}{\bf{Inductive} $e, r$} & \multicolumn{2}{c}{\bf{Inductive} $e$} & \multicolumn{2}{c}{\bf{Transductive}} & \multicolumn{2}{c}{\bf{Total Avg}} & \multicolumn{2}{c}{\bf{Pretraining}} \\  
 & \multicolumn{2}{c}{(23 graphs)} & \multicolumn{2}{c}{(18 graphs)} & \multicolumn{2}{c}{(13 graphs)} & \multicolumn{2}{c}{(54 graphs)} & \multicolumn{2}{c}{(3 graphs)} \\ \cmidrule(l){2-3} \cmidrule(l){4-5} \cmidrule(l){6-7} \cmidrule(l){8-9} \cmidrule(l){10-11}
 & \bf{MRR} & \bf{H@10} & \bf{MRR} & \bf{H@10} & \bf{MRR} & \bf{H@10} & \bf{MRR} & \bf{H@10} & \multicolumn{1}{c}{\bf{MRR}} & \bf{H@10} \\ 
    \midrule
    ULTRA zero-shot &0.345 &0.513 &0.431 &0.566 &0.312 &0.458 &0.366 &0.518 & N/A & N/A \\
    % \revisionlog{InGram zero-shot} & & & & & & & & & N/A & N/A \\
    \ourmethod zero-shot & \textbf{0.368} & \textbf{0.540} & \textbf{0.455} & \textbf{0.592} & \textbf{0.339} & \textbf{0.500} & \textbf{0.390} & \textbf{0.548} & N/A & N/A \\ \midrule
    ULTRA fine-tuned & 0.397 & \textbf{0.556} & 0.442 & 0.582 &0.379 & 0.543 & 0.408 & 0.562 & 0.407 & \textbf{0.568} \\
    % \revisionlog{InGram fine-tuned} & & & & & & & & & & \\
    \ourmethod fine-tuned & \textbf{0.401} & \textbf{0.556} & \textbf{0.459} & \textbf{0.594} & \textbf{0.390} & \textbf{0.558} & \textbf{0.418} & \textbf{0.569} & \textbf{0.415} & 0.563 \\
    \bottomrule
    \end{tabular}
\end{adjustbox}
\end{table*}

\textbf{Relation Prediction Results of Graph Models.}
We present the average MRR and Hits@1 results across 57 KGs, alongside the results for ULTRA in \Cref{relation}  (Hits@10 is not chosen because some domains have too few relations). Detailed per-dataset results, including standard deviations, are available in  \Cref{tab:app_zero_shot_rel,tab:app_fine_tune_rel}. \ourmethod outperforms ULTRA significantly in zero-shot inference scenarios, exhibiting substantial advantages, with an average absolute improvement of 7.4\% in Hits@1. Moreover, after fine-tuning, \ourmethod shows an average absolute improvement of 4.7\% in Hits@1 in inference tasks. Notably, the zero-shot inference results obtained by \ourmethod even surpass those of ULTRA after finetuning. This underscores the effectiveness of \ourmethod in relation predictions.

% \vspace{-1.0em}
\begin{table*}[t]
    \centering
    \caption{Average relation prediction MRR and hits@1 over 57 KGs from distinct domains.  The results over each of the 57 KGs are given in \Cref{appx:detailedresults}.}
    \label{relation}
\begin{adjustbox}{width=\textwidth}
    \begin{tabular}{lcccccccc||ccc}\toprule
    \multirow{3}{*}{\bf{Model}} & \multicolumn{2}{c}{\bf{Inductive} $e, r$} & \multicolumn{2}{c}{\bf{Inductive} $e$} & \multicolumn{2}{c}{\bf{Transductive}} & \multicolumn{2}{c}{\bf{Total Avg}} & \multicolumn{2}{c}{\bf{Pretraining}} \\  
 & \multicolumn{2}{c}{(23 graphs)} & \multicolumn{2}{c}{(18 graphs)} & \multicolumn{2}{c}{(13 graphs)} & \multicolumn{2}{c}{(54 graphs)} & \multicolumn{2}{c}{(3 graphs)} \\ \cmidrule(l){2-3} \cmidrule(l){4-5} \cmidrule(l){6-7} \cmidrule(l){8-9} \cmidrule(l){10-11}
 & \bf{MRR} & \bf{H@1} & \bf{MRR} & \bf{H@1} & \bf{MRR} & \bf{H@1} & \bf{MRR} & \bf{H@1} & \multicolumn{1}{c}{\bf{MRR}} & \bf{H@1} \\ 
    \midrule
    ULTRA zero-shot & 0.785 & 0.691 & 0.714 & 0.590 & 0.629 & 0.507 & 0.724 & 0.613 & N/A & N/A \\
    \ourmethod zero-shot & \textbf{0.842} & \textbf{0.770} & \textbf{0.756} & \textbf{0.611} & \textbf{0.752} & \textbf{0.647} & \textbf{0.792} & \textbf{0.687} & N/A & N/A \\ \midrule
    ULTRA fine-tuned  & 0.823 & 0.741 & 0.716 & 0.591 & 0.707 & 0.608 & 0.759 & 0.659 & 0.876 & \textbf{0.817} \\
    \ourmethod fine-tuned  & \textbf{0.850} & \textbf{0.785} & \textbf{0.759} & \textbf{0.615} & \textbf{0.785} & \textbf{0.693} & \textbf{0.804} & \textbf{0.706} & \textbf{0.879} & 0.797 \\
    \bottomrule
    \end{tabular}
\end{adjustbox}
\end{table*}
%\vspace{-1em}

\begin{wrapfigure}[13]{}{0.6\textwidth}
\vspace{-3pt}
    \centering
    \begin{subfigure}[t]{0.48\linewidth}
        \centering
        \includegraphics[width=3.8cm]{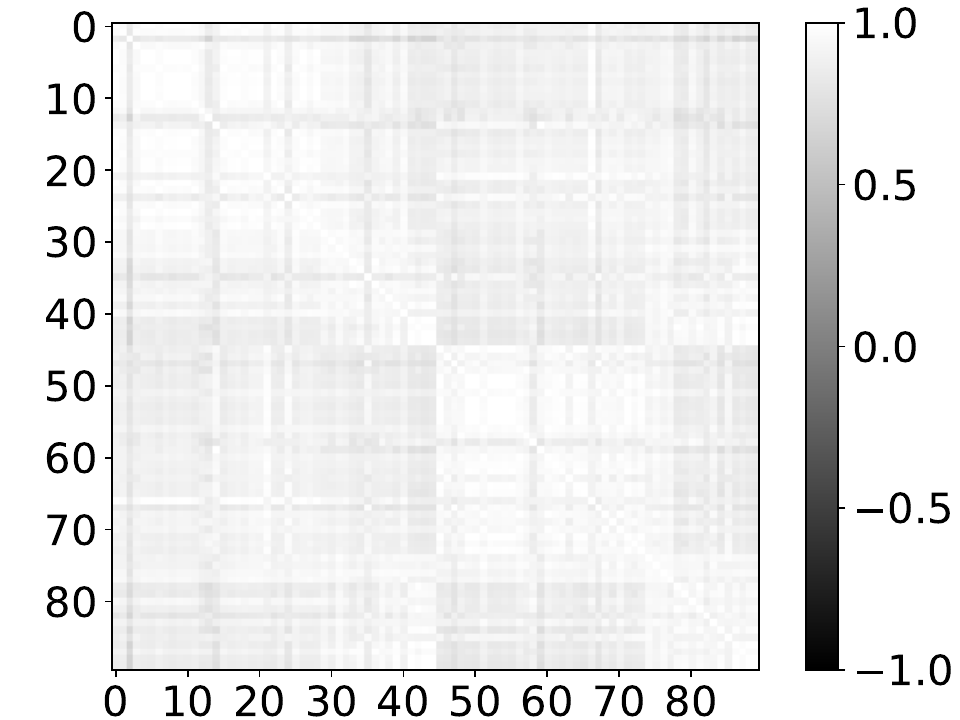}
        \caption{Single training domain.}
        \label{fig:heatmap1}
    \end{subfigure}
    \hfill
    \begin{subfigure}[t]{0.48\linewidth}
        \centering
        \includegraphics[width=3.8cm]{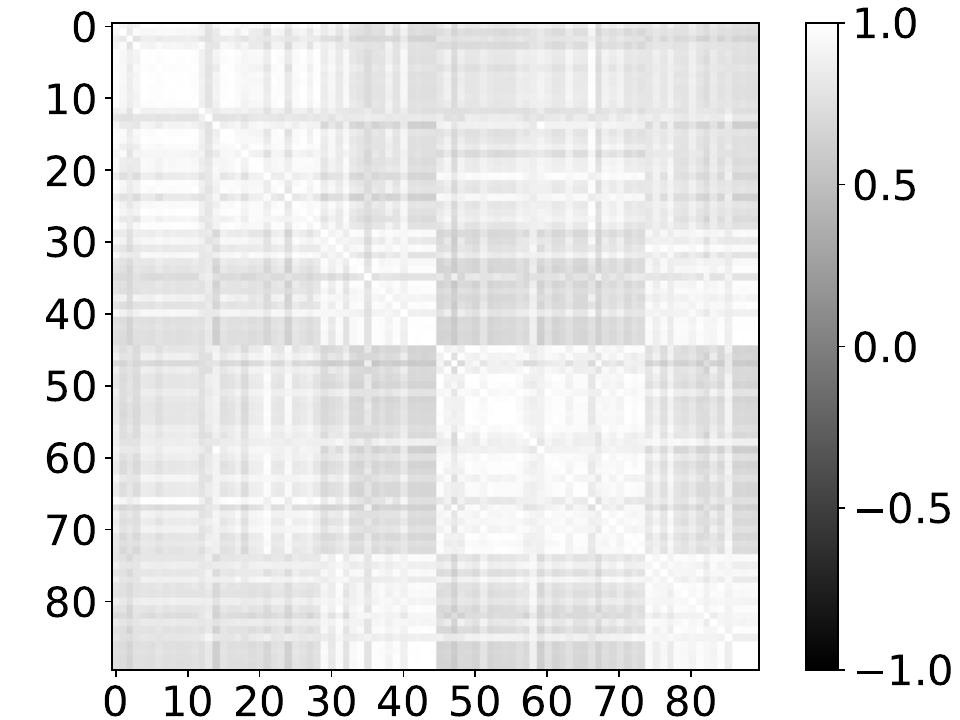}
        \caption{Four training domains.}
        \label{fig:heatmap2}
    \end{subfigure}
    \caption{Heatmaps of cosine similarities of relation embeddings in test with varying number of training domains. More domains shows stronger embedding differentiation.}
    \label{fig:heatmaps}
    \vspace{-15pt}
\end{wrapfigure}
\textbf{Testing Zero-shot Meta-Learning Ability of \ourmethod.}
In this experiment, we investigate the meta-learning capability of \ourmethod. 
That is, following \citet[pp.\ 4]{thrun1998learning}, we can define that a \revision{fully inductive model} is able to learn-to-learn when increasing the number of learning domains improves the inference performance on new domains. Here, we aim to understand the impact of including more domains in the zero-shot performance of \ourmethod. We use the WikiTopics dataset \cite{gao2023double}, which divides the Wikidata-5M~\cite{wang2021kepler} into 11 distinct, non-overlapping domains, resulting in 11 unique KGs, each containing different and non overlapping relation and entity sets. We then proceed by training \ourmethod on a randomly sampled increasing number of domains (until early-stopped) and evaluate their performance on the remaining domains. \Cref{entity_meta} show that as the number of domains in the training increases, the zero-shot inference capability of \ourmethod improves. Additionally, \Cref{fig:heatmap1,fig:heatmap2} illustrate the cosine similarities between the relation embeddings after training, averaged across different query relations, when training with one or four domains, respectively. By adding domains in the training mixture, the model can see more invariances and the relation embeddings get more distinct and expressive, which then translates into more accurate prediction in unseen domains.

% \begin{wrapfigure}[15]{}{0.6\textwidth}
% \vspace{-10pt}
%     \centering
%     \begin{subfigure}[t]{0.48\linewidth}
%         \centering
%         \includegraphics[width=3.8cm]{figures/cos1-2.pdf}
%         \caption{Single training domain.}
%         \label{fig:heatmap1}
%     \end{subfigure}
%     \hfill
%     \begin{subfigure}[t]{0.48\linewidth}
%         \centering
%         \includegraphics[width=3.8cm]{figures/cos4-2.pdf}
%         \caption{Four training domains.}
%         \label{fig:heatmap2}
%     \end{subfigure}
%     \caption{Heatmaps of cosine similarities of relation embeddings in test with varying number of training domains. Increasing the number of domains shows stronger embedding differentiation.}
%     \label{fig:heatmaps}
%     \vspace{-15pt}
% \end{wrapfigure}

\begin{table*}[t]
\vspace{-5pt}
\caption{\ourmethod Entity Prediction Hits@10 on WiKiTopics with varying pre-training domains.}
\centering
\label{entity_meta}
\begin{adjustbox}{width=\textwidth}
\begin{tabular}{l|c|cccccccc}
\toprule
\# pre-train domains & Average & Art & Award & Edu & Health & Infra & Sci & Sport & Tax \\
\midrule
1 domain & 0.413 & 0.380 & 0.428 & 0.263 & 0.601 & 0.556 & 0.369 & \textbf{0.385} & 0.324 \\
2 domains & 0.459 & \textbf{0.439} & 0.462 & 0.282 & 0.700 & 0.656 & 0.423 & 0.366 & 0.358 \\
3 domains & 0.458 & 0.432 & \textbf{0.472} & \textbf{0.300} & 0.623 & 0.679 & 0.431 & 0.376 & 0.361 \\
4 domains & \textbf{0.483} & 0.432 & 0.464 & 0.294 & \textbf{0.744} & \textbf{0.724} & \textbf{0.448} & 0.382 & \textbf{0.396} \\
\bottomrule
\end{tabular}
\end{adjustbox}
\vspace{-10pt}
\end{table*}

\subsection{Relation and Entity Prediction with LLMs}
To evaluate whether long-context LLMs can make zero-shot relation \revision{and entity} predictions based on the structural patterns of the graph, we design three different tasks
\revision{containing both relation prediction and entity prediction.} In these tasks, all triplets (head entity, relation, tail entity) of the KG are provided in the prompt. The tasks differ in how entities and relations are represented. \revisionlog{The goal of these experiments is to underscore the shortcomings of LLMs in capturing the structural patterns, which is due to being sensitive to permutations of IDs and relying on semantic information.}

\revisionlog{\textbf{Experiment Setup.} Since all the triplets of KGs are provided in the prompt (approximately 700,000 tokens per query), we selected the Gemini Pro models (Gemini-1.5-flash and Gemini-1.5-pro~\citep{geminiteam2024gemini}) as the baseline because they offer the largest token size (1,048,576 tokens for Gemini-1.5-flash), whereas other models, such as GPT-4 (32,000 tokens) and Llama 3.1-70B Instruct (128,000 tokens), have limited token capacities. Due to the cost of the Gemini Pro model, we evaluated on 30 samples from the CoDEx-S dataset~\cite{safavi2020codex}. The prompts used followed previous works~\cite{yao2023exploring, shu2024knowledge}. We report in the following relation prediction results, while entity prediction results and further experimental details, including prompts, can be found in \Cref{appx:LLMexp}.}

\begin{wrapfigure}{r}{0.5\textwidth}
\vspace{-2em}
\centering
\includegraphics[width=0.5\textwidth]{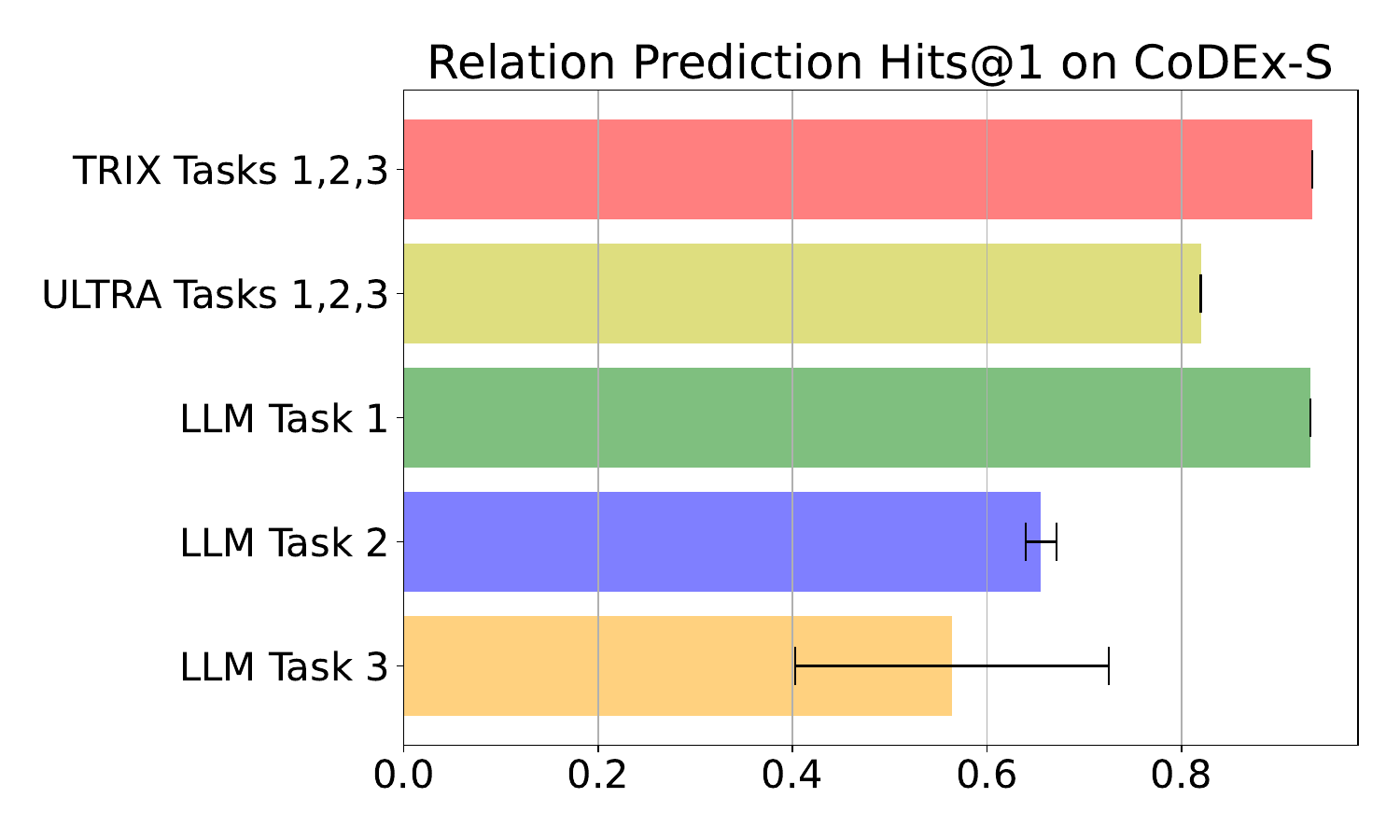}
\caption{Relation prediction Hits@1 of 
%SOTA long-context LLM (
Gemini-1.5-Pro on three tasks against \revision{fully inductive models}. Double-equivariant graph models always give consistent performance but LLM performance changes drastically with metasyntactic tokens (Task 2) and ID permutations (Task 3).}
\vspace{-22pt}
\label{fig:llmplot}
\end{wrapfigure}
%
%
% \begin{enumerate}[leftmargin=0.25in] 
%     \item[T1.] Task 1 is an in-domain prediction task. The entities and relations are expressed by \revision{their surface names} in natural language, which means that the long-context LLM is queried in the way it was trained on. This task evaluates the in-domain prediction capacity of long-context LLMs.
%     \item[T2.] Task 2 is an out-of-domain prediction task. The neighbor entities of the head entity in the query and the relations that connect the head entity in the query with its neighbors are replaced with metasyntactic words~\cite{metasyntactic} like \texttt{foo}, \texttt{bar} and \texttt{baz} across the whole KG to simulate information from a new domain (with new entities and relations) \revision{while other entities and relations are still expressed by their surface names in natural language}. This task tests whether long-context LLMs reason on the KG's relational information for the inductive predictions of the metasyntactic words, or if they just simply rely on its pre-trained semantic information.
%     \item[T3.] Task 3 is a double-equivariant predictiontest. All the entities and relations are expressed with IDs (e.g. ``entity 64'', ``relation 16'', ``relation 22'') as in \citet{shu2024knowledge}. The mappings from entities and relations to their IDs are given to the LLM and the IDs are shuffled in each test run. This task tests whether long-context LLMs are (double) equivariant to entity and relation ID permutations.
% \end{enumerate}

\textbf{Task 1: In-domain LLM predictions.}
The entities and relations are expressed by \revision{their names} in natural language, which means that the LLM is queried in the way it was trained on. This task evaluates the in-domain prediction capacity of long-context LLMs. \Cref{fig:llmplot} (and \Cref{appx:T1}) shows that the LLM handles the in-domain relation prediction nearly as well as \ourmethod and better than ULTRA. 

% \vspace{-2pt}
\textbf{Task 2: Out-of-domain LLM predictions.}
The neighbor entities of the head entity and the relations that connect the head entity with its neighbors are replaced with metasyntactic words~\cite{metasyntactic} like \texttt{foo}, \texttt{bar} and \texttt{baz} across the whole KG to simulate information from a new domain (with new entities and relations) \revision{while other entities and relations are still expressed by their names in natural language}. This task tests whether long-context LLMs reason on the KG's structural information for the inductive predictions of the metasyntactic words, or if they just simply rely on its pre-trained semantic information. \Cref{fig:llmplot} (details in \Cref{appx:T2}) shows that with metasyntactic words, Gemini-1.5-pro fails to make consistent accurate predictions. Hence, the good performance in Task 1 likely came from its in-distribution syntax, which means that even SOTA long-context LLMs struggle making relation predictions in new KGs domains using only relational information.

% \vspace{-2pt}
\textbf{Task 3: Double-equivariant LLM predictions.}
All the entities and relations are expressed with IDs (e.g. ``entity 64'', ``relation 22'') as in \citet{shu2024knowledge}. The mappings from entities and relations to their IDs are given to the LLM and the IDs are shuffled in each test run. This task tests whether long-context LLMs are (double) equivariant to entity and relation ID permutations.
\Cref{fig:llmplot} (details in \Cref{appx:T3}) shows that the performance of a long-context LLM changes drastically with the permutation of relation and entity IDs. Only for $38.5\%$ of all queries the model makes consistent predictions across 3 runs, which indicates LLM is quite sensitive to the input permutations of the KG. With the whole graph sent as an edge list, the task resembles long text understanding and retrieval tests akin to ``a needle in a haystack''~\cite{babilong,zeroscrolls,longbench} where LLMs show increasingly better performance on context lengths up to 128k tokens. However, our results indicate that such (often synthetic) benchmarks might be overestimating real long-context reasoning capabilities of LLMs (let alone adding a simple relation prediction task on top of the input context).

%\vspace{-5pt}
\section{Conclusion}
%\vspace{-5pt}
In this paper we considered the fully inductive link prediction task in KGs. We identified the open challenges in existing fully inductive models, and proposed \ourmethod, a novel architecture designed to improve expressiveness and support efficient relation prediction tasks. Through comprehensive experiments spanning 57 diverse KGs datasets, we demonstrate that increased expressiveness translates into better performance. Additionally, our experimental study sheds light on the limitations of LLMs in exploiting graph information in new domains for entity and relation prediction tasks.

% \vspace{-10pt}
% \paragraph{Limitations and Impact Statement.}
% Despite the promising results of \ourmethod in the inductive settings we consider, there are still aspects that can be refined in future research.
% For example, while \ourmethod aims to enhance expressivity compared to existing methods, there might still be scenarios where its representation power is limited, leading to challenges in capturing complex relationships.
% Furthermore, the relation adjacency matrix size introduces additional computational complexity during training, potentially requiring more resources and time compared to simpler models. Finally, while not designed for malicious intent, there is a possibility of \ourmethod being repurposed for activities like uncovering sensitive information, raising questions about data privacy and responsible AI use.
% For natbib users:

\newpage
\clearpage
\section*{Acknowledgments}
This work was funded in part by the National Science Foundation (NSF) awards, CCF-1918483, CAREER IIS-1943364 and CNS-2212160, Amazon Research Award, AnalytiXIN, and the Wabash Heartland Innovation Network (WHIN), Ford, NVidia, CISCO, and Amazon. Computing infrastructure was supported in part by CNS-1925001 (CloudBank). This work was supported in part by AMD under the AMD HPC Fund program. %Any opinions, findings and conclusions or recommendations expressed in this material are those of the authors and do not necessarily reflect the views of the sponsors.

\bibliographystyle{unsrtnat}
\bibliography{ref}

\begin{thebibliography}{58}
\providecommand{\natexlab}[1]{#1}
\providecommand{\url}[1]{\texttt{#1}}
\expandafter\ifx\csname urlstyle\endcsname\relax
  \providecommand{\doi}[1]{doi: #1}\else
  \providecommand{\doi}{doi: \begingroup \urlstyle{rm}\Url}\fi

\bibitem[Galkin et~al.(2024)Galkin, Yuan, Mostafa, Tang, and Zhu]{galkin2023towards}
Mikhail Galkin, Xinyu Yuan, Hesham Mostafa, Jian Tang, and Zhaocheng Zhu.
\newblock Towards foundation models for knowledge graph reasoning.
\newblock In \emph{The Twelfth International Conference on Learning Representations}, 2024.

\bibitem[Gao et~al.(2023)Gao, Zhou, Zhou, and Ribeiro]{gao2023double}
Jianfei Gao, Yangze Zhou, Jincheng Zhou, and Bruno Ribeiro.
\newblock Double equivariance for inductive link prediction for both new nodes and new relation types.
\newblock \emph{arXiv preprint arXiv:2302.01313}, 2023.

\bibitem[Lee et~al.(2023)Lee, Chung, and Whang]{lee2023ingram}
Jaejun Lee, Chanyoung Chung, and Joyce~Jiyoung Whang.
\newblock Ingram: Inductive knowledge graph embedding via relation graphs.
\newblock In \emph{Proceedings of the 40th International Conference on Machine Learning}, pages 18796--18809, 2023.

\bibitem[Bouritsas et~al.(2022)Bouritsas, Frasca, Zafeiriou, and Bronstein]{bouritsas2020improving}
Giorgos Bouritsas, Fabrizio Frasca, Stefanos Zafeiriou, and Michael~M Bronstein.
\newblock Improving graph neural network expressivity via subgraph isomorphism counting.
\newblock \emph{IEEE Transactions on Pattern Analysis and Machine Intelligence}, 2022.

\bibitem[Bevilacqua et~al.(2022)Bevilacqua, Frasca, Lim, Srinivasan, Cai, Balamurugan, Bronstein, and Maron]{bevilacqua2022equivariant}
Beatrice Bevilacqua, Fabrizio Frasca, Derek Lim, Balasubramaniam Srinivasan, Chen Cai, Gopinath Balamurugan, Michael~M Bronstein, and Haggai Maron.
\newblock Equivariant subgraph aggregation networks.
\newblock In \emph{International Conference on Learning Representations}, 2022.

\bibitem[Zhang et~al.(2023{\natexlab{a}})Zhang, Luo, Wang, and He]{zhang2023rethinking}
Bohang Zhang, Shengjie Luo, Liwei Wang, and Di~He.
\newblock Rethinking the expressive power of {GNN}s via graph biconnectivity.
\newblock In \emph{The Eleventh International Conference on Learning Representations}, 2023{\natexlab{a}}.

\bibitem[Puny et~al.(2023)Puny, Lim, Kiani, Maron, and Lipman]{puny2023equivariant}
Omri Puny, Derek Lim, Bobak Kiani, Haggai Maron, and Yaron Lipman.
\newblock Equivariant polynomials for graph neural networks.
\newblock In \emph{International Conference on Machine Learning}, pages 28191--28222. PMLR, 2023.

\bibitem[Teru et~al.(2020)Teru, Denis, and Hamilton]{teru2020inductive}
Komal Teru, Etienne Denis, and Will Hamilton.
\newblock Inductive relation prediction by subgraph reasoning.
\newblock In \emph{International Conference on Machine Learning}, pages 9448--9457. PMLR, 2020.

\bibitem[Cui et~al.(2021)Cui, Kapanipathi, Talamadupula, Gao, and Ji]{cui2021type}
Zijun Cui, Pavan Kapanipathi, Kartik Talamadupula, Tian Gao, and Qiang Ji.
\newblock Type-augmented relation prediction in knowledge graphs.
\newblock In \emph{Proceedings of the AAAI Conference on Artificial Intelligence}, volume~35, pages 7151--7159, 2021.

\bibitem[Liang et~al.(2024)Liang, Meng, Zhou, Tu, Wang, Liu, Liu, Zhao, Dong, and Liu]{liang2024mines}
Ke~Liang, Lingyuan Meng, Sihang Zhou, Wenxuan Tu, Siwei Wang, Yue Liu, Meng Liu, Long Zhao, Xiangjun Dong, and Xinwang Liu.
\newblock Mines: Message intercommunication for inductive relation reasoning over neighbor-enhanced subgraphs.
\newblock In \emph{Proceedings of the AAAI Conference on Artificial Intelligence}, volume~38, pages 10645--10653, 2024.

\bibitem[Su et~al.(2024)Su, Wang, Miao, and Cui]{su2024anchoring}
Zhixiang Su, Di~Wang, Chunyan Miao, and Lizhen Cui.
\newblock Anchoring path for inductive relation prediction in knowledge graphs.
\newblock In \emph{Proceedings of the AAAI Conference on Artificial Intelligence}, volume~38, pages 9011--9018, 2024.

\bibitem[Zhang et~al.(2023{\natexlab{b}})Zhang, Chen, Zhang, and Chen]{zhang2023making}
Yichi Zhang, Zhuo Chen, Wen Zhang, and Huajun Chen.
\newblock Making large language models perform better in knowledge graph completion.
\newblock \emph{arXiv preprint arXiv:2310.06671}, 2023{\natexlab{b}}.

\bibitem[Chen et~al.(2023)Chen, Wang, Sun, Li, and Lam]{chen2023dipping}
Chen Chen, Yufei Wang, Aixin Sun, Bing Li, and Kwok-Yan Lam.
\newblock Dipping plms sauce: Bridging structure and text for effective knowledge graph completion via conditional soft prompting.
\newblock \emph{arXiv preprint arXiv:2307.01709}, 2023.

\bibitem[Shu et~al.(2024)Shu, Chen, Jin, Zhang, Du, and Zhang]{shu2024knowledge}
Dong Shu, Tianle Chen, Mingyu Jin, Yiting Zhang, Mengnan Du, and Yongfeng Zhang.
\newblock Knowledge graph large language model (kg-llm) for link prediction.
\newblock \emph{arXiv preprint arXiv:2403.07311}, 2024.

\bibitem[Xu et~al.(2024)Xu, Zhang, Lin, Wu, Zhu, Xu, Zhao, Zheng, and Chen]{xu2024multi}
Derong Xu, Ziheng Zhang, Zhenxi Lin, Xian Wu, Zhihong Zhu, Tong Xu, Xiangyu Zhao, Yefeng Zheng, and Enhong Chen.
\newblock Multi-perspective improvement of knowledge graph completion with large language models.
\newblock \emph{arXiv preprint arXiv:2403.01972}, 2024.

\bibitem[Wei et~al.(2024)Wei, Huang, Kwok, and Zhang]{wei2024kicgpt}
Yanbin Wei, Qiushi Huang, James~T Kwok, and Yu~Zhang.
\newblock Kicgpt: Large language model with knowledge in context for knowledge graph completion.
\newblock \emph{arXiv preprint arXiv:2402.02389}, 2024.

\bibitem[Schlichtkrull et~al.(2018)Schlichtkrull, Kipf, Bloem, Van Den~Berg, Titov, and Welling]{schlichtkrull2018modeling}
Michael Schlichtkrull, Thomas~N Kipf, Peter Bloem, Rianne Van Den~Berg, Ivan Titov, and Max Welling.
\newblock Modeling relational data with graph convolutional networks.
\newblock In \emph{The semantic web: 15th international conference, ESWC 2018, Heraklion, Crete, Greece, June 3--7, 2018, proceedings 15}, pages 593--607. Springer, 2018.

\bibitem[Zhu et~al.(2021)Zhu, Zhang, Xhonneux, and Tang]{zhu2021neural}
Zhaocheng Zhu, Zuobai Zhang, Louis-Pascal Xhonneux, and Jian Tang.
\newblock Neural bellman-ford networks: A general graph neural network framework for link prediction.
\newblock \emph{Advances in Neural Information Processing Systems}, 34:\penalty0 29476--29490, 2021.

\bibitem[Yang et~al.(2017)Yang, Yang, and Cohen]{yang2017differentiable}
Fan Yang, Zhilin Yang, and William~W Cohen.
\newblock Differentiable learning of logical rules for knowledge base reasoning.
\newblock \emph{Advances in neural information processing systems}, 30, 2017.

\bibitem[Sadeghian et~al.(2019)Sadeghian, Armandpour, Ding, and Wang]{sadeghian2019drum}
Ali Sadeghian, Mohammadreza Armandpour, Patrick Ding, and Daisy~Zhe Wang.
\newblock Drum: End-to-end differentiable rule mining on knowledge graphs.
\newblock \emph{Advances in Neural Information Processing Systems}, 32, 2019.

\bibitem[Qu et~al.(2020)Qu, Chen, Xhonneux, Bengio, and Tang]{qu2020rnnlogic}
Meng Qu, Junkun Chen, Louis-Pascal Xhonneux, Yoshua Bengio, and Jian Tang.
\newblock Rnnlogic: Learning logic rules for reasoning on knowledge graphs.
\newblock \emph{arXiv preprint arXiv:2010.04029}, 2020.

\bibitem[Cheng et~al.(2022)Cheng, Liu, Wang, and Sun]{cheng2022rlogic}
Kewei Cheng, Jiahao Liu, Wei Wang, and Yizhou Sun.
\newblock Rlogic: Recursive logical rule learning from knowledge graphs.
\newblock In \emph{Proceedings of the 28th ACM SIGKDD Conference on Knowledge Discovery and Data Mining}, pages 179--189, 2022.

\bibitem[Shengyuan et~al.(2024)Shengyuan, Cai, Fang, Huang, and Sun]{shengyuan2024differentiable}
Chen Shengyuan, Yunfeng Cai, Huang Fang, Xiao Huang, and Mingming Sun.
\newblock Differentiable neuro-symbolic reasoning on large-scale knowledge graphs.
\newblock \emph{Advances in Neural Information Processing Systems}, 36, 2024.

\bibitem[Liu et~al.(2021)Liu, Grau, Horrocks, and Kostylev]{liu2021indigo}
Shuwen Liu, Bernardo Grau, Ian Horrocks, and Egor Kostylev.
\newblock Indigo: Gnn-based inductive knowledge graph completion using pair-wise encoding.
\newblock \emph{Advances in Neural Information Processing Systems}, 34:\penalty0 2034--2045, 2021.

\bibitem[Cucala et~al.(2022)Cucala, Grau, Kostylev, and Motik]{tena2022explainable}
David Jaime~Tena Cucala, Bernardo~Cuenca Grau, Egor~V. Kostylev, and Boris Motik.
\newblock Explainable {GNN}-based models over knowledge graphs.
\newblock In \emph{International Conference on Learning Representations}, 2022.

\bibitem[Maron et~al.(2020)Maron, Litany, Chechik, and Fetaya]{maron2020learning}
Haggai Maron, Or~Litany, Gal Chechik, and Ethan Fetaya.
\newblock On learning sets of symmetric elements.
\newblock In \emph{International conference on machine learning}, pages 6734--6744. PMLR, 2020.

\bibitem[Li et~al.(2024)Li, Ling, Zhang, and Zhao]{li2024condensed}
Mingchen Li, Chen Ling, Rui Zhang, and Liang Zhao.
\newblock A condensed transition graph framework for zero-shot link prediction with large language models.
\newblock \emph{arXiv preprint arXiv:2402.10779}, 2024.

\bibitem[Rossi et~al.(2021)Rossi, Barbosa, Firmani, Matinata, and Merialdo]{rossi2021knowledge}
Andrea Rossi, Denilson Barbosa, Donatella Firmani, Antonio Matinata, and Paolo Merialdo.
\newblock Knowledge graph embedding for link prediction: A comparative analysis.
\newblock \emph{ACM Transactions on Knowledge Discovery from Data (TKDD)}, 15\penalty0 (2):\penalty0 1--49, 2021.

\bibitem[Bordes et~al.(2013)Bordes, Usunier, Garcia-Duran, Weston, and Yakhnenko]{bordes2013translating}
Antoine Bordes, Nicolas Usunier, Alberto Garcia-Duran, Jason Weston, and Oksana Yakhnenko.
\newblock Translating embeddings for modeling multi-relational data.
\newblock \emph{Advances in neural information processing systems}, 26, 2013.

\bibitem[Zhang et~al.(2021)Zhang, Li, Xia, Wang, and Jin]{zhang2021labeling}
Muhan Zhang, Pan Li, Yinglong Xia, Kai Wang, and Long Jin.
\newblock Labeling trick: A theory of using graph neural networks for multi-node representation learning.
\newblock \emph{Advances in Neural Information Processing Systems}, 34:\penalty0 9061--9073, 2021.

\bibitem[Lv et~al.(2020{\natexlab{a}})Lv, Han, Hou, Li, Liu, Zhang, Zhang, Kong, and Wu]{lv2020dynamic}
Xin Lv, Xu~Han, Lei Hou, Juanzi Li, Zhiyuan Liu, Wei Zhang, Yichi Zhang, Hao Kong, and Suhui Wu.
\newblock Dynamic anticipation and completion for multi-hop reasoning over sparse knowledge graph.
\newblock \emph{arXiv preprint arXiv:2010.01899}, 2020{\natexlab{a}}.

\bibitem[Thrun and Pratt(1998)]{thrun1998learning}
Sebastian Thrun and Lorien Pratt.
\newblock Learning to learn: Introduction and overview.
\newblock In \emph{Learning to learn}, pages 3--17. Springer, 1998.

\bibitem[Wang et~al.(2021)Wang, Gao, Zhu, Zhang, Liu, Li, and Tang]{wang2021kepler}
Xiaozhi Wang, Tianyu Gao, Zhaocheng Zhu, Zhengyan Zhang, Zhiyuan Liu, Juanzi Li, and Jian Tang.
\newblock Kepler: A unified model for knowledge embedding and pre-trained language representation.
\newblock \emph{Transactions of the Association for Computational Linguistics}, 9:\penalty0 176--194, 2021.

\bibitem[Team et~al.(2024)Team, Reid, et~al.]{geminiteam2024gemini}
Gemini Team, Machel Reid, et~al.
\newblock Gemini 1.5: Unlocking multimodal understanding across millions of tokens of context.
\newblock \emph{arXiv preprint arXiv:2403.05530}, 2024.

\bibitem[Safavi and Koutra(2020)]{safavi2020codex}
Tara Safavi and Danai Koutra.
\newblock Codex: A comprehensive knowledge graph completion benchmark.
\newblock In \emph{Proceedings of the 2020 Conference on Empirical Methods in Natural Language Processing (EMNLP)}, pages 8328--8350. Association for Computational Linguistics, 2020.

\bibitem[Yao et~al.(2023)Yao, Peng, Mao, and Luo]{yao2023exploring}
Liang Yao, Jiazhen Peng, Chengsheng Mao, and Yuan Luo.
\newblock Exploring large language models for knowledge graph completion.
\newblock \emph{arXiv preprint arXiv:2308.13916}, 2023.

\bibitem[{Wikipedia}(2024)]{metasyntactic}
{Wikipedia}.
\newblock Metasyntactic variable - wikipedia.
\newblock \url{https://en.wikipedia.org/wiki/Metasyntactic_variable}, 2024.
\newblock Accessed: date-of-access.

\bibitem[Kuratov et~al.(2024)Kuratov, Bulatov, Anokhin, Sorokin, Sorokin, and Burtsev]{babilong}
Yuri Kuratov, Aydar Bulatov, Petr Anokhin, Dmitry Sorokin, Artyom Sorokin, and Mikhail Burtsev.
\newblock In search of needles in a 11m haystack: Recurrent memory finds what llms miss.
\newblock \emph{arXiv preprint arXiv:2402.10790}, 2024.

\bibitem[Shaham et~al.(2023)Shaham, Ivgi, Efrat, Berant, and Levy]{zeroscrolls}
Uri Shaham, Maor Ivgi, Avia Efrat, Jonathan Berant, and Omer Levy.
\newblock {Z}ero{SCROLLS}: A zero-shot benchmark for long text understanding.
\newblock In \emph{Findings of the Association for Computational Linguistics: EMNLP 2023}, pages 7977--7989. Association for Computational Linguistics, 2023.

\bibitem[Bai et~al.(2023)Bai, Lv, Zhang, Lyu, Tang, Huang, Du, Liu, Zeng, Hou, Dong, Tang, and Li]{longbench}
Yushi Bai, Xin Lv, Jiajie Zhang, Hongchang Lyu, Jiankai Tang, Zhidian Huang, Zhengxiao Du, Xiao Liu, Aohan Zeng, Lei Hou, Yuxiao Dong, Jie Tang, and Juanzi Li.
\newblock Longbench: A bilingual, multitask benchmark for long context understanding.
\newblock \emph{arXiv preprint arXiv:2308.14508}, 2023.

\bibitem[Morris et~al.(2019)Morris, Ritzert, Fey, Hamilton, Lenssen, Rattan, and Grohe]{morris2019weisfeiler}
Christopher Morris, Martin Ritzert, Matthias Fey, William~L Hamilton, Jan~Eric Lenssen, Gaurav Rattan, and Martin Grohe.
\newblock Weisfeiler and leman go neural: Higher-order graph neural networks.
\newblock In \emph{Proceedings of the AAAI conference on artificial intelligence}, volume~33, pages 4602--4609, 2019.

\bibitem[Barcel{\'o} et~al.(2022)Barcel{\'o}, Galkin, Morris, and Orth]{barcelo2022weisfeiler}
Pablo Barcel{\'o}, Mikhail Galkin, Christopher Morris, and Miguel~Romero Orth.
\newblock Weisfeiler and leman go relational.
\newblock In \emph{Learning on Graphs Conference}, pages 46--1. PMLR, 2022.

\bibitem[Chamberlain et~al.(2022)Chamberlain, Shirobokov, Rossi, Frasca, Markovich, Hammerla, Bronstein, and Hansmire]{chamberlain2022graph}
Benjamin~Paul Chamberlain, Sergey Shirobokov, Emanuele Rossi, Fabrizio Frasca, Thomas Markovich, Nils~Yannick Hammerla, Michael~M Bronstein, and Max Hansmire.
\newblock Graph neural networks for link prediction with subgraph sketching.
\newblock In \emph{The eleventh international conference on learning representations}, 2022.

\bibitem[Paszke et~al.(2019)Paszke, Gross, Massa, Lerer, Bradbury, Chanan, Killeen, Lin, Gimelshein, Antiga, Desmaison, Köpf, Yang, DeVito, Raison, Tejani, Chilamkurthy, Steiner, Fang, Bai, and Chintala]{paszke2019pytorch}
Adam Paszke, Sam Gross, Francisco Massa, Adam Lerer, James Bradbury, Gregory Chanan, Trevor Killeen, Zeming Lin, Natalia Gimelshein, Luca Antiga, Alban Desmaison, Andreas Köpf, Edward Yang, Zach DeVito, Martin Raison, Alykhan Tejani, Sasank Chilamkurthy, Benoit Steiner, Lu~Fang, Junjie Bai, and Soumith Chintala.
\newblock Pytorch: An imperative style, high-performance deep learning library, 2019.

\bibitem[Fey and Lenssen(2019)]{fey2019fast}
Matthias Fey and Jan~Eric Lenssen.
\newblock Fast graph representation learning with pytorch geometric, 2019.

\bibitem[Biewald(2020)]{wandb}
Lukas Biewald.
\newblock Experiment tracking with weights and biases, 2020.
\newblock URL \url{https://www.wandb.com/}.
\newblock Software available from wandb.com.

\bibitem[Lv et~al.(2020{\natexlab{b}})Lv, Han, Hou, Li, Liu, Zhang, Zhang, Kong, and Wu]{dackgr}
Xin Lv, Xu~Han, Lei Hou, Juanzi Li, Zhiyuan Liu, Wei Zhang, Yichi Zhang, Hao Kong, and Suhui Wu.
\newblock Dynamic anticipation and completion for multi-hop reasoning over sparse knowledge graph.
\newblock In \emph{Proceedings of the 2020 Conference on Empirical Methods in Natural Language Processing (EMNLP)}, pages 5694--5703. Association for Computational Linguistics, 2020{\natexlab{b}}.
\newblock \doi{10.18653/v1/2020.emnlp-main.459}.

\bibitem[Toutanova and Chen(2015)]{fb15k237}
Kristina Toutanova and Danqi Chen.
\newblock Observed versus latent features for knowledge base and text inference.
\newblock In \emph{Proceedings of the 3rd workshop on continuous vector space models and their compositionality}, pages 57--66, 2015.

\bibitem[Dettmers et~al.(2018)Dettmers, Minervini, Stenetorp, and Riedel]{wn18rr}
Tim Dettmers, Pasquale Minervini, Pontus Stenetorp, and Sebastian Riedel.
\newblock Convolutional 2d knowledge graph embeddings.
\newblock In \emph{Proceedings of the AAAI conference on artificial intelligence}, volume~32, 2018.

\bibitem[Chen et~al.(2021)Chen, Minervini, Riedel, and Stenetorp]{ssl_rp}
Yihong Chen, Pasquale Minervini, Sebastian Riedel, and Pontus Stenetorp.
\newblock Relation prediction as an auxiliary training objective for improving multi-relational graph representations.
\newblock In \emph{3rd Conference on Automated Knowledge Base Construction}, 2021.

\bibitem[Himmelstein et~al.(2017)Himmelstein, Lizee, Hessler, Brueggeman, Chen, Hadley, Green, Khankhanian, and Baranzini]{hetionet}
Daniel~Scott Himmelstein, Antoine Lizee, Christine Hessler, Leo Brueggeman, Sabrina~L Chen, Dexter Hadley, Ari Green, Pouya Khankhanian, and Sergio~E Baranzini.
\newblock Systematic integration of biomedical knowledge prioritizes drugs for repurposing.
\newblock \emph{Elife}, 6:\penalty0 e26726, 2017.

\bibitem[Xiong et~al.(2017)Xiong, Hoang, and Wang]{nell995}
Wenhan Xiong, Thien Hoang, and William~Yang Wang.
\newblock {D}eep{P}ath: A reinforcement learning method for knowledge graph reasoning.
\newblock In \emph{Proceedings of the 2017 Conference on Empirical Methods in Natural Language Processing}, pages 564--573. Association for Computational Linguistics, 2017.

\bibitem[Malaviya et~al.(2020)Malaviya, Bhagavatula, Bosselut, and Choi]{cnet100k}
Chaitanya Malaviya, Chandra Bhagavatula, Antoine Bosselut, and Yejin Choi.
\newblock Commonsense knowledge base completion with structural and semantic context.
\newblock In \emph{Proceedings of the AAAI conference on artificial intelligence}, volume~34, pages 2925--2933, 2020.

\bibitem[Ding et~al.(2018)Ding, Wang, Wang, and Guo]{dbp100k}
Boyang Ding, Quan Wang, Bin Wang, and Li~Guo.
\newblock Improving knowledge graph embedding using simple constraints.
\newblock In \emph{Proceedings of the 56th Annual Meeting of the Association for Computational Linguistics (Volume 1: Long Papers)}, pages 110--121. Association for Computational Linguistics, 2018.

\bibitem[Mahdisoltani et~al.(2014)Mahdisoltani, Biega, and Suchanek]{yago310}
Farzaneh Mahdisoltani, Joanna Biega, and Fabian Suchanek.
\newblock Yago3: A knowledge base from multilingual wikipedias.
\newblock In \emph{7th biennial conference on innovative data systems research}. CIDR Conference, 2014.

\bibitem[Galkin et~al.(2022)Galkin, Berrendorf, and Hoyt]{ilpc}
Mikhail Galkin, Max Berrendorf, and Charles~Tapley Hoyt.
\newblock {An Open Challenge for Inductive Link Prediction on Knowledge Graphs}.
\newblock \emph{arXiv preprint arXiv:2203.01520}, 2022.

\bibitem[Hamaguchi et~al.(2017)Hamaguchi, Oiwa, Shimbo, and Matsumoto]{ham_bm}
Takuo Hamaguchi, Hidekazu Oiwa, Masashi Shimbo, and Yuji Matsumoto.
\newblock Knowledge transfer for out-of-knowledge-base entities : A graph neural network approach.
\newblock In \emph{Proceedings of the Twenty-Sixth International Joint Conference on Artificial Intelligence, {IJCAI-17}}, pages 1802--1808, 2017.
\newblock \doi{10.24963/ijcai.2017/250}.

\bibitem[Zhou et~al.(2023)Zhou, Bevilacqua, and Ribeiro]{mtdea}
Jincheng Zhou, Beatrice Bevilacqua, and Bruno Ribeiro.
\newblock An ood multi-task perspective for link prediction with new relation types and nodes.
\newblock \emph{arXiv preprint arXiv:2307.06046}, 2023.

\end{thebibliography}
% For bibLaTeX users:
% \printbibliography

\newpage
\appendix
\section{Pseudo Codes of Iterative Embedding Updates}
\Cref{entity_code} and \Cref{relation_code} show the iterative embedding updates in entity prediction task and relation prediction task respectively.

\begin{minipage}{\textwidth}
\begin{algorithm}[H]
\small
\caption{\ourmethod embedding updates for the entity prediction task}
\label{entity_code}
\begin{algorithmic}
\Require Query ($h$, $r$, ?); relation adjacency matrix $\mA_R$; entity adjacency matrix $\mA_V$; number of updates $L$
\Ensure Final entity embedding $\mX_{h,r}^{(L)}$
% \State $\mX_{h,r}^{(0)} = \text{INDICATOR}(h) \odot \mathbf{1}^{|V|\times d}$
% \State $\mZ_{h,r}^{(0)} = \text{INDICATOR}(r) \odot \mathbf{1}^{|R|\times d}$
\State $\mX_{h,r}^{(0)} = \text{INIT}_{V}(h)$ \Comment{Label $h$ with all-ones vector and the rest with all-zeros}
\State $\mZ_{h,r}^{(0)} = \text{INIT}_{R}(r)$ \Comment{Label $r$ with all-ones vector and the rest with all-zeros}
\For{$i \gets 1$ to $L$}
\For{$u \in V$}
\State $\mX_{h,r}^{(i)}(u) = \text{UP}_{V}^{(i)}\left(\mX_{h,r}^{(i-1)}(u), \text{AGG}_{V}^{(i)}\left(\text{MSG}_{V}^{(i)}(\mX_{h,r}^{(i-1)}(v), \mZ_{h,r}^{(i-1)}(r^\prime))|(u, r^\prime, v) \in \mA_V\right)\right)$
\EndFor
\For{$r^\prime \in R$}
\State $\mZ_{h,r}^{(i)}(r^\prime) = \text{UP}_{R}^{(i)}\left(\mZ_{h,r}^{(i-1)}(r^\prime), \text{AGG}_{R}^{(i)}\left(\text{MSG}_{R}^{(i)}(\mZ_{h,r}^{(i-1)}(r^{\prime\prime}), \mX_{h,r}^{(i)}(u))|(r^\prime,u,r^{\prime\prime})\in \mA_R\right)\right)$
\EndFor
\EndFor
\end{algorithmic}    
\end{algorithm}
\end{minipage}

\begin{minipage}{\textwidth}
\begin{algorithm}[H]
\small
\caption{\ourmethod embedding updates for the relation prediction task}
\label{relation_code}
\begin{algorithmic}
\Require Query ($h$, ?, $t$); relation adjacency matrix $\mA_R$; entity adjacency matrix $\mA_V$; number of updates $L$
\Ensure Final relation embedding $\mZ_{h,t}^{(L)}$
\State $\mZ_{h,t}^{(0)} = \mathbf{1}^{|R|\times d}$
\State $\mX_{h,t}^{(0)} = \text{INIT}(h, t)$ \Comment{Label $h$ with all-ones vector, $t$ with all-negative-ones and the rest with all-zeros}
% \State $\mX_{h,t}^{(0)} = \text{INDICATOR}(h, t) \odot \mathbf{1}^{|V|\times d}$
\For{$i \gets 1$ to $L$}
\For{$r \in R$}
\State $\mZ_{h,t}^{(i)}(r) = \text{UP}_{R}^{(i)}\left(\mZ_{h,t}^{(i-1)}(r), \text{AGG}_{R}^{(i)}\left(\text{MSG}_{R}^{(i)}(\mZ_{h,t}^{(i-1)}(r^\prime), \mX_{h,t}^{(i-1)}(u))|(r,u,r^\prime)\in \mA_R\right)\right)$
\EndFor
\For{$u \in V$}
\State $\mX_{h,t}^{(i)}(u) = \text{UP}_{V}^{(i)}\left(\mX_{h,t}^{(i-1)}(u), \text{AGG}_{V}^{(i)}\left(\text{MSG}_{V}^{(i)}(\mX_{h,t}^{(i-1)}(v), \mZ_{h,t}^{(i)}(r))|(u, r, v) \in \mA_V\right)\right)$
\EndFor
\EndFor
\end{algorithmic}    
\end{algorithm}
\end{minipage}

\section{Expressive Power}\label{app:expressive-power}
\label{proof_section}

\begin{wrapfigure}[14]{}{0.6\textwidth}
\vspace{-10pt}
    \centering
    \begin{subfigure}[t]{0.48\linewidth}
        \centering
        \includegraphics[width=3.8cm]{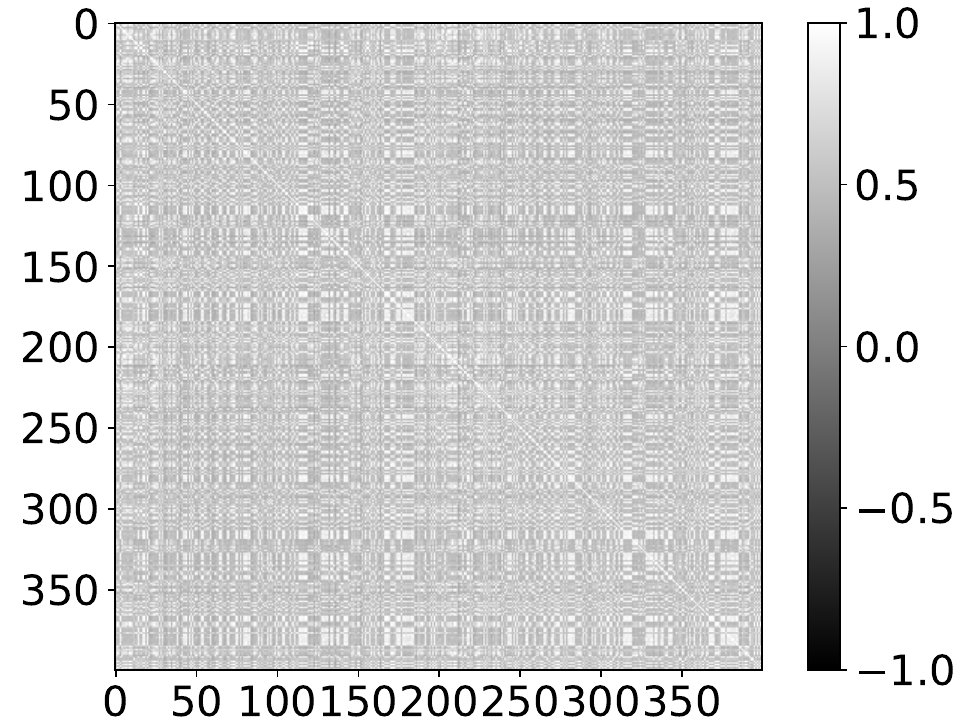}
        \caption{\ourmethod.}
        \label{fig:heatmaptrix}
    \end{subfigure}
    \hfill
    \begin{subfigure}[t]{0.48\linewidth}
        \centering
        \includegraphics[width=3.8cm]{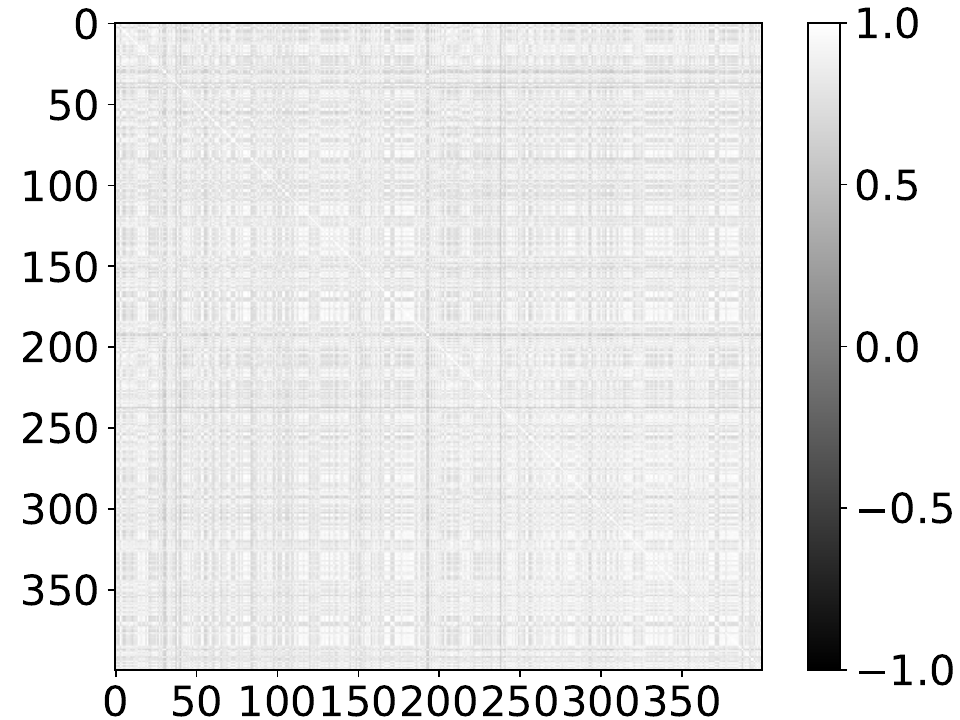}
        \caption{ULTRA.}
        \label{fig:heatmapultra}
    \end{subfigure}
    \caption{\revisionlog{Heatmaps of cosine similarities of relation embeddings on NELL995 with \ourmethod and ULTRA. \ourmethod gets more diverse relation embeddings than that of ULTRA.}}
    \label{fig:heatmapscompare}
    \vspace{-15pt}
\end{wrapfigure}

The expressive power of a GNN refers to its ability of distinguishing non-isomorphic graphs. Previous works~\cite{morris2019weisfeiler} have shown that the expressive power of GNN is bounded by Weisfeiler–Leman tests. Recent works~\cite{barcelo2022weisfeiler} also extend Weisfeiler–Leman tests on relational graphs. The expressive power in link prediction is limited due to the automorphic node problem~\cite{chamberlain2022graph}. Labeling trick~\cite{zhang2021labeling, zhu2021neural} breaks the symmetry of entity embedding in link prediction by assigning each node a unique feature vector based on its structural properties. 

\revisionlog{\Cref{sec:expressivity} shows that \ourmethod is more expressive than existing relation-graph based double-equivariant models. Before going into the details, we discuss the implications of this expressivity increase, such that the relation embeddings that become more distinguishable. \Cref{fig:heatmapscompare} shows the heatmaps of cosine similarity of relation embeddings of ULTRA and \ourmethod on the NELL995 dataset. The heatmap of \ourmethod shows more shade ranges than that of ULTRA, which indicates that the relation embeddings of \ourmethod are more distinct.
}

In the following part we discuss the expressive power of \ourmethod. 
We start by formally introducing the four intermediate relation adjacency matrices. Specifically, recall that the entity
adjacency matrix $\mA_V \in \mathbb{R}^{|V| \times |V| \times |R|}$ is obtained by stacking four relation adjacency matrices capturing the four roles $\mA_R^{hh}$ (head-head), $\mA_R^{tt}$ (tail-tail), $\mA_R^{ht}$ (head-tail), and $\mA_R^{th}$ (tail-head). These can be directly obtained by leveraging $\mE_h$ and $\mE_t$, that is:
\begin{align*}
\mA_R^{hh}[r_i, r_j, v_k] &= \mE_h[v_k, r_i] * \mE_h[v_k, r_j], \qquad \mA_R^{tt}[r_i, r_j, v_k] = \mE_t[v_k, r_i] * \mE_t[v_k, r_j],\\
\mA_R^{ht}[r_i, r_j, v_k] &= \mE_h[v_k, r_i] * \mE_t[v_k, r_j], \qquad
\mA_R^{th}[r_i, r_j, v_k] = \mE_t[v_k, r_i] * \mE_h[v_k, r_j].
\end{align*}

Next we will discuss the details of the proof.

\asexpr*

\begin{proof} 
\revision{
In this proof we consider the task of entity prediction. 

\revisionlog{\textbf{\ourmethod is at least as expressive as ULTRA.}} Suppose ULTRA first updates relation embedding with a $k_1$-layer GNN and then updates entity embedding with another $k_2$-layer GNN. We will show that there is a \ourmethod with $(k_1+k_2)$ rounds of iterative updates that will get the same embeddings as ULTRA. \ourmethod uses the same initial relation entity embedding as ULTRA and uses the all-one matrix as initial entity embedding. \revisionlog{To be consistent with ULTRA, \ourmethod uses NBFNet as the GNNLayer.}

For the first $k_1$ rounds, ULTRA only updates relation embedding with message passing on relation graphs. ULTRA's message passing function on relation graph is
\begin{align*}
    % \mX_{h,r}^{(i+1)}(u) &= \revision{\text{UPDATE}\left(\mX_{h,r}^{(i-1)}(u), \text{AGG}\left(\text{MESSAGE}(\mX_{h,r}^{(i-1)}(v), \mZ_{h,r}^{(i-1)}(r^\prime))|(u, r^\prime, v) \in \mA_V\right)\right)} \\
    \mZ_{h,r}^{(i)}(r^\prime) &= \revision{\text{UP}_{R, \text{ULTRA}}^{(i)}\left(\mZ_{h,r}^{(i-1)}(r^\prime), \text{AGG}_{R, \text{ULTRA}}^{(i)}\left(\text{MSG}_{R, \text{ULTRA}}^{(i)}(\mZ_{h,r}^{(i-1)}(r^{\prime\prime})|(r^\prime,r^{\prime\prime})\in \mA^{\text{ULTRA}}_R\right)\right)}
\end{align*}

For \ourmethod, we choose the message passing functions on relation graph as follows: 
\begin{align*}
    % \mX_{h,r}^{(i+1)}(u) &= \revision{\text{UPDATE}\left(\mX_{h,r}^{(i-1)}(u), \text{AGG}\left(\text{MESSAGE}(\mX_{h,r}^{(i-1)}(v), \mZ_{h,r}^{(i-1)}(r^\prime))|(u, r^\prime, v) \in \mA_V\right)\right)} \\
    \mX_{h,r}^{(i)}(u) &= \text{UP}_{V, \text{\ourmethod}}^{(i)}\left(\mX_{h,r}^{(i-1)}(u), \right. \\
    &\left.\text{AGG}_{V, \text{\ourmethod}}^{(i)}\left(\text{MSG}_{V, \text{\ourmethod}}^{(i)}(\mX_{h,r}^{(i-1)}(v), \mZ_{h,r}^{(i-1)}(r^\prime))|(u, r^\prime, v) \in \mA^{\text{\ourmethod}}_V\right)\right) \\
    &= \mX_{h,r}^{(i-1)}(u) = \mathbf{1}^{d}\\
    \mZ_{h,r}^{(i)}(r^\prime) &= \text{UP}_{R, \text{\ourmethod}}^{(i)}\left(\mZ_{h,r}^{(i-1)}(r^\prime),\right.\\ &\left.\text{AGG}_{R, \text{\ourmethod}}^{(i)}\left(\text{MSG}_{R, \text{\ourmethod}}^{(i)}(\mZ_{h,r}^{(i-1)}(r^{\prime\prime}), \mX_{h,r}^{(i)}(u))|(r^\prime,u,r^{\prime\prime})\in \mA^{\text{\ourmethod}}_R\right)\right)
\end{align*}

Both ULTRA and \ourmethod use the non-parametric DistMult as the message function. If $\mX_{h,r}^{(i)}$ is an all-one matrix, we have
\begin{align*}
&\text{MSG}_{R, \text{ULTRA}}^{(i)}(\mZ_{h,r}^{(i-1)}(r^{\prime\prime})|(r^\prime,r^{\prime\prime})\in \mA^{\text{ULTRA}}_R)\\
=&\text{MSG}_{R, \text{\ourmethod}}^{(i)}(\mZ_{h,r}^{(i-1)}(r^{\prime\prime}), \mX_{h,r}^{(i)}(u))|(r^\prime,u,r^{\prime\prime})\in \mA^{\text{\ourmethod}}_R)
\end{align*}

Both ULTRA and \ourmethod use sum as the aggregation function. If $\text{UP}_{R, \text{ULTRA}}^{(i)}$ and $\text{UP}_{R, \text{\ourmethod}}^{(i)}$ always have the same parameter, it will end up with the same $\mZ_{h,r}^{(k_1)}$.

For the last $k_2$ layers, ULTRA only updates entity embedding. ULTRA marks the head entity with the query relation embedding. ULTRA's message passing function on the original knowledge graph is:
\begin{align*}
\mX_{h,r}^{(i)}(u) &= \text{UP}_{V, \text{ULTRA}}^{(i)}\left(\mX_{h,r}^{(i-1)}(u), \right. \\
&\left. \text{AGG}_{V, \text{ULTRA}}^{(i)}\left(\text{MSG}_{V, \text{ULTRA}}^{(i)}(\mX_{h,r}^{(i-1)}(v), \mZ_{h,r}^{(k_1)}(r^\prime))|(u, r^\prime, v) \in \mA^{\text{ULTRA}}_V\right)\right)
\end{align*}

For \ourmethod, it also marks the head entity with the query relation embedding as ULTRA does. We choose the message passing functions on relation graph as follows: 
\begin{align*}
    \mX_{h,r}^{(i+k_1)}(u) &= \text{UP}_{V, \text{\ourmethod}}^{(i+k_1)}\left(\mX_{h,r}^{(i+k_1-1)}(u),\right.\\
    &\left.\text{AGG}_{V, \text{\ourmethod}}^{(i+k_1)}\left(\text{MSG}_{V, \text{\ourmethod}}^{(i+k_1)}(\mX_{h,r}^{(i+k_1-1)}(v), \mZ_{h,r}^{(i+k_1-1)}(r^\prime))|(u, r^\prime, v) \in \mA^{\text{\ourmethod}}_V\right)\right) \\
    \mZ_{h,r}^{(i+k_1)}(r^\prime) &= \text{UP}_{R, \text{\ourmethod}}^{(i+k_1)}\left(\mZ_{h,r}^{(i+k_1-1)}(r^\prime),\right.\\ 
    &\left.\text{AGG}_{R, \text{\ourmethod}}^{(i+k_1)}\left(\text{MSG}_{R, \text{\ourmethod}}^{(i+k_1)}(\mZ_{h,r}^{(i+k_1-1)}(r^{\prime\prime}), \mX_{h,r}^{(i+k_1)}(u))|(r^\prime,u,r^{\prime\prime})\in \mA^{\text{\ourmethod}}_R\right)\right) \\
    &= \mZ_{h,r}^{(i+k_1-1)}(r^\prime) = \mZ_{h,r}^{(k_1)}
\end{align*}

If $\text{UP}_{V, \text{ULTRA}}^{(i)}$ and $\text{UP}_{V, \text{\ourmethod}}^{(i+k_1)}$ always have the same parameter, the relation and entity embedding from ULTRA and \ourmethod are exactly the same. Since the embeddings are the same, any non-isomorphic triplet that can be distinguished by ULTRA can also be distinguished by \ourmethod.
}

\revisionlog{\textbf{\ourmethod is at least as expressive as InGram.} Suppose InGram first updates relation embedding with a $k_1$-layer GNN and then updates entity embedding with another $k_2$-layer GNN. We will show that there is a \ourmethod with $(k_1+k_2)$ rounds of iterative updates that will get the same embeddings as InGram. \ourmethod uses the same random initial relation entity embedding as InGram and uses the all-one matrix as initial entity embedding. To be consistent with InGram, \ourmethod uses an extension of GATv2 as the GNNLayer.

% DEq-InGram is a variant of InGram by applying Monte Carlo sampling in inference~\cite{lee2023ingram}. Each sampling of DEq-InGram is actually a forward pass of InGram. So for simplicity in this proof we will show that \ourmethod can reproduce the message passing of InGram in one forward pass, then by applying Monte Carlo sampling of initial embeddings to both methods, \ourmethod can reproduce the message passing of DEq-InGram.

For the first $k_1$ rounds, InGram only updates relation embedding with message passing on relation graphs. InGram's message passing function on relation graph is
\begin{align*}
    \mZ_{h,r}^{(i)}(r^\prime) &= \text{UP}_{R, \text{InGram}}^{(i)}\left(\mZ_{h,r}^{(i-1)}(r^\prime), \text{AGG}_{R, \text{InGram}}^{(i)}\left(\text{MSG}_{R, \text{InGram}}^{(i)}(\mZ_{h,r}^{(i-1)}(r^{\prime\prime})|(r^\prime,r^{\prime\prime})\in \mA^{\text{InGram}}_R\right)\right)
\end{align*}

For \ourmethod, we choose the message passing functions on relation graph as follows. Here with a slight abuse of notation, $\mA^{\text{\ourmethod}}_{R,hh}$ and $\mA^{\text{\ourmethod}}_{R,tt}$
are the relation adjacency matrices for the head-head and tail-tail connections respectively.
\begin{align*}
    \mX_{h,r}^{(i)}(u) &= \text{UP}_{V, \text{\ourmethod}}^{(i)}\left(\mX_{h,r}^{(i-1)}(u), \right. \\
    &\left.\text{AGG}_{V, \text{\ourmethod}}^{(i)}\left(\text{MSG}_{V, \text{\ourmethod}}^{(i)}(\mX_{h,r}^{(i-1)}(v), \mZ_{h,r}^{(i-1)}(r^\prime))|(u, r^\prime, v) \in \mA^{\text{\ourmethod}}_V\right)\right) \\
    &= \mX_{h,r}^{(i-1)}(u) = \mathbf{1}^{d}\\
    \mZ_{h,r}^{(i)}(r^\prime) &= \text{UP}_{R, \text{\ourmethod}}^{(i)}\left(\mZ_{h,r}^{(i-1)}(r^\prime),\right.\\ &\left.\text{AGG}_{R, \text{\ourmethod}}^{(i)}\left(\text{MSG}_{R, \text{\ourmethod}}^{(i)}(\mZ_{h,r}^{(i-1)}(r^{\prime\prime}), \mX_{h,r}^{(i)}(u))|(r^\prime,u,r^{\prime\prime})\in \left(\mA^{\text{\ourmethod}}_{R,hh} \cup \mA^{\text{\ourmethod}}_{R,tt}\right) \right)\right)
\end{align*}

\ourmethod picks the same message function as InGram. If $\mX_{h,r}^{(i)}$ is an all-one matrix, we have
\begin{align*}
&\text{MSG}_{R, \text{InGram}}^{(i)}(\mZ_{h,r}^{(i-1)}(r^{\prime\prime})|(r^\prime,r^{\prime\prime})\in \mA^{\text{InGram}}_R)\\
=&\text{MSG}_{R, \text{\ourmethod}}^{(i)}(\mZ_{h,r}^{(i-1)}(r^{\prime\prime}), \mX_{h,r}^{(i)}(u))|(r^\prime,u,r^{\prime\prime})\in \left(\mA^{\text{\ourmethod}}_{R,hh} \cup \mA^{\text{\ourmethod}}_{R,tt}\right))
\end{align*}

Both InGram and \ourmethod use sum as the aggregation function. If $\text{UP}_{R, \text{InGram}}^{(i)}$ and $\text{UP}_{R, \text{\ourmethod}}^{(i)}$ always have the same parameter, it will end up with the same $\mZ_{h,r}^{(k_1)}$.

For the last $k_2$ layers, InGram only updates entity embedding. InGram's message passing function on the original knowledge graph is:
\begin{align*}
\mX_{h,r}^{(i)}(u) &= \text{UP}_{V, \text{InGram}}^{(i)}\left(\mX_{h,r}^{(i-1)}(u), \right. \\
&\left. \text{AGG}_{V, \text{InGram}}^{(i)}\left(\text{MSG}_{V, \text{InGram}}^{(i)}(\mX_{h,r}^{(i-1)}(v), \mZ_{h,r}^{(k_1)}(r^\prime))|(u, r^\prime, v) \in \mA^{\text{InGram}}_V\right)\right)
\end{align*}

For \ourmethod, we choose the message passing functions on relation graph as follows and change the entity embedding to the same initial entity embedding as InGram: 
\begin{align*}
    % \mX_{h,r}^{(i+1)}(u) &= \revision{\text{UPDATE}\left(\mX_{h,r}^{(i-1)}(u), \text{AGG}\left(\text{MESSAGE}(\mX_{h,r}^{(i-1)}(v), \mZ_{h,r}^{(i-1)}(r^\prime))|(u, r^\prime, v) \in \mA_V\right)\right)} \\
    \mX_{h,r}^{(i+k_1)}(u) &= \text{UP}_{V, \text{\ourmethod}}^{(i+k_1)}\left(\mX_{h,r}^{(i+k_1-1)}(u),\right.\\
    &\left.\text{AGG}_{V, \text{\ourmethod}}^{(i+k_1)}\left(\text{MSG}_{V, \text{\ourmethod}}^{(i+k_1)}(\mX_{h,r}^{(i+k_1-1)}(v), \mZ_{h,r}^{(i+k_1-1)}(r^\prime))|(u, r^\prime, v) \in \mA^{\text{\ourmethod}}_V\right)\right) \\
    \mZ_{h,r}^{(i+k_1)}(r^\prime) &= \text{UP}_{R, \text{\ourmethod}}^{(i+k_1)}\left(\mZ_{h,r}^{(i+k_1-1)}(r^\prime),\right.\\ 
    &\left.\text{AGG}_{R, \text{\ourmethod}}^{(i+k_1)}\left(\text{MSG}_{R, \text{\ourmethod}}^{(i+k_1)}(\mZ_{h,r}^{(i+k_1-1)}(r^{\prime\prime}), \mX_{h,r}^{(i+k_1)}(u))|(r^\prime,u,r^{\prime\prime})\in \mA^{\text{\ourmethod}}_R\right)\right) \\
    &= \mZ_{h,r}^{(i+k_1-1)}(r^\prime) = \mZ_{h,r}^{(k_1)}
\end{align*}

If $\text{UP}_{V, \text{InGram}}^{(i)}$ and $\text{UP}_{V, \text{\ourmethod}}^{(i+k_1)}$ always have the same parameter, the relation and entity embedding from InGram and \ourmethod are exactly the same. Since the embeddings are the same, any non-isomorphic triplet that can be distinguished by InGram can also be distinguished by \ourmethod.
}
\end{proof} 

\triplets*
\begin{proof}
\revisionlog{\textbf{\ourmethod is more expressive than ULTRA.}} Consider the two triplets ($v_{21}$, $r_3$, $v_{24}$) and ($v_{21}$, $r_3$, $v_{25}$) in \Cref{fig:example}, which are non-isomorphic because $r_1 \neq r_2$, and assume they are not observable (i.e., they do not exist in the input graph). Assume, however, we want to predict the existence of these two, with the first one having label 1 (exists) and the second one having label 0 (does not exist). In practice, this means that the first one represents a missing link in an incomplete KG, while the other does not exist and should not be predicted as missing. The proof shows the impossibility of ULTRA to distinguish these two and therefore to make different predictions for the two.

In the relation graph of ULTRA, $r_1$ and $r_2$ are \revisionlog{automorphic}. This implies that ULTRA will give $r_1$ and $r_2$ the same relation embedding. Since this embedding is then used by ULTRA to obtain the entity embeddings, ULTRA will assign $v_{24}$ and $v_{25}$ the same entity embedding. Therefore, ULTRA will assign ($v_{21}$, $r_3$, $v_{24}$) and ($v_{21}$, $r_3$, $v_{25}$) the same representation, and therefore it will necessarily make the same prediction for these two (predict that they both exist or that they both do not exist, even though the ground truth tell us only one exist).

On the contrary, in the relation graph of \ourmethod, $r_1$ and $r_2$ are not \revisionlog{automorphic}. This implies that \ourmethod can give $r_1$ and $r_2$ different relation embeddings, and consequently $v_{24}$ and $v_{25}$ can get different entity embeddings. Therefore, \ourmethod can assign ($v_{21}$, $r_3$, $v_{24}$) and ($v_{21}$, $r_3$, $v_{25}$) different representations, and therefore it can make different predictions for these two (predict that only the first exists, as the ground truth). Therefore, we have just shown that there exist two non-isomorphic triplets that TRIX can distinguish but ULTRA cannot. 

\revisionlog{
\textbf{\ourmethod is more expressive than InGram.} Consider the two triplets ($v_{7}$, $r_3$, $v_{10}$) and ($v_{7}$, $r_3$, $v_{11}$) in \Cref{fig:exampleingram}, which are non-isomorphic because $r_1 \neq r_2$, and assume they are not observable (i.e., they do not exist in the input graph). Assume, however, we want to predict the existence of these two, with the first one having label 1 (exists) and the second one having label 0 (does not exist). In practice, this means that the first one represents a missing link in an incomplete KG, while the other does not exist and should not be predicted as missing. The proof shows the impossibility of InGram to distinguish these two and therefore to make different predictions for the two.

In the relation graph of InGram, $r_1$ and $r_2$ are \revisionlog{automorphic}. This implies that InGram will give $r_1$ and $r_2$ the same relation embedding. Since this embedding is then used by InGram to obtain the entity embeddings, InGram will assign $v_{10}$ and $v_{11}$ the same entity embedding. Therefore, InGram will assign ($v_{7}$, $r_3$, $v_{10}$) and ($v_{7}$, $r_3$, $v_{11}$) the same representation, and therefore it will necessarily make the same prediction for these two (predict that they both exist or that they both do not exist, even though the ground truth tell us only one exist).

On the contrary, in the relation graph of \ourmethod, $r_1$ and $r_2$ are not \revisionlog{automorphic}. This implies that \ourmethod can give $r_1$ and $r_2$ different relation embeddings, and consequently $v_{10}$ and $v_{11}$ can get different entity embeddings. Therefore, \ourmethod can assign ($v_{7}$, $r_3$, $v_{10}$) and ($v_{7}$, $r_3$, $v_{11}$) different representations, and therefore it can make different predictions for these two (predict that only the first exists, as the ground truth). Therefore, we have just shown that there exist two non-isomorphic triplets that TRIX can distinguish but InGram cannot. 

}
\end{proof}

% \begin{figure}[t]
% % \begin{wrapfigure}[14]{}{0.6\textwidth}
% \vspace{-10pt}
%     \centering
%     \begin{subfigure}[t]{0.48\linewidth}
%         \centering
%         \includegraphics[width=3.8cm]{figures/heatmaptrix.pdf}
%         \caption{\ourmethod.}
%         \label{fig:heatmaptrix}
%     \end{subfigure}
%     \hfill
%     \begin{subfigure}[t]{0.48\linewidth}
%         \centering
%         \includegraphics[width=3.8cm]{figures/heatmapultra.pdf}
%         \caption{ULTRA.}
%         \label{fig:heatmapultra}
%     \end{subfigure}
%     \caption{\revisionlog{Heatmaps of cosine similarities of relation embeddings on NELL995 with \ourmethod and ULTRA. \ourmethod gets more diverse relation embeddings than that of ULTRA.}}
%     \label{fig:heatmapscompare}
%     \vspace{-15pt}
% % \end{wrapfigure}
% \end{figure}

\label{second_proof}

\begin{figure}[t]
\centering
\includegraphics[width=1.0\textwidth]{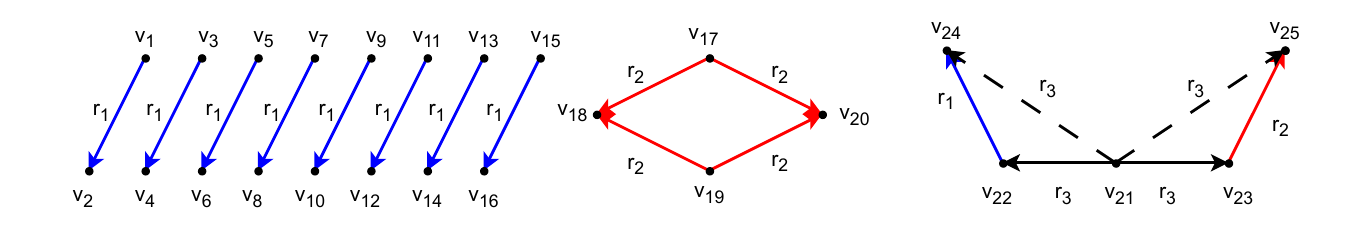}
\caption{A counter example \revisionlog{graph with 10 disconnected components} where there exist non-isomorphic triplets that can be distinguished by \ourmethod but that cannot be distinguished by ULTRA. In the figure, solid lines are edges observed in the knowledge graph and dashed lines are edges we want to predict whether existing or not. As discussed in \Cref{second_proof}, ULTRA always gives two dashed edges the same embedding while \ourmethod can distinguish them.}
\label{fig:example}
\end{figure}

\begin{figure}[t]
\centering
\includegraphics[width=0.7\textwidth]{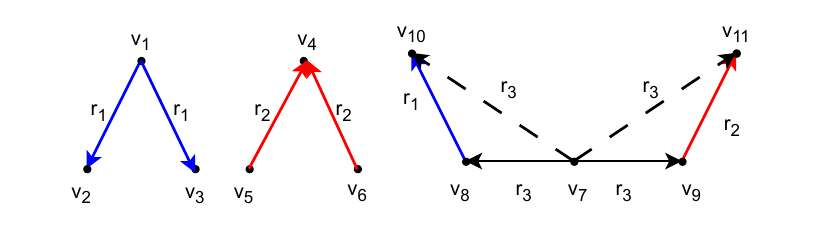}
\caption{\revisionlog{A counter example graph with 3 disconnected components where there exist non-isomorphic triplets that can be distinguished by \ourmethod but that cannot be distinguished by InGram. In the figure, solid lines are edges observed in the knowledge graph and dashed lines are edges we want to predict whether existing or not. As discussed in \Cref{second_proof}, InGram always gives two dashed edges the same embedding while \ourmethod can distinguish them.}}
\label{fig:exampleingram}
\end{figure}

\moreexpr*

\revision{\Cref{theo:asexpr} shows that any non-isomorphic triplet that can be distinguished by ULTRA \revisionlog{or InGram} can also be distinguished by \ourmethod. \Cref{theo:triplets} shows that there exist non-isomorphic triplets that can be distinguished by \ourmethod but that cannot be distinguished by ULTRA \revisionlog{or InGram}}. From \Cref{theo:asexpr,theo:triplets} directly follows that \ourmethod is more expressive than ULTRA \revisionlog{and InGram}.

\begin{proposition}
Define $\alpha=\max\limits_i (\max (\sum\limits_j \mathbbm{1}_{\mE_h[i, j] > 0}, \sum\limits_j \mathbbm{1}_{\mE_t[i, j] > 0}))$ which means $\alpha$ is the maximum number of unique relations one single entity connects to as the head or the tail. Then the relation adjacency matrix has $|R|$ nodes, $O(|V|\alpha^2)$ edges and $|V|$ relations.
\end{proposition}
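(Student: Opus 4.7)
The plan is to read the three claims directly off the construction of the relation adjacency tensor $\mA_R \in \mathbb{R}^{|R| \times |R| \times |V| \times 4}$ and then do a simple counting argument for the edge bound. First, the node count and relation count are immediate from the shape of $\mA_R$: the first two dimensions index relations of the original graph $G$, so $G_R$ has exactly $|R|$ nodes, and the third dimension indexes entities of $G$, which play the role of edge-types (``relations'') in $G_R$, giving $|V|$ of them. No argument is needed beyond pointing at the definitions of $\mE_h$, $\mE_t$, and the four intermediate matrices $\mA_R^{hh}, \mA_R^{tt}, \mA_R^{ht}, \mA_R^{th}$.

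The only nontrivial step is the $O(|V|\alpha^2)$ bound on the number of edges. The approach is to count edges by stratifying on the shared entity $v_k$. I would fix $v_k \in V$ and separately bound the contribution from each of the four roles. For instance, the number of pairs $(r_i, r_j)$ with $\mA_R^{hh}[r_i, r_j, v_k] > 0$ is at most $\big(\sum_j \mathbbm{1}_{\mE_h[v_k, j] > 0}\big)^2 \le \alpha^2$, using the definition of $\alpha$. The same bound $\alpha^2$ holds for the $tt$, $ht$, and $th$ roles, by the analogous argument applied to $\mE_t$ (and to the mixed products $\mE_h \cdot \mE_t$, $\mE_t \cdot \mE_h$). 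Summing over the four roles gives at most $4\alpha^2$ distinct edges incident to $v_k$ in $G_R$, and summing over all $v_k \in V$ yields at most $4|V|\alpha^2 = O(|V|\alpha^2)$ edges in total.

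There isn't really a hard step here; the main thing to be careful about is that the four role matrices are counted separately (consistent with the factor of $4$ in the last dimension of $\mA_R$) so that the bound is additive rather than multiplicative across roles, and that $\alpha$ is defined as a max over entities of the larger of the two head/tail relation-degrees, which is exactly what is needed to bound each single-role count by $\alpha^2$.
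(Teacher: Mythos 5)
Your proof is correct and takes essentially the same approach as the paper: read the node and relation counts off the shape of $\mA_R$, then bound the edge count by fixing an entity $v_k$, arguing each of the four roles contributes at most $\alpha^2$ potential edges through $v_k$, and summing over $v_k \in V$ to get $4|V|\alpha^2$. If anything, your role-by-role accounting is slightly cleaner than the paper's phrasing ``each node is affiliated with at most $\alpha$ relations,'' since $\alpha$ bounds the head-degree and tail-degree separately rather than their union, and your decomposition makes explicit why the factor is $4\alpha^2$ and not, say, $(2\alpha)^2 \cdot 4$.
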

\begin{proof}
    The dimension of relation adjacency matrix is $\mathbb{R}^{|R| \times |R| \times |V| \times 4}$. Since each node is affiliated with at most $\alpha$ relations and any two of these relations may generate four non-zero entries (one for head-to-head; one for tail-to-tail; one for head-to-tail; one for tail-to-head) in the relation graph, each node creates at most $4\alpha^2$ non-zero entries in the relation adjacency matrix, implying that the number of nodes is $O(|V|\alpha^2)$. Moreover, the total number of edges in the relation adjacency matrix is at most $4|V|\alpha^2$.
\end{proof}

Now, let us suppose \revision{there are $L$ rounds of iterative updates} and embedding dimension is $d$. The time complexity of \revision{entity embedding update} in \ourmethod is $O(L|E|d+L|V|d^2)$. The time complexity of \revision{relation embedding update} in \ourmethod is $O(L|V|\alpha^2d+L|R|d^2)$. Since $L$ and $d$ are always $O(1)$, the time complexity of \ourmethod is $O(|E|+|V|\alpha^2)$ for each query.

 The time complexity of relation feature propagation in ULTRA is $O(L|R|^2d+L|R|d^2)$. The time complexity of entity feature propagation in ULTRA is $O(L|E|d+L|V|d^2)$. Since $d$ and $L$ are always $O(1)$, the time complexity of ULTRA is $O(|E|+|V|+|R|^2)$ for each entity prediction query. However, it goes to $O(|E||R|+|V||R|+|R|^3)$ for each relation prediction query.

 For completeness, we report in the following the statistics of the original graph and the relation graph in three exemplary datasets as shown in \Cref{alpha}. Empirically, the number of edges in the relation graph is usually 4 to 10 times the number of edges in the original graph. This means for entity prediction query, the complexity of ULTRA is up to 10 times better than \ourmethod. However, in relation prediction, the complexity of \ourmethod is up to 20 times better than that of ULTRA since ULTRA needs to perform one forward pass for each relation in the relation set. 

 \begin{table*}[t]
\caption{Statistics of relation graph in pre-training datasets.}
\centering
\label{alpha}
\begin{tabular}{l|rrrr}
\toprule
 & $\alpha$ & \# relation & \# edge in original graph & \# edge in relation graph  \\
\midrule
WN18RR & 7 & 11 & 86835 & 314481 \\
FB15k237 & 53 & 237 & 272115 & 2247123\\ 
CoDExMedium & 19 & 51 & 185584 & 706534\\
\bottomrule
\end{tabular}
\end{table*}

\section{Computational Resources}\label{appx:compute}
We implemented \ourmethod using PyTorch~\citep{paszke2019pytorch} (offered under BSD-3 Clause license) and the PyTorch Geometric library~\citep{fey2019fast} (offered under MIT license) for efficient processing of graph-structured data. All experiments were conducted on NVIDIA RTX A5000,  NVIDIA RTX A6000, and NVIDIA GeForce RTX 4090 GPUs, and on the Google's Gemini API. For hyperparameter tuning and model selection, we used the Weights and Biases (wandb) library~\citep{wandb}.

\section{LLM Experiment Details}
\label{appx:LLMexp}
%cite the paper / exactly same thing in the code / different orders of relations / comma and quotes / example of failure (different lanuages, 20 failures of each)

\subsection{Details for Task 1} \label{appx:T1}
\paragraph{Relation Prediction Prompt of Task 1} 
In the following task, you will be given background knowledge in the form of triplet (h, r, t) which means entity 'h' has relation 'r' with entity 't'. Then you will be asked some questions about the relationship between entities. Background knowledge: (Kris Kristofferson, occupation, guitarist); (Willow Smith, genre, indie pop);\dots What is the relationship between entity 'Gaspard Monge' and entity 'France'? Please choose one best answer from the following relations:|parent organization|studies|cause of death|architectural style|unmarried partner|industry|\dots|. You just need to give the relation and please do not give an explanation.

\paragraph{Entity Prediction Prompt of Task 1} 
In the following task, you will be given background knowledge in the form of triplet (h, r, t) which means entity 'h' has relation 'r' with entity 't'. Then you will be asked some questions about the relationship between entities. Background knowledge: (Kris Kristofferson, occupation, guitarist); (Willow Smith, genre, indie pop);\dots Predict the tail entity for triplet (Gaspard Monge, country of citizenship, ?). Please give the 10 most possible answers. You just need to give the names of the entities separated by commas and please do not give explanation.

\revisionlog{In the prompt for relation prediction for all the three tasks, all relations in the dataset are listed with '|' as the delimiter. For the sake of simplicity in the presentation, in the next subsections we use "\dots" to represent the rest of triplets and relations in the prompts.}

\Cref{llm-base} shows the Hits@1 of Gemini-1.5-pro on in-domain relation prediction task. \revision{\Cref{llm-base-entity} shows the Hits@10 of Gemini-1.5-flash on in-domain entity prediction task. We use Gemini-1.5-flash for entity prediction tasks for the sake of reducing costs of experiments.} We run the experiment 3 times with the same prompt in English to see if it can generate consistent answers. Gemini performs the task quite well and shows consistency across 3 runs. This indicates given background knowledge in the prompt, LLMs has the capacity to handle in-domain relation and entity predictions well. 

% We further tests its performance across different languages. \Cref{tab:llm-relation2} shows that the LLM has consistent good performance in different languages.

\begin{table*}[ht]
\centering
\caption{Task 1: In-domain LLM relation predictions Hits@1 on CoDEx-S.}
\begin{tabular}{lrrr |r}
\toprule
 & Run \#1  & Run \#2 & Run \#3  & Worst \\
Gemini-1.5-pro & 0.933 & 0.933 & 0.933 & 0.933\\
%gpt-4-0125-preview & N/A* & N/A* & N/A* & N/A* \\
ULTRA & 0.820 & 0.820 & 0.820 & 0.820 \\
\ourmethod & {\bf 0.935} & {\bf 0.935} & {\bf 0.935} & {\bf 0.935} \\
\hline
\end{tabular}
\label{llm-base}
\end{table*}

\begin{table*}[ht]
\centering
\caption{Task 1: In-domain LLM entity predictions Hits@10 on CoDEx-S.}
\begin{tabular}{lrrr |r}
\toprule
 & Run \#1  & Run \#2 & Run \#3  & Worst \\
Gemini-1.5-flash & 0.308 & 0.308 & 0.308 & 0.308\\
%gpt-4-0125-preview & N/A* & N/A* & N/A* & N/A* \\
ULTRA & 0.667 & 0.667 & 0.667 & 0.667 \\
\ourmethod & {\bf 0.670} & {\bf 0.670} & {\bf 0.670} & {\bf 0.670} \\
\hline
\end{tabular}
\label{llm-base-entity}
\end{table*}

% \begin{table*}[ht]
% \centering
% \caption{Task 1 on different languages: In-domain relation predictions of Gemini-1.5-pro on CoDEx-S in different languages with Prompt Template 1.N/A* means we were not able to perform the experiment due to API limitations on context size.}
% \begin{tabular}{l rrrrrr rr}
% \toprule
% & \multicolumn{6}{c}{Gemini-1.5-pro} & \multicolumn{2}{c}{K-GFMs}  \\
% \cmidrule(lr){2-7}
% \cmidrule(lr){8-9}
%  & \multicolumn{1}{c}{ar} & \multicolumn{1}{c}{de} & \multicolumn{1}{c}{en} & \multicolumn{1}{c}{es} & \multicolumn{1}{c}{ru} & \multicolumn{1}{c}{zh} & \multicolumn{1}{c}{ULTRA} & \multicolumn{1}{c}{\ourmethod} \\
% \midrule
% CoDEx-S & 0.903 & 0.935 & 0.933 & 0.933 & 0.903 & 0.933 & 0.820 & \textbf{0.935} \\
% CoDEx-M & N/A* & N/A* & N/A* & N/A* & N/A* & N/A* & 0.870 & \textbf{0.886} \\
% CoDEx-L & N/A* & N/A* & N/A* & N/A* & N/A* & N/A* & 0.824 & \textbf{0.837} \\
% \bottomrule
% \end{tabular}
% \label{tab:llm-relation2}
% \end{table*}

\subsection{Details for Task 2} \label{appx:T2}
\paragraph{Relation Prediction Prompt of Task 2} In the following task, you will first be given background knowledge in the form of triplet (h, r, t) which means entity 'h' has relation 'r' with entity 't'. Then you will be asked some questions about the relationship between entities. Please notice that some words are replaced with metasyntactic words in the following paragraph. Background knowledge: (foo, baz, guitarist); (Willow Smith, genre, bar);\dots What is the relationship between entity 'foo' and entity 'bar'? Please choose one best answer from the following relation IDs:|parent organization|studies|quux|baz|\dots|. You just need to give the relation and please do not give an explanation.

\paragraph{Entity Prediction Prompt of Task 2} In the following task, you will first be given background knowledge in the form of triplet (h, r, t) which means entity 'h' has relation 'r' with entity 't'. Then you will be asked some questions about the relationship between entities. Please notice that some words are replaced with metasyntactic words in the following paragraph. Background knowledge: (foo, baz, guitarist); (Willow Smith, genre, bar);\dots Predict the tail entity for triplet (foo, garply, ?). Please give the 10 most possible answers. You just need to give the names of the entities separated by commas and please do not give explanation.

\Cref{llm-metasyntactic} shows the Hits@1 of Gemini-1.5-pro on out-domain relation prediction task. \revision{\Cref{llm-metasyntactic-entity} shows the Hits@10 of Gemini-1.5-flash on out-domain entity prediction task.} The neighbor entities of the head entity and the relations that connect the head entity with its neighbors are replaced with metasyntactic words. We run the experiment 3 times with different metasyntactic words but the underlying structural pattern is exactly the same as in the in-domain task. The results indicate the LLM can not do the out-of-domain task well. This means the LLM relied more on the known semantic description of the words instead of the structural pattern of the graph so that when there are new entities and relations, it can not perform inductive reasoning on them.

\begin{table*}[t]
\centering
\caption{Task 2: Out-of-domain LLM relation predictions Hits@1 on CoDEx-S. Effects of Metasyntactic Words on Relation Predictions on CoDEx-S.\label{llm-metasyntactic}
}
\begin{tabular}{lrrr |r}
\toprule
 & Run \#1  & Run \#2 & Run \#3  & Worst \\
 \midrule
Gemini-1.5-pro & 0.667 & 0.667 & 0.633 & 0.633\\
ULTRA & 0.820 & 0.820 & 0.820 & 0.820 \\
\ourmethod & {\bf 0.935} & {\bf 0.935} & {\bf 0.935} & {\bf 0.935} \\
%gpt-4-0125-preview & N/A* & N/A* & N/A* & N/A* \\
\hline
\end{tabular}
\end{table*}

\begin{table*}[ht]
\centering
\caption{Task 2: Out-of-domain LLM entity predictions Hits@10 on CoDEx-S. Effects of Metasyntactic Words on Entity Predictions on CoDEx-S.\label{llm-metasyntactic-entity}
}
\begin{tabular}{lrrr |r}
\toprule
 & Run \#1  & Run \#2 & Run \#3  & Worst \\
 \midrule
Gemini-1.5-flash & 0.212  & 0.250 & 0.327 & 0.212\\
ULTRA & 0.667 & 0.667 & 0.667 & 0.667 \\
\ourmethod & {\bf 0.670} & {\bf 0.670} & {\bf 0.670} & {\bf 0.670} \\
%gpt-4-0125-preview & N/A* & N/A* & N/A* & N/A* \\
\hline
\end{tabular}
\end{table*}

\subsection{Details for Task 3}\label{appx:T3}
\paragraph{Relation Prediction Prompt of Task 3} In the following task, entities and relations will be expressed with their IDs. You will first be given the mapping from entities to their IDs and the mapping from relations to their IDs. Then you will be given background knowledge in the form of triplet (h, r, t) which means entity 'h' has relation 'r' with entity 't'. Finally you will be asked some questions about the relationship between entities. Entity mapping: Mireille Darc is entity '8831'; Breton is entity '20512'; Tomas Tranströmer is entity '1641'\dots Relation mapping: located in the administrative terroritorial entity is relation '15' \dots Background knowledge: (15443, 16, 1093); (21198, 16, 9387); (14854, 8, 10218)\dots What is the relationship between entity '18127' and entity '1799'? Please choose one best answer from the following relation IDs:|45|48|27|35|\dots|. You just need to give the ID of that relation and please do not give an explanation.

\paragraph{Entity Prediction Prompt of Task 3} In the following task, entities and relations will be expressed with their IDs. You will first be given the mapping from entities to their IDs and the mapping from relations to their IDs. Then you will be given background knowledge in the form of triplet (h, r, t) which means entity 'h' has relation 'r' with entity 't'. Finally you will be asked some questions about the relationship between entities. Entity mapping: Mireille Darc is entity '8831'; Breton is entity '20512'; Tomas Tranströmer is entity '1641'\dots Relation mapping: located in the administrative terroritorial entity is relation '15' \dots Background knowledge: (15443, 16, 1093); (21198, 16, 9387); (14854, 8, 10218)\dots Predict the tail entity for triplet (18127, 45, ?). Please give the 10 most possible answers. You just need to give the IDs of the entities separated by commas and please do not give explanation.

\Cref{llm-equivarariance} shows the Hits@1 of Gemini-1.5-pro on relation prediction in Task 3. \revision{\Cref{llm-equivarariance-entity} shows the Hits@10 of Gemini-1.5-flash on entity prediction in Task 3.} The entities and relations are expressed as IDs. We run the experiment 3 times with permutated IDs but the underlying structural pattern is exactly the same as in the in-domain task. The results demonstrate that the LLM is very sensitive to ID permutation so that its performance is inconsistent across 3 permutations.

\begin{table*}[t]
\centering
\caption{Task 3: Out-of-domain LLM relation predictions Hits@1 on CoDEx-S. Effects of Input Permutations on Relation Predictions on CoDEx-S.}
\begin{tabular}{lrrr |r}
\toprule
 & Permutation \#1  & Permutation \#2 & Permutation \#3  & Worst \\
 \midrule
Gemini-1.5-pro & 0.346 & 0.731 & 0.615 & 0.346\\
%gpt-4-0125-preview & N/A* & N/A* & N/A* & N/A* \\
ULTRA & 0.820 & 0.820 & 0.820 & 0.820 \\
\ourmethod & {\bf 0.935} & {\bf 0.935} & {\bf 0.935} & {\bf 0.935} \\
\hline
\end{tabular}
\label{llm-equivarariance}
\end{table*}

\begin{table*}[t]
\centering
\caption{Task 3: Out-of-domain LLM entity predictions Hits@10 on CoDEx-S. Effects of Input Permutations on Entity Predictions on CoDEx-S.}
\begin{tabular}{lrrr |r}
\toprule
 & Permutation \#1  & Permutation \#2 & Permutation \#3  & Worst \\
 \midrule
Gemini-1.5-flash & 0.212 & 0.250 & 0.231 & 0.212\\
%gpt-4-0125-preview & N/A* & N/A* & N/A* & N/A* \\
ULTRA & 0.667 & 0.667 & 0.667 & 0.667 \\
\ourmethod & {\bf 0.670} & {\bf 0.670} & {\bf 0.670} & {\bf 0.670} \\
\hline
\end{tabular}
\label{llm-equivarariance-entity}
\end{table*}

\section{Datasets}
\label{appx:datasets}
The statistics of all 57 datasets used in the experiments in presented in Tables~\ref{tab:app_datasets_transd},\ref{tab:app_datasets_inde},\ref{tab:app_datasets_indr}. 
All datasets are publicly available under open licenses (MIT or CC-BY).

\begin{table*}[!t]
\centering
\caption{Transductive datasets (16) used in the experiments. Train, Valid, Test denote triples in the respective set. Task denotes the prediction task: \emph{h/t} is predicting both heads and tails, \emph{tails} is only predicting tails.}
\label{tab:app_datasets_transd}
%\scriptsize
\begin{adjustbox}{width=\textwidth}
\begin{tabular}{llrrrrrl}\toprule
Dataset & Reference &Entities &Rels &Train &Valid &Test &Task \\\midrule
CoDEx Small & \cite{safavi2020codex} &2034 &42 &32888 &1827 &1828 & h/t \\
WDsinger & \cite{dackgr} &10282 &135 &16142 &2163 &2203 & h/t  \\
FB15k237\_10 & \cite{dackgr} &11512 &237 &27211 &15624 &18150 & tails \\
FB15k237\_20 & \cite{dackgr} &13166 &237 &54423 &16963 &19776 & tails \\
FB15k237\_50 & \cite{dackgr} &14149 &237 &136057 &17449 &20324 & tails \\
FB15k237 & \cite{fb15k237} &14541 &237 &272115 &17535 &20466 & h/t \\
CoDEx Medium & \cite{safavi2020codex} &17050 &51 &185584 &10310 &10311 & h/t  \\
NELL23k & \cite{dackgr} &22925 &200 &25445 &4961 &4952 & h/t  \\
WN18RR & \cite{wn18rr} &40943 &11 &86835 &3034 &3134 & h/t \\
AristoV4 & \cite{ssl_rp} &44949 &1605 &242567 &20000 &20000 & h/t  \\
Hetionet & \cite{hetionet} &45158 &24 &2025177 &112510 &112510 & h/t  \\
NELL995 & \cite{nell995} &74536 &200 &149678 &543 &2818 & h/t \\
CoDEx Large & \cite{safavi2020codex} &77951 &69 &551193 &30622 &30622 & h/t \\
ConceptNet100k & \cite{cnet100k} &78334 &34 &100000 &1200 &1200 & h/t  \\
DBpedia100k & \cite{dbp100k} &99604 &470 &597572 &50000 &50000 & h/t  \\
YAGO310 & \cite{yago310} &123182 &37 &1079040 &5000 &5000 & h/t \\
\bottomrule
\end{tabular}
\end{adjustbox}
\end{table*}

\begin{table*}[!t]
\caption{Inductive entity $(e)$ datasets (18) used in the experiments. Triples denote the number of edges of the graph given at training, validation, or test. Valid and Test denote triples to be predicted in the validation and test sets in the respective validation and test graph.}
\label{tab:app_datasets_inde}
\begin{adjustbox}{width=\textwidth}
\begin{tabular}{lrrrrrrrrrr}\toprule
\multirow{2}{*}{Dataset} &\multirow{2}{*}{Rels} &\multicolumn{2}{c}{Training Graph} &\multicolumn{3}{c}{Validation Graph} &\multicolumn{3}{c}{Test Graph}  \\ \cmidrule(l){3-4} \cmidrule(l){5-7} \cmidrule(l){8-10} 
& &Entities &Triples &Entities &Triples &Valid  &Entities &Triples &Test  \\\midrule
FB v1~\cite{teru2020inductive} &180 &1594 &4245 &1594 &4245 &489 &1093 &1993 &411  \\
FB v2~\cite{teru2020inductive} &200 &2608 &9739 &2608 &9739 &1166 &1660 &4145 &947  \\
FB v3~\cite{teru2020inductive} &215 &3668 &17986 &3668 &17986 &2194 &2501 &7406 &1731  \\
FB v4~\cite{teru2020inductive} &219 &4707 &27203 &4707 &27203 &3352 &3051 &11714 &2840  \\
WN v1~\cite{teru2020inductive} &9 &2746 &5410 &2746 &5410 &630 &922 &1618 &373 \\
WN v2~\cite{teru2020inductive} &10 &6954 &15262 &6954 &15262 &1838 &2757 &4011 &852  \\
WN v3~\cite{teru2020inductive} &11 &12078 &25901 &12078 &25901 &3097 &5084 &6327 &1143 \\
WN v4~\cite{teru2020inductive} &9 &3861 &7940 &3861 &7940 &934 &7084 &12334 &2823  \\
NELL v1~\cite{teru2020inductive} &14 &3103 &4687 &3103 &4687 &414 &225 &833 &201 \\
NELL v2~\cite{teru2020inductive} &88 &2564 &8219 &2564 &8219 &922 &2086 &4586 &935 \\
NELL v3~\cite{teru2020inductive} &142 &4647 &16393 &4647 &16393 &1851 &3566 &8048 &1620 \\
NELL v4~\cite{teru2020inductive} &76 &2092 &7546 &2092 &7546 &876 &2795 &7073 &1447 \\
ILPC Small~\cite{ilpc} &48 &10230 &78616 &6653 &20960 &2908 &6653 &20960 &2902 \\
ILPC Large~\cite{ilpc} &65 &46626 &202446 &29246 &77044 &10179 &29246 &77044 &10184  \\
HM 1k~\cite{ham_bm} &11 &36237 &93364 &36311 &93364 &1771 &9899 &18638 &476 \\
HM 3k~\cite{ham_bm} &11 &32118 &71097 &32250 &71097 &1201 &19218 &38285 &1349 \\
HM 5k~\cite{ham_bm} &11 &28601 &57601 &28744 &57601 &900 &23792 &48425 &2124\\
IndigoBM~\cite{liu2021indigo} &229 &12721 &121601 &12797 &121601 &14121 &14775 &250195 &14904 \\
\bottomrule
\end{tabular}
\end{adjustbox}
\end{table*}

\begin{table*}[!t]
\caption{Inductive entity and relation $(e,r)$ datasets (23) used in the experiments. Triples denote the number of edges of the graph given at training, validation, or test. Valid and Test denote triples to be predicted in the validation and test sets in the respective validation and test graph.}
\label{tab:app_datasets_indr}
%\scriptsize
\begin{adjustbox}{width=\textwidth}
\begin{tabular}{lrrrrrrrrrrrrrr}\toprule
\multirow{2}{*}{Dataset} &\multicolumn{3}{c}{Training Graph} &\multicolumn{4}{c}{Validation Graph} &\multicolumn{4}{c}{Test Graph} \\ \cmidrule(l){2-4} \cmidrule(l){5-8} \cmidrule(l){9-12}
&Entities &Rels &Triples &Entities &Rels &Triples &Valid &Entities &Rels &Triples &Test \\\midrule
FB-25~\cite{lee2023ingram} &5190 &163 &91571 &4097 &216 &17147 &5716 &4097 &216 &17147 &5716  \\
FB-50~\cite{lee2023ingram} &5190 &153 &85375 &4445 &205 &11636 &3879 &4445 &205 &11636 &3879  \\
FB-75~\cite{lee2023ingram} &4659 &134 &62809 &2792 &186 &9316 &3106 &2792 &186 &9316 &3106  \\
FB-100~\cite{lee2023ingram} &4659 &134 &62809 &2624 &77 &6987 &2329 &2624 &77 &6987 &2329  \\
WK-25~\cite{lee2023ingram} &12659 &47 &41873 &3228 &74 &3391 &1130 &3228 &74 &3391 &1131  \\
WK-50~\cite{lee2023ingram} &12022 &72 &82481 &9328 &93 &9672 &3224 &9328 &93 &9672 &3225  \\
WK-75~\cite{lee2023ingram} &6853 &52 &28741 &2722 &65 &3430 &1143 &2722 &65 &3430 &1144  \\
WK-100~\cite{lee2023ingram} &9784 &67 &49875 &12136 &37 &13487 &4496 &12136 &37 &13487 &4496  \\
NL-0~\cite{lee2023ingram} &1814 &134 &7796 &2026 &112 &2287 &763 &2026 &112 &2287 &763  \\
NL-25~\cite{lee2023ingram} &4396 &106 &17578 &2146 &120 &2230 &743 &2146 &120 &2230 &744  \\
NL-50~\cite{lee2023ingram} &4396 &106 &17578 &2335 &119 &2576 &859 &2335 &119 &2576 &859  \\
NL-75~\cite{lee2023ingram} &2607 &96 &11058 &1578 &116 &1818 &606 &1578 &116 &1818 &607  \\
NL-100~\cite{lee2023ingram} &1258 &55 &7832 &1709 &53 &2378 &793 &1709 &53 &2378 &793  \\
\midrule
Metafam~\cite{mtdea} &1316 &28 &13821 &1316 &28 &13821 &590 &656 &28 &7257 &184 \\
FBNELL~\cite{mtdea} &4636 &100 &10275 &4636 &100 &10275 &1055 &4752 &183 &10685 &597  \\
Wiki MT1 tax~\cite{mtdea} &10000 &10 &17178 &10000 &10 &17178 &1908 &10000 &9 &16526 &1834  \\
Wiki MT1 health~\cite{mtdea} &10000 &7 &14371 &10000 &7 &14371 &1596 &10000 &7 &14110 &1566  \\
Wiki MT2 org~\cite{mtdea} &10000 &10 &23233 &10000 &10 &23233 &2581 &10000 &11 &21976 &2441  \\
Wiki MT2 sci~\cite{mtdea} &10000 &16 &16471 &10000 &16 &16471 &1830 &10000 &16 &14852 &1650  \\
Wiki MT3 art~\cite{mtdea} &10000 &45 &27262 &10000 &45 &27262 &3026 &10000 &45 &28023 &3113  \\
Wiki MT3 infra~\cite{mtdea} &10000 &24 &21990 &10000 &24 &21990 &2443 &10000 &27 &21646 &2405  \\
Wiki MT4 sci~\cite{mtdea} &10000 &42 &12576 &10000 &42 &12576 &1397 &10000 &42 &12516 &1388  \\
Wiki MT4 health~\cite{mtdea} &10000 &21 &15539 &10000 &21 &15539 &1725 &10000 &20 &15337 &1703  \\
\bottomrule
\end{tabular}
\end{adjustbox}
\end{table*}

\section{Detailed Experiment Results of Entity and Relation Prediction}
\label{appx:detailedresults}
\revision{
\subsection{Loss Function}
\label{loss_function}
\ourmethod is trained by minimizing the binary cross entropy loss over positive and negative triplets.

For entity prediction, the loss function is:
$$\text{Loss} = -\log p(h, r, t) - \sum\limits_{i=1}^{n} \frac{1}{n} \log(1 - p(h_i^\prime, r, t_i^\prime))$$
where ($h, r, t$) is the positive triplet and ($h_i^\prime, r, t_i^\prime$) is a negative triplet by corrupting the head or the tail. $n$ is the number of negative triplets per positive triplet.

For relation prediction, the loss function is:
$$\text{Loss} = -\log p(h, r, t) - \sum\limits_{i=1}^{n} \frac{1}{n} \log(1 - p(h, r_i^\prime, t))$$
where ($h, r, t$) is the positive triplet and ($h, r_i^\prime, t$) is a negative triplet by corrupting the relation. $n$ is the number of negative triplets per positive triplet.
}

\subsection{Ablation Study}
We conducted multiple experiments to gain deeper insights into the pre-training quality of \ourmethod and to quantify the impact of the proposed adjacency matrix and the iterative updating scheme on its performance. \revision{We compare the performance of (1) \ourmethod, (2) \ourmethod without iterative updates, and (3) \ourmethod without iterative updates and without our relation graph (using the relation graph of ULTRA).} We can not do test \ourmethod w/o proposed relation graph and w/ iterative updates because \revision{iterative updates are impossible with the prior relation graphs since entities are not included as edges connecting relations in prior relation graphs and thus the message passing on prior relation graphs would not use the entity embeddings}. The data presented in Table \ref{ablation} clearly demonstrates that both the proposed relation graph and the iterative update scheme significantly enhance the prediction performance.

\begin{table*}[t]
\centering
\caption{Zero-shot entity prediction results of \ourmethod, \ourmethod without proposed relation graph and \ourmethod without iterative update scheme.}
\label{ablation}
\begin{tabular}{lcc}
\toprule
 & MRR & Hits@10 \\
 \midrule
\ourmethod w/o proposed relation graph and w/o iterative updates & 0.356  & 0.508  \\
\ourmethod w/ proposed relation graph and w/o iterative updates & 0.361 & 0.518 \\
\ourmethod & \textbf{0.390} & \textbf{0.548} \\
\bottomrule
\end{tabular}
\end{table*}

\subsection{Detailed Results}
\Cref{entity_app} shows an an overview of \ourmethod performance improvement compared with ULTRA. \Cref{tab:app_zero_shot_ent} shows detailed zero-shot entity prediction MRR and Hits@10.  \Cref{tab:app_fine_tune_ent} shows detailed fine-tuned entity prediction MRR and Hits@10. \Cref{tab:app_zero_shot_rel} shows detailed zero-shot relation prediction MRR and Hits@1.  \Cref{tab:app_fine_tune_rel} shows detailed fine-tuned relation prediction MRR and Hits@1. From these tables we can conclude \ourmethod outperforms the baseline model in both entity and relation prediction. 

\begin{table*}[t]
\centering
\caption{An overview of \ourmethod performance improvement compared with ULTRA.}
\label{entity_app}
\begin{tabular}{lrrrr}
\toprule
Task & \multicolumn{2}{c}{Entity} & \multicolumn{2}{c}{Relation}  \\
\midrule
Improvement & zero-shot & finetuned & zero-shot & finetuned\\
\midrule
-2\% and below & 4 & 3 & 8 & 12 \\
-2\% to 0\%& 9 & 19 & 2 & 3 \\
0\% to 2\% & 15 & 24 & 6 & 7 \\
2\% and above & 26 & 11 & 38 & 35 \\ \midrule
Total & 54 & 57 & 54 & 57 \\
\bottomrule
\end{tabular}
\end{table*}

\begin{table*}[t]
\centering
\caption{Zero-shot entity prediction MRR and Hits@10 over 57 KGs from distinct domains.}
\label{tab:app_zero_shot_ent}
\begin{tabular}{lccccc}
\toprule
Dataset & ULTRA MRR & \ourmethod MRR & ULTRA Hits@10 & \ourmethod Hits@10 \\
\midrule
CoDEx Small & \textbf{0.472} & \textbf{0.472} & 0.667 & \textbf{0.670} \\
CoDEx Large & \textbf{0.338} & 0.335 & \textbf{0.469} & \textbf{0.469} \\
NELL-995 & 0.406 & \textbf{0.472} & 0.543 & \textbf{0.629} \\
YAGO 310 & \textbf{0.451} & 0.409 & 0.615 & \textbf{0.627} \\
WDsinger & 0.382 & \textbf{0.511} & 0.498 & \textbf{0.609} \\
NELL23k & 0.239 & \textbf{0.290} & 0.408 & \textbf{0.497} \\
FB15k237\_10 & \textbf{0.248} & 0.246 & \textbf{0.398} & 0.393 \\
FB15k237\_20 & \textbf{0.272} & 0.269 & \textbf{0.436} & 0.430 \\
FB15k237\_50 & \textbf{0.324} & 0.321 & \textbf{0.526} & 0.521 \\
DBpedia100k & 0.398 & \textbf{0.426} & 0.576 & \textbf{0.603} \\
AristoV4 & \textbf{0.182} & 0.181 & 0.282 & \textbf{0.286} \\
ConceptNet100k & 0.082 & \textbf{0.193} & 0.162 & \textbf{0.345} \\
Hetionet & 0.257 & \textbf{0.279} & 0.379 & \textbf{0.420} \\
FB-100 & \textbf{0.449} & 0.436 & \textbf{0.642} & 0.635 \\
FB-75 & \textbf{0.403} & 0.401 & 0.604 & \textbf{0.611} \\
FB-50 & \textbf{0.338} & 0.334 & 0.543 & \textbf{0.547} \\
FB-25 & 0.388 & \textbf{0.393} & 0.640 & \textbf{0.650} \\
WK-100 & 0.164 & \textbf{0.188} & 0.286 & \textbf{0.299} \\
WK-75 & 0.365 & \textbf{0.368} & \textbf{0.537} & 0.513 \\
WK-50 & \textbf{0.166} & \textbf{0.166} & \textbf{0.324} & 0.313 \\
WK-25 & \textbf{0.316} & 0.305 & \textbf{0.532} & 0.496 \\
NL-100 & 0.471 & \textbf{0.486} & 0.651 & \textbf{0.676} \\
NL-75 & \textbf{0.368} & 0.351 & \textbf{0.547} & 0.525 \\
NL-50 & \textbf{0.407} & 0.404 & \textbf{0.570} & 0.548 \\
NL-25 & \textbf{0.395} & 0.377 & 0.569 & \textbf{0.589} \\
NL-0 & 0.342 & \textbf{0.385} & 0.523 & \textbf{0.549} \\
HM 1k & 0.059 & \textbf{0.072} & 0.092 & \textbf{0.128} \\
HM 3k & 0.037 & \textbf{0.069} & 0.077 & \textbf{0.119} \\
HM 5k & 0.034 & \textbf{0.062} & 0.071 & \textbf{0.110} \\
HM Indigo & \textbf{0.440} & 0.436 & \textbf{0.648} & 0.645 \\
MT1 tax & 0.224 & \textbf{0.358} & 0.305 & \textbf{0.452} \\
MT1 health & 0.298 & \textbf{0.376} & 0.374 & \textbf{0.457} \\
MT2 org & \textbf{0.095} & 0.091 & \textbf{0.159} & 0.156 \\
MT2 sci & 0.258 & \textbf{0.323} & 0.354 & \textbf{0.465} \\
MT3 art & 0.259 & \textbf{0.284} & 0.402 & \textbf{0.441} \\
MT3 infra & 0.619 & \textbf{0.655} & 0.755 & \textbf{0.797} \\
MT4 sci & 0.274 & \textbf{0.290} & 0.449 & \textbf{0.460} \\
MT4 health & 0.624 & \textbf{0.677} & 0.737 & \textbf{0.775} \\
Metafam & 0.238 & \textbf{0.341} & 0.644 & \textbf{0.815} \\
FBNELL & \textbf{0.485} & 0.473 & 0.652 & \textbf{0.660} \\
WN-v1 & 0.648 & \textbf{0.699} & 0.768 & \textbf{0.791} \\
WN-v2 & 0.663 & \textbf{0.678} & 0.765 & \textbf{0.781} \\
WN-v3 & 0.376 & \textbf{0.418} & 0.476 & \textbf{0.541} \\
WN-v4 & 0.611 & \textbf{0.648} & 0.705 & \textbf{0.723} \\
FB-v1 & 0.498 & \textbf{0.515} & 0.656 & \textbf{0.682} \\
FB-v2 & 0.512 & \textbf{0.525} & 0.700 & \textbf{0.730} \\
FB-v3 & 0.491 & \textbf{0.501} & 0.654 & \textbf{0.669} \\
FB-v4 & 0.486 & \textbf{0.493} & 0.677 & \textbf{0.687} \\
NL-v1 & 0.785 & \textbf{0.806} & \textbf{0.913} & 0.898 \\
NL-v2 & 0.526 & \textbf{0.569} & 0.707 & \textbf{0.768} \\
NL-v3 & 0.515 & \textbf{0.558} & 0.702 & \textbf{0.743} \\
NL-v4 & 0.479 & \textbf{0.538} & 0.712 & \textbf{0.765} \\
ILPC Small & 0.302 & \textbf{0.303} & 0.443 & \textbf{0.455} \\
ILPC Large & 0.290 & \textbf{0.307} & 0.424 & \textbf{0.428} \\
\bottomrule
\end{tabular}
\end{table*}

\begin{table*}[t]
\centering
\caption{Finetuned entity prediction MRR and Hits@10 over 57 KGs from distinct domains.}
\label{tab:app_fine_tune_ent}
\begin{tabular}{lccccc}
\toprule
Dataset & ULTRA MRR & \ourmethod MRR & ULTRA Hits@10 & \ourmethod Hits@10 \\
\midrule
WN18RR &0.480 \tiny{± 0.000} &\textbf{0.514} \tiny{± 0.003} &\textbf{0.614} \tiny{± 0.000} &0.611 \tiny{± 0.005} \\
FB15K237 &\textbf{0.368} \tiny{± 0.000} &0.366 \tiny{± 0.002} &\textbf{0.564} \tiny{± 0.000} &0.559 \tiny{± 0.002} \\
CoDEx Medium &\textbf{0.372} \tiny{± 0.000} &0.365 \tiny{± 0.001} &\textbf{0.525} \tiny{± 0.000} &0.521 \tiny{± 0.001} \\
CoDEx Small &\textbf{0.490} \tiny{± 0.003} &0.484 \tiny{± 0.001} &\textbf{0.686} \tiny{± 0.003} &0.676 \tiny{± 0.003} \\
CoDEx Large &0.343 \tiny{± 0.002} &\textbf{0.348} \tiny{± 0.002} &0.478 \tiny{± 0.002} &\textbf{0.481} \tiny{± 0.002} \\
NELL-995 &\textbf{0.509} \tiny{± 0.013} &0.506 \tiny{± 0.030} &\textbf{0.660} \tiny{± 0.006} &0.648 \tiny{± 0.016} \\
YAGO 310 &\textbf{0.557} \tiny{± 0.009} &0.541 \tiny{± 0.050} &\textbf{0.710} \tiny{± 0.003} &0.702 \tiny{± 0.021} \\
WDsinger &0.417 \tiny{± 0.002} &\textbf{0.502} \tiny{± 0.001} &0.526 \tiny{± 0.002} &\textbf{0.620} \tiny{± 0.001} \\
NELL23k &0.268 \tiny{± 0.001} &\textbf{0.306} \tiny{± 0.010} &0.450 \tiny{± 0.001} &\textbf{0.536} \tiny{± 0.007} \\
FB15k237\_10 &\textbf{0.254} \tiny{± 0.001} &0.253 \tiny{± 0.001} &\textbf{0.411} \tiny{± 0.001} &0.408 \tiny{± 0.002} \\
FB15k237\_20 &\textbf{0.274} \tiny{± 0.001} &0.273 \tiny{± 0.001} &\textbf{0.445} \tiny{± 0.002} &0.441 \tiny{± 0.001} \\
FB15k237\_50 &\textbf{0.325} \tiny{± 0.002} &0.322 \tiny{± 0.001} &\textbf{0.528} \tiny{± 0.002} &0.522 \tiny{± 0.002} \\
DBpedia100k &0.436 \tiny{± 0.008} &\textbf{0.457} \tiny{± 0.026} &0.603 \tiny{± 0.006} &\textbf{0.619} \tiny{± 0.012} \\
AristoV4 &0.343 \tiny{± 0.006} &\textbf{0.345} \tiny{± 0.009} &0.496 \tiny{± 0.004} & \textbf{0.499} \tiny{± 0.010} \\
ConceptNet100k &0.310 \tiny{± 0.004} &\textbf{0.340} \tiny{± 0.008} &0.529 \tiny{± 0.007} &\textbf{0.564} \tiny{± 0.001} \\
Hetionet &\textbf{0.399} \tiny{± 0.005} &0.394 \tiny{± 0.004} &\textbf{0.538} \tiny{± 0.004} &0.534 \tiny{± 0.005} \\
WN-v1 &0.685 \tiny{± 0.003} &\textbf{0.705} \tiny{± 0.007} &0.793 \tiny{± 0.003} &\textbf{0.798} \tiny{± 0.005} \\
WN-v2 &0.679 \tiny{± 0.002} &\textbf{0.682} \tiny{± 0.004} &0.779 \tiny{± 0.003} &\textbf{0.780} \tiny{± 0.002} \\
WN-v3 &0.411 \tiny{± 0.008} &\textbf{0.425} \tiny{± 0.010} &\textbf{0.546} \tiny{± 0.006} &0.543 \tiny{± 0.006} \\
WN-v4 &0.614 \tiny{± 0.003} &\textbf{0.650} \tiny{± 0.002} &0.720 \tiny{± 0.001} &\textbf{0.722} \tiny{± 0.002} \\
FB-v1 &0.509 \tiny{± 0.002} &\textbf{0.515} \tiny{± 0.000} &0.670 \tiny{± 0.004} &\textbf{0.682} \tiny{± 0.000} \\
FB-v2 &0.524 \tiny{± 0.003} &\textbf{0.525} \tiny{± 0.000} &0.710 \tiny{± 0.004} &\textbf{0.730} \tiny{± 0.000} \\
FB-v3 &\textbf{0.504} \tiny{± 0.001} &0.501 \tiny{± 0.000} &0.663 \tiny{± 0.003} &\textbf{0.669} \tiny{± 0.000} \\
FB-v4 &\textbf{0.496} \tiny{± 0.001} &0.493 \tiny{± 0.000} &0.684 \tiny{± 0.001} &\textbf{0.687}± \tiny{0.000} \\
NL-v1 &0.757 \tiny{± 0.021} &\textbf{0.804} \tiny{± 0.007} &0.878 \tiny{± 0.035} &\textbf{0.899} \tiny{± 0.001} \\
NL-v2 &\textbf{0.575} \tiny{± 0.004} &0.571 \tiny{± 0.003} &0.761 \tiny{± 0.007} &\textbf{0.764} \tiny{± 0.006} \\
NL-v3 &0.563 \tiny{± 0.004} &\textbf{0.571} \tiny{± 0.007} &0.755 \tiny{± 0.006} &\textbf{0.759} \tiny{± 0.006} \\
NL-v4 &0.469 \tiny{± 0.020} &\textbf{0.551} \tiny{± 0.001} &0.733 \tiny{± 0.011} &\textbf{0.772} \tiny{± 0.004} \\
ILPC Small &\textbf{0.303} \tiny{± 0.001} &\textbf{0.303} \tiny{± 0.001} &0.453 \tiny{± 0.002} &\textbf{0.455} \tiny{± 0.001} \\
ILPC Large &0.308 \tiny{± 0.002} &\textbf{0.310} \tiny{± 0.002} &\textbf{0.431} \tiny{± 0.001} &\textbf{0.431} \tiny{± 0.003} \\
HM 1k &0.042 \tiny{± 0.002} &\textbf{0.072} \tiny{± 0.000} &0.100 \tiny{ ± 0.007} &\textbf{0.128} \tiny{± 0.000} \\
HM 3k &0.030 \tiny{± 0.002} &\textbf{0.069} \tiny{± 0.000} &0.090 \tiny{ ± 0.003} &\textbf{0.119} \tiny{± 0.000} \\
HM 5k &0.025 \tiny{± 0.001} &\textbf{0.074} \tiny{± 0.021} &0.068 \tiny{ ± 0.003} &\textbf{0.118} \tiny{± 0.013} \\
HM Indigo &0.432 \tiny{± 0.001} &\textbf{0.436} \tiny{± 0.000} &0.639 \tiny{ ± 0.002} &\textbf{0.645} \tiny{± 0.000} \\
FB-100 &\textbf{0.444} \tiny{± 0.003} &0.436 \tiny{± 0.001} &\textbf{0.643} \tiny{± 0.004} &0.633 \tiny{± 0.003} \\
FB-75 &0.400 \tiny{± 0.003} &\textbf{0.401} \tiny{± 0.000} &0.598 \tiny{± 0.004} &\textbf{0.611} \tiny{± 0.000} \\
FB-50 &\textbf{0.334} \tiny{± 0.002} &\textbf{0.334} \tiny{± 0.000} &0.538 \tiny{± 0.004} &\textbf{0.547} \tiny{± 0.000} \\
FB-25 &0.383 \tiny{± 0.001} &\textbf{0.393} \tiny{± 0.000} &0.635 \tiny{± 0.002} &\textbf{0.650} \tiny{± 0.000} \\
WK-100 &0.168 \tiny{± 0.005} &\textbf{0.188} \tiny{± 0.000} &0.286 \tiny{± 0.003} &\textbf{0.299} \tiny{± 0.000} \\
WK-75 &\textbf{0.380} \tiny{± 0.001} &0.368 \tiny{± 0.000} &\textbf{0.530} \tiny{± 0.009} &0.513 \tiny{± 0.000} \\
WK-50 &0.140 \tiny{± 0.010} &\textbf{0.166} \tiny{± 0.000} &0.280 \tiny{± 0.012} &\textbf{0.313} \tiny{± 0.000} \\
WK-25 &\textbf{0.321} \tiny{± 0.003} &0.300 \tiny{± 0.009} &\textbf{0.535} \tiny{± 0.007} &0.493 \tiny{± 0.006} \\
NL-100 &0.458 \tiny{± 0.012} &\textbf{0.482} \tiny{± 0.002} &0.684 \tiny{± 0.011} &\textbf{0.691} \tiny{± 0.001} \\
NL-75 &\textbf{0.374} \tiny{± 0.007} &0.351 \tiny{± 0.000} &\textbf{0.570} \tiny{± 0.005} &0.525 \tiny{± 0.000} \\
NL-50 &\textbf{0.418} \tiny{± 0.005} &0.405 \tiny{± 0.002} &\textbf{0.595} \tiny{± 0.005} &0.555 \tiny{± 0.012} \\
NL-25 &\textbf{0.407} \tiny{± 0.009} &0.377 \tiny{± 0.000} &\textbf{0.596} \tiny{± 0.012} &0.589 \tiny{± 0.000} \\
NL-0 &0.329 \tiny{± 0.010} &\textbf{0.385} \tiny{± 0.000} &\textbf{0.551} \tiny{± 0.012} &0.549 \tiny{± 0.000} \\
MT1 tax &0.330 \tiny{± 0.046} &\textbf{0.397} \tiny{± 0.001} &0.459 \tiny{± 0.056} &\textbf{0.508} \tiny{± 0.002} \\
MT1 health &\textbf{0.380} \tiny{± 0.002} &0.376 \tiny{± 0.000} &\textbf{0.467} \tiny{± 0.006} &0.457 \tiny{± 0.000} \\
MT2 org &\textbf{0.104} \tiny{± 0.001} &0.098 \tiny{± 0.002} &\textbf{0.170} \tiny{± 0.001} &0.162 \tiny{± 0.002} \\
MT2 sci &0.311 \tiny{± 0.010} &\textbf{0.331} \tiny{± 0.012} &0.451 \tiny{± 0.042} &\textbf{0.526} \tiny{± 0.005} \\
MT3 art &\textbf{0.306} \tiny{± 0.003} &0.289 \tiny{± 0.004} &\textbf{0.473} \tiny{± 0.003} &0.441 \tiny{± 0.001} \\
MT3 infra &0.657 \tiny{± 0.008} &\textbf{0.672} \tiny{± 0.003} &0.807 \tiny{± 0.007} &\textbf{0.810} \tiny{± 0.002} \\
MT4 sci &0.303 \tiny{± 0.007} &\textbf{0.305} \tiny{± 0.003} &0.478 \tiny{± 0.003} &\textbf{0.482} \tiny{± 0.001} \\
MT4 health &\textbf{0.704} \tiny{± 0.002} &0.702 \tiny{± 0.002} &\textbf{0.785} \tiny{± 0.002} &\textbf{0.785} \tiny{± 0.002} \\
Metafam &\textbf{0.997} \tiny{± 0.003} &\textbf{0.997} \tiny{± 0.003} &\textbf{1.000} \tiny{± 0.000} &\textbf{1.000} \tiny{± 0.000} \\
FBNELL &\textbf{0.481} \tiny{± 0.004} &0.478 \tiny{± 0.004} &\textbf{0.661} \tiny{± 0.011} &0.655 \tiny{± 0.012} \\
\bottomrule
\end{tabular}
\end{table*}

\begin{table*}[t]
\centering
\caption{Zero-shot relation prediction MRR and Hits@1 over 57 KGs from distinct domains.}
\label{tab:app_zero_shot_rel}
\begin{tabular}{lcccc}
\toprule
Dataset & ULTRA MRR & \ourmethod MRR & ULTRA Hits@1 & \ourmethod Hits@1 \\
\midrule
CoDEx Small & 0.900 & \textbf{0.961} & 0.820 & \textbf{0.935} \\
CoDEx Large & 0.892 & \textbf{0.902} & 0.824 & \textbf{0.837} \\
NELL-995 & \textbf{0.583} & 0.578 & 0.437 & \textbf{0.457} \\
YAGO 310 & 0.646 & \textbf{0.783} & 0.403 & \textbf{0.598} \\
WDsinger & 0.668 & \textbf{0.720} & 0.546 & \textbf{0.621} \\
NELL23k & 0.669 & \textbf{0.756} & 0.548 & \textbf{0.657} \\
FB15k237\_10 & 0.688 & \textbf{0.795} & 0.550 & \textbf{0.711} \\
FB15k237\_20 & 0.695 & \textbf{0.834} & 0.558 & \textbf{0.758} \\
FB15k237\_50 & 0.717 & \textbf{0.876} & 0.591 & \textbf{0.812} \\
DBpedia100k & 0.650 & \textbf{0.717} & 0.509 & \textbf{0.582} \\
AristoV4 & 0.254 & \textbf{0.389} & 0.201 & \textbf{0.265} \\
ConceptNet100k & 0.181 & \textbf{0.650} & 0.083 & \textbf{0.469} \\
Hetionet & 0.634 & \textbf{0.809} & 0.524 & \textbf{0.707} \\
WN-v1 & \textbf{0.836} & 0.792 & \textbf{0.740} & 0.613 \\
WN-v2 & \textbf{0.853} & 0.764 & \textbf{0.790} & 0.572 \\
WN-v3 & 0.707 & \textbf{0.741} & \textbf{0.577} & 0.568 \\
WN-v4 & \textbf{0.860} & 0.764 & \textbf{0.803} & 0.570 \\
FB-v1 & 0.646 & \textbf{0.705} & 0.523 & \textbf{0.599} \\
FB-v2 & 0.695 & \textbf{0.713} & 0.570 & \textbf{0.590} \\
FB-v3 & 0.679 & \textbf{0.742} & 0.553 & \textbf{0.644} \\
FB-v4 & 0.638 & \textbf{0.766} & 0.488 & \textbf{0.665} \\
NL-v1 & 0.636 & \textbf{0.657} & 0.358 & \textbf{0.453} \\
NL-v2 & 0.742 & \textbf{0.780} & 0.652 & \textbf{0.696} \\
NL-v3 & 0.669 & \textbf{0.725} & 0.544 & \textbf{0.612} \\
NL-v4 & 0.606 & \textbf{0.794} & 0.489 & \textbf{0.691} \\
ILPC Small & 0.905 & \textbf{0.919} & 0.843 & \textbf{0.872} \\
ILPC Large & 0.875 & \textbf{0.894} & 0.799 & \textbf{0.829} \\
HM 1k & 0.626 & \textbf{0.663} & \textbf{0.447} & 0.414 \\
HM 3k & 0.592 & \textbf{0.664} & \textbf{0.439} & 0.418 \\
HM 5k & 0.605 & \textbf{0.672} & \textbf{0.452} & 0.428 \\
HM Indigo & 0.681 & \textbf{0.852} & 0.559 & \textbf{0.765} \\
FB-100 & 0.830 & \textbf{0.921} & 0.728 & \textbf{0.880} \\
FB-75 & 0.698 & \textbf{0.822} & 0.555 & \textbf{0.747} \\
FB-50 & 0.696 & \textbf{0.780} & 0.575 & \textbf{0.699} \\
FB-25 & 0.687 & \textbf{0.805} & 0.565 & \textbf{0.724} \\
WK-100 & 0.887 & \textbf{0.907} & 0.812 & \textbf{0.869} \\
WK-75 & 0.911 & \textbf{0.916} & 0.875 & \textbf{0.883} \\
WK-50 & 0.865 & \textbf{0.868} & 0.793 & \textbf{0.818} \\
WK-25 & 0.857 & \textbf{0.881} & 0.760 & \textbf{0.823} \\
NL-100 & 0.743 & \textbf{0.884} & 0.564 & \textbf{0.796} \\
NL-75 & \textbf{0.795} & 0.788 & 0.692 & \textbf{0.699} \\
NL-50 & 0.680 & \textbf{0.755} & 0.569 & \textbf{0.636} \\
NL-25 & 0.688 & \textbf{0.742} & 0.562 & \textbf{0.614} \\
NL-0 & 0.632 & \textbf{0.658} & 0.502 & \textbf{0.519} \\
MT1 tax & \textbf{0.985} & 0.975 & \textbf{0.976} & 0.958 \\
MT1 health & 0.721 & \textbf{0.973} & 0.561 & \textbf{0.949} \\
MT2 org & 0.974 & \textbf{0.986} & 0.951 & \textbf{0.973} \\
MT2 sci & \textbf{0.976} & 0.964 & \textbf{0.961} & 0.941 \\
MT3 art & 0.881 & \textbf{0.885} & 0.798 & \textbf{0.825} \\
MT3 infra & \textbf{0.962} & 0.940 & \textbf{0.935} & 0.905 \\
MT4 sci & 0.933 & \textbf{0.966} & 0.891 & \textbf{0.944} \\
MT4 health & 0.826 & \textbf{0.937} & 0.719 & \textbf{0.898} \\
Metafam & 0.124 & \textbf{0.291} & 0.000 & \textbf{0.011} \\
FBNELL & 0.700 & \textbf{0.726} & 0.564 & \textbf{0.605} \\
\bottomrule
\end{tabular}
\end{table*}

\begin{table*}[t]
\centering
\caption{Finetuned relation prediction MRR and Hits@1 over 57 KGs from distinct domains.}
\label{tab:app_fine_tune_rel}
\begin{tabular}{lccccc}
\toprule
Dataset & ULTRA MRR & \ourmethod MRR & ULTRA Hits@1 & \ourmethod Hits@1 \\
\midrule
WN18RR & \textbf{0.914} \tiny{± 0.004} & 0.783 \tiny{± 0.009} & \textbf{0.871} \tiny{± 0.001} & 0.634 \tiny{± 0.007} \\
FB15K237 & 0.795 \tiny{± 0.017} & \textbf{0.924} \tiny{± 0.005} & 0.709 \tiny{± 0.025} & \textbf{0.870} \tiny{± 0.024} \\
CoDEx Medium & 0.919 \tiny{± 0.032} & \textbf{0.931} \tiny{± 0.001} & 0.870 \tiny{± 0.048} & \textbf{0.886} \tiny{± 0.001} \\
CoDEx Small & 0.942 \tiny{± 0.007} & \textbf{0.964} \tiny{± 0.002} & 0.900 \tiny{± 0.014} & \textbf{0.943} \tiny{± 0.002} \\
CoDEx Large & 0.907 \tiny{± 0.000} & \textbf{0.908} \tiny{± 0.003} & \textbf{0.850} \tiny{± 0.000} & 0.845 \tiny{± 0.004} \\
NELL-995 & \textbf{0.630} \tiny{± 0.000} & 0.578 \tiny{± 0.000} & \textbf{0.513} \tiny{± 0.000} & 0.457 \tiny{± 0.000} \\
YAGO 310 & \textbf{0.930} \tiny{± 0.002} & 0.826 \tiny{± 0.000} & \textbf{0.891} \tiny{± 0.004} & 0.666 \tiny{± 0.000} \\
WDsinger & \textbf{0.730} \tiny{± 0.012} & 0.721 \tiny{± 0.004} & 0.603 \tiny{± 0.020} & \textbf{0.627} \tiny{± 0.007} \\
NELL23k & 0.688 \tiny{± 0.008} & \textbf{0.755} \tiny{± 0.004} & 0.571 \tiny{± 0.009} & \textbf{0.658} \tiny{± 0.005} \\
FB15k237\_10 & 0.688 \tiny{± 0.000} & \textbf{0.795} \tiny{± 0.000} & 0.550 \tiny{± 0.000} & \textbf{0.711} \tiny{± 0.000} \\
FB15k237\_20 & 0.695 \tiny{± 0.000} & \textbf{0.846} \tiny{± 0.011} & 0.558 \tiny{± 0.000} & \textbf{0.778} \tiny{± 0.017} \\
FB15k237\_50 & 0.728 \tiny{± 0.013} & \textbf{0.903} \tiny{± 0.003} & 0.618 \tiny{± 0.027} & \textbf{0.858} \tiny{± 0.006} \\
DBpedia100k & 0.650 \tiny{± 0.000} & \textbf{0.780} \tiny{± 0.003} & 0.509 \tiny{± 0.000} & \textbf{0.665} \tiny{± 0.006} \\
AristoV4 & 0.254 \tiny{± 0.000} & \textbf{0.498} \tiny{± 0.002} & 0.201 \tiny{± 0.000} & \textbf{0.381} \tiny{± 0.002} \\
ConceptNet100k & 0.612 \tiny{± 0.000} & \textbf{0.712} \tiny{± 0.005} & 0.488 \tiny{± 0.000} & \textbf{0.551} \tiny{± 0.003} \\
Hetionet & 0.737 \tiny{± 0.031} & \textbf{0.922} \tiny{± 0.002} & 0.646 \tiny{± 0.041} & \textbf{0.862} \tiny{± 0.005} \\
WN-v1 & \textbf{0.844} \tiny{± 0.021} & 0.776 \tiny{± 0.021} & \textbf{0.754} \tiny{± 0.029} & 0.591 \tiny{± 0.034} \\
WN-v2 & \textbf{0.834} \tiny{± 0.008} & 0.765 \tiny{± 0.009} & \textbf{0.766} \tiny{± 0.013} & 0.574 \tiny{± 0.015} \\
WN-v3 & 0.707 \tiny{± 0.000} & \textbf{0.756} \tiny{± 0.044} & 0.577 \tiny{± 0.000} & \textbf{0.594} \tiny{± 0.064} \\
WN-v4 & \textbf{0.861} \tiny{± 0.005} & 0.804 \tiny{± 0.013} & \textbf{0.795} \tiny{± 0.007} & 0.651 \tiny{± 0.026} \\
FB-v1 & 0.650 \tiny{± 0.008} & \textbf{0.705} \tiny{± 0.000} & 0.513 \tiny{± 0.014} & \textbf{0.599} \tiny{± 0.000} \\
FB-v2 & 0.675 \tiny{± 0.035} & \textbf{0.713} \tiny{± 0.000} & 0.547 \tiny{± 0.040} & \textbf{0.590} \tiny{± 0.000} \\
FB-v3 & 0.677 \tiny{± 0.007} & \textbf{0.742} \tiny{± 0.000} & 0.556 \tiny{± 0.006} & \textbf{0.644} \tiny{± 0.000} \\
FB-v4 & 0.690 \tiny{± 0.026} & \textbf{0.766} \tiny{± 0.000} & 0.560 \tiny{± 0.035} & \textbf{0.665} \tiny{± 0.000} \\
NL-v1 & \textbf{0.719} \tiny{± 0.061} & 0.590 \tiny{± 0.036} & \textbf{0.504} \tiny{± 0.113} & 0.341 \tiny{± 0.066} \\
NL-v2 & 0.668 \tiny{± 0.064} & \textbf{0.811} \tiny{± 0.000} & 0.549 \tiny{± 0.090} & \textbf{0.740} \tiny{± 0.000} \\
NL-v3 & 0.646 \tiny{± 0.014} & \textbf{0.757} \tiny{± 0.004} & 0.484 \tiny{± 0.022} & \textbf{0.643} \tiny{± 0.009} \\
NL-v4 & 0.570 \tiny{± 0.030} & \textbf{0.822} \tiny{± 0.011} & 0.412 \tiny{± 0.056} & \textbf{0.735} \tiny{± 0.011} \\
ILPC Small & \textbf{0.922} \tiny{± 0.001} & 0.919 \tiny{± 0.000} & \textbf{0.876} \tiny{± 0.001} & 0.872 \tiny{± 0.000} \\
ILPC Large & 0.875 \tiny{± 0.000} & \textbf{0.894} \tiny{± 0.000} & 0.799 \tiny{± 0.000} & \textbf{0.829} \tiny{± 0.000} \\
HM 1k & 0.626 \tiny{± 0.000} & \textbf{0.663} \tiny{± 0.000} & \textbf{0.447} \tiny{± 0.000} & 0.414 \tiny{± 0.000} \\
HM 3k & 0.592 \tiny{± 0.000} & \textbf{0.664} \tiny{± 0.000} & \textbf{0.439} \tiny{± 0.000} & 0.418 \tiny{± 0.000} \\
HM 5k & 0.605 \tiny{± 0.000} & \textbf{0.672} \tiny{± 0.000} & \textbf{0.452} \tiny{± 0.000} & 0.428 \tiny{± 0.000} \\
HM Indigo & 0.726 \tiny{± 0.005} & \textbf{0.835} \tiny{± 0.002} & 0.614 \tiny{± 0.004} & \textbf{0.746} \tiny{± 0.003} \\
FB-100 & 0.851 \tiny{± 0.006} & \textbf{0.921} \tiny{± 0.000} & 0.769 \tiny{± 0.016} & \textbf{0.880} \tiny{± 0.000} \\
FB-75 & 0.754 \tiny{± 0.020} & \textbf{0.822} \tiny{± 0.000} & 0.638 \tiny{± 0.032} & \textbf{0.747} \tiny{± 0.000} \\
FB-50 & 0.696 \tiny{± 0.000} & \textbf{0.780} \tiny{± 0.000} & 0.575 \tiny{± 0.000} & \textbf{0.699} \tiny{± 0.000} \\
FB-25 & 0.684 \tiny{± 0.021} & \textbf{0.805} \tiny{± 0.000} & 0.563 \tiny{± 0.024} & \textbf{0.724} \tiny{± 0.000} \\
WK-100 & \textbf{0.924} \tiny{± 0.003} & 0.916 \tiny{± 0.001} & 0.879 \tiny{± 0.002} & \textbf{0.885} \tiny{± 0.003} \\
WK-75 & 0.911 \tiny{± 0.000} & \textbf{0.937} \tiny{± 0.003} & 0.875 \tiny{± 0.000} & \textbf{0.910} \tiny{± 0.003} \\
WK-50 & 0.865 \tiny{± 0.000} & \textbf{0.881} \tiny{± 0.007} & 0.793 \tiny{± 0.000} & \textbf{0.840} \tiny{± 0.014} \\
WK-25 & 0.897 \tiny{± 0.002} & \textbf{0.905} \tiny{± 0.007} & 0.834 \tiny{± 0.005} & \textbf{0.860} \tiny{± 0.011} \\
NL-100 & 0.803 \tiny{± 0.008} & \textbf{0.885} \tiny{± 0.005} & 0.678 \tiny{± 0.012} & \textbf{0.793} \tiny{± 0.008} \\
NL-75 & \textbf{0.795} \tiny{± 0.000} & 0.790 \tiny{± 0.000} & 0.678 \tiny{± 0.000} & \textbf{0.671} \tiny{± 0.000} \\
NL-50 & \textbf{0.808} \tiny{± 0.000} & 0.774 \tiny{± 0.000} & 0.704 \tiny{± 0.000} & \textbf{0.683} \tiny{± 0.000} \\
NL-25 & \textbf{0.737} \tiny{± 0.000} & 0.709 \tiny{± 0.000} & 0.622 \tiny{± 0.000} & \textbf{0.606} \tiny{± 0.000} \\
NL-0 & 0.632 \tiny{± 0.000} & \textbf{0.655} \tiny{± 0.006} & 0.502 \tiny{± 0.000} & \textbf{0.518} \tiny{± 0.002} \\
MT1 tax & 0.990 \tiny{± 0.001} & \textbf{0.995} \tiny{± 0.001} & 0.984 \tiny{± 0.001} & \textbf{0.990} \tiny{± 0.001} \\
MT1 health & 0.929 \tiny{± 0.044} & \textbf{0.973} \tiny{± 0.000} & 0.867 \tiny{± 0.087} & \textbf{0.949} \tiny{± 0.000} \\
MT2 org & 0.981 \tiny{± 0.014} & \textbf{0.987} \tiny{± 0.001} & 0.963 \tiny{± 0.027} & \textbf{0.978} \tiny{±0.001} \\
MT2 sci & 0.977 \tiny{± 0.001} & \textbf{0.990} \tiny{± 0.001} & 0.961 \tiny{± 0.001} & \textbf{0.984} \tiny{± 0.002} \\
MT3 art & \textbf{0.907} \tiny{± 0.012} & 0.887 \tiny{± 0.003} & 0.851 \tiny{± 0.024} & \textbf{0.828} \tiny{± 0.005} \\
MT3 infra & 0.966 \tiny{± 0.003} & \textbf{0.970} \tiny{± 0.001} & 0.947 \tiny{± 0.006} & \textbf{0.952} \tiny{± 0.003} \\
MT4 sci & 0.954 \tiny{± 0.002} & \textbf{0.972} \tiny{± 0.001} & 0.929 \tiny{± 0.002} & \textbf{0.952} \tiny{± 0.001} \\
MT4 health & 0.951 \tiny{± 0.006} & \textbf{0.986} \tiny{± 0.001} & 0.919 \tiny{± 0.010} & \textbf{0.979} \tiny{± 0.002} \\
Metafam & \textbf{0.368} \tiny{± 0.029} & 0.265 \tiny{± 0.044} & \textbf{0.112} \tiny{± 0.036} & 0.024 \tiny{± 0.022} \\
FBNELL & 0.720 \tiny{± 0.013} & \textbf{0.766} \tiny{± 0.004} & 0.576 \tiny{± 0.020} & \textbf{0.639} \tiny{± 0.006} \\
\bottomrule
\end{tabular}
\label{table:relation_prediction}
\end{table*}

\end{document}